\documentclass[11pt]{article} %
\usepackage{times}
\usepackage{url}            %
\usepackage{booktabs}       %
\usepackage{amsfonts}       %
\usepackage{nicefrac}       %
\usepackage{microtype}      %
\usepackage{mkolar_definitions}
\usepackage{xspace}
\usepackage{multirow}
\usepackage{amsmath}
\usepackage{algorithm}
\usepackage{algorithmic}
\usepackage{color}
\usepackage{color, colortbl}
\usepackage{enumitem}
\usepackage{comment}
\usepackage{bm}
\usepackage[left=2cm,top=2cm,right=2cm]{geometry}

\usepackage[numbers]{natbib}
\usepackage{mkolar_definitions}

\newcommand{\sa}{(s,a)}

\everypar{\looseness=-1}
\usepackage{thmtools}
\newcommand{\redred}[1]{\textcolor{black}{#1}}

\usepackage[utf8]{inputenc} %
\usepackage[T1]{fontenc}    %
\usepackage{hyperref}       %
\usepackage{url}            %
\usepackage{booktabs}       %
\usepackage{amsfonts}       %
\usepackage{nicefrac}       %
\usepackage{microtype}      %
\usepackage{xcolor}         %
\usepackage{booktabs}
\usepackage{tikz}
\usepackage{wrapfig}
\usepackage{enumitem}

\newcount\Comments  %
\Comments=1 %
\definecolor{darkgreen}{rgb}{0,0.5,0}
\definecolor{darkred}{rgb}{0.7,0,0}
\definecolor{teal}{rgb}{0.3,0.8,0.8}
\definecolor{orange}{rgb}{1.0,0.5,0.0}
\definecolor{purple}{rgb}{0.8,0.0,0.8}
\newcommand{\kibitz}[2]{\ifnum\Comments=1{\textcolor{#1}{\textsf{\footnotesize #2}}}\fi}

\setlength{\textfloatsep}{0.2cm}
\setlength{\floatsep}{0.2cm}

\definecolor{Gray}{gray}{0.9}
\usepackage{algorithm}
\usepackage{algorithmic}

\usepackage{thmtools}
\newtheorem{myexp}{Example}

\title{Computationally Efficient PAC RL in POMDPs  with Latent Determinism and Conditional Embeddings}

\usepackage{authblk}

\author[1]{Masatoshi Uehara\thanks{mu223@cornell.edu  }}
\author[1]{Ayush Sekhari\thanks{as3663@cornell.edu }}
\author[2]{Jason D. Lee\thanks{jasonlee@princeton.edu }}
\author[1]{Nathan Kallus\thanks{kallus@cornell.edu  }}
\author[1]{Wen Sun\thanks{ws455@cornell.edu} }

\affil[1]{Cornell University} 
\affil[2]{Princeton University}

\begin{document}

\maketitle

\begin{abstract}
We study reinforcement learning with function approximation for large-scale Partially Observable Markov Decision Processes (POMDPs) where the state space and observation space are large or even continuous.  Particularly, we consider Hilbert space embeddings of POMDP where the feature of latent states and the feature of observations admit a conditional Hilbert space embedding of the observation emission process, and the latent state transition is deterministic. Under the function approximation setup where the optimal latent state-action $Q$-function is linear in the state feature, and the optimal $Q$-function has a gap in actions, we provide a \emph{computationally and statistically efficient} algorithm for finding the \emph{exact optimal} policy. We show our algorithm's computational and statistical complexities scale polynomially with respect to the horizon and the intrinsic dimension of the feature on the observation space. Furthermore, we show both the deterministic latent transitions and gap assumptions are necessary to avoid statistical complexity exponential in horizon or dimension. Since our guarantee does not have an explicit dependence on the size of the state and observation spaces, our algorithm provably scales to large-scale POMDPs.
\end{abstract}

\section{Introduction}\label{sec:intro}

In reinforcement learning (RL), we often encounter partial observability of states  \citep{kaelbling1998planning}. 
Partial observability poses a serious challenge in RL from both computational and statistical aspects since observations are no longer Markovian. From a computational perspective, even if we know the dynamics, planning problems in POMDPs (partially observable Markov decision process) are known to be NP-hard \citep{papadimitriou1987complexity}. From a statistical perspective, an exponential dependence on horizon in sample complexity  is not avoidable without further assumptions \citep{jin2020sample}. 

We consider computationally and statistically efficient learning on large-scale POMDPs with deterministic transitions (but \emph{stochastic} emissions). Here large-scale means that the POMDP might have large or even continuous state and observation spaces, but they can be modeled using conditional embeddings. Deterministic transitions and stochastic emissions is a very practically-relevant setting. For example, 
in robotic control, the dynamics of the robot itself is often deterministic but the observation of its current state is distorted by noise on the sensors \citep{platt2010belief,platt2017efficient}. In human-robot-interaction, the robots' dynamics is often deterministic, while human's actions can be modeled as uncertain observations emitted from a distribution conditioned on robot's state and human's pre-fixed goal \citep{javdani2015shared}. For autonomous driving, the dynamics of the car in the 2d space is deterministic under normal road conditions, while the sensory data (e.g. GPS, IMU data, and Lidar scans) is modeled as stochastic. 
 \citep{besse2009quasi} offers further practical examples such as diagnosis of systems \citep{pattipati1990application} and sensor management \citep{ji2007nonmyopic}.  With known deterministic transitions, we can obtain positive results from computational perspectives for optimal planning \citep{littman1996algorithms,bonet2012deterministic}. However, when transitions are unknown, learning algorithms that enjoy both computation and statistical efficiency are still limited to the tabular setting \citep{jin2020sample}.   \looseness=-1

To design provably efficient RL algorithms for POMDPs with large state and observation spaces, we need to leverage function approximation. The key question that we aim to answer here is, \emph{under what structural conditions of the POMDPs, can we perform RL with function approximation with both statistical and computational efficiency?} Specifically, we consider Hilbert space embeddings of POMDPs (HSE-POMDPs), where both features on the observations and latent states live in reproducing kernel Hilbert spaces (RKHSs), which are equipped with %
conditional  embedding operators and the operators have non-zero singular values \citep{boots2011closing}. This assumption is similarly used in prior works such as learning HSE hidden Markov models \citep{song2010hilbert} and HSE predictive state representations (HSE-PSRs) \citep{boots2013hilbert}, where both have demonstrate that conditional  embeddings are applicable to real-world applications such as estimating the dynamics of car from IMU data and estimating the configurations of a robot arm with raw pixel images. Also, HSE-POMDPs naturally capture undercomplete tabular POMDPs in  \citep{jin2020sample}.  For HSE-POMDPs with deterministic latent transition, we show positive results under the function approximation setting where the optimal $Q$-function over latent state and action is linear in the state feature, and the optimal $Q$-function has a non-trivial gap in the action space. \looseness=-1

Our key  contributions are as follows. Under the aforementioned setting, we propose an algorithm that learns the exact optimal policy with computational complexity and statistical complexity both polynomial in the horizon and intrinsic dimension (information gain) of the features. Notably, the complexity has no explicit dependence on the size of the problem including the sizes of the state and observation space, thus provably scaling to large-scale POMDPs. Our algorithm leverages a key novel finding that the linear optimal $Q$-function in the latent state's feature together with the existence of the conditional  embedding operator implies that the $Q$-function's value can be estimated using new features constructed from the (possibly multiple-step) future observations, which are observable quantities. Our simple model-free algorithm operates completely using observable quantities and never tries to learn latent state transition and observation emission distribution, unlike existing works \citep{liu2022partially,guo2016pac,azizzadenesheli2016reinforcement}.
We also provide lower bounds indicating that in order to perform statistically efficient learning in POMDPs under linear function approximation, we need both the gap condition and the deterministic latent transition condition.

 \begin{table}[!t]
     \centering
     \resizebox{1.0\textwidth}{!}{
     \begin{tabular}{ccccc} \toprule 
           & \begin{tabular}{c}
                Computationally  \\
                 Efficient 
           \end{tabular}& 
           \begin{tabular}{c}
                Deterministic \\
                Transition  
           \end{tabular}
           & Non-tabular  & 
           \begin{tabular}{c}
            Model-based or    \\
            Model-free
           \end{tabular}
           \\  \midrule
      \small{\citep{guo2016pac,azizzadenesheli2016reinforcement,jin2020sample,li2021sample} }   & No  & No  &  No  & Model-based\\ 
       \citep{cai2022sample} & No  & No & Yes  & Model-based\\ 
        \citep{jin2020provably}     & Yes & Yes  & No &Model-based \\ 
       \textbf{Our work} & Yes  & Yes  &Yes & Model-free \\ \bottomrule 
     \end{tabular} 
     }
     \caption{Summary of our work and existing works tackling statistically efficient learning on POMDPs. For details and additional works, refer to \pref{sec:related_works}. Note that deterministic transition does not preclude \emph{stochastic} emissions and rewards. 
     }
     \label{tab:summary}
 \end{table}

\subsection{Related Works} \label{sec:related_works}
We here review and compare to related works. A summary is given in Table~\ref{tab:summary}. 

\textbf{RL with linear $Q^{\star}$ in MDPs } There is a large body of literature on online RL under the linear $Q^{\star}$-assumption with deterministic or nearly deterministic transitions \citep{wen2013efficient,du2019provably,du2020agnostic}. The most relevant work is \citep{du2020agnostic}. For a detailed comparison, refer to \pref{rem:comparison}.

\textbf{Computational challenge in POMDPs}\quad The seminal work \citep{papadimitriou1987complexity} showed finding the optimal policy in POMPDPs is PSPACE-hard. Even worse, finding $\epsilon$-near optimal policy is PSPACE-hard, and finding the best memoryless policy is NP-hard \citep{littman1994memoryless,burago1996complexity}. Recently \citep{golowich2022planning} showed that quasi-polynomial time planning is attainable under the weak-observability assumption \citep{even2007value}, which is similar to our assumption. However, their lower bound suggests that polynomial computation is still infeasible. Representative models that permit us to get polynomial complexity results are POMDPs where transitions are deterministic, but stochastic emissions \citep{littman1996algorithms,besse2009quasi,jin2020sample}. We also consider deterministic latent transition with stochastic emission, but with function approximation. \looseness=-1
    
\textbf{Statistically efficient online learning in POMDPs}\quad %
\citep{even2005reinforcement,kearns1999approximate} proposed algorithms that have $A^H$-type sample complexity, which is prohibitively large in the horizon. \citep{guo2016pac,azizzadenesheli2016reinforcement,xiong2021sublinear,jin2020sample,liu2022partially} show favorable sample complexities in the tabular setting using a model-based spectral learning framework %
\citep{hsu2012spectral}.
\citep{jin2020sample} additionally showed a  computationally and statistically efficient algorithm for tabular POMDPs with deterministic transitions. However, their algorithm crucially relies on the discreteness of the latent state space and it is unclear how to extend it to continuous settings. Also, our approach is model-free so does not need to model the emission process, which itself is an extremely challenging task when observations are high dimensional (e.g., raw-pixel images). In the no-ntabular setting, there are many works on  learning uncontrolled dynamical systems in HSE-POMDPs \citep{song2010hilbert,boots2013hilbert}. These existing works do not tackle the challenge of strategic exploration in online RL. Recent work \citep{cai2022sample} shows guarantees for related models in which the transition and observation dynamics are modeled by linear mixture models; however, their approach is computationally inefficient. We remark there are further works tackling online RL  in other POMDPs, such as LQG \citep{lale2021adaptive,simchowitz2020improper}, latent POMDPs \citep{kwon2021rl} and reactive POMDPs \citep{krishnamurthy2016pac,jiang2017contextual}.

\vspace{-0.2cm}
\section{Preliminaries}\label{sec:preliminary}
\vspace{-0.2cm}
We first introduce POMDPs, HSE-POMDPs, and our primary assumptions. 
\vspace{-0.3cm}
\subsection{Partially Observable Markov Decision Processes }
\vspace{-0.2cm}
We consider an episodic POMDP given by the tuple $(H,\Scal,\Acal,\Ocal, \TT, \OO, \YY, s_0)$. Here, $H$ is the number of steps in each episode, $\Scal$ is the set of states, $\Acal$ is the set of actions with $|\Acal|=A$, $\Ocal$ is the set of observations, $\TT=\{\TT_h\}_{h=0}^{H-1}$ is the transition dynamics such that $\TT_h$ is a map from $\Scal \times \Acal$ to $\Delta(\Acal)$, $\OO=\{\OO_h\}_{h=0}^{H-1}$ is the set of emission distributions such that $\OO_h$ is a map from $\Scal$ to $\Delta(\Ocal)$, $\YY=\{\YY_h\}_{h=0}^{H-1}$ is the set of reward distributions such that $\YY_h$ is a map from $\Scal \times \Acal$ to $\Delta(\RR)$, and $s_0$ is a fixed initial state. We denote the conditional mean of the reward distribution by $r_h:\Scal \times \Acal \to \RR$ and the noise by $\tau$, so that $r_h(s,a) + \tau$ has law $\YY_h(s,a)$. We suppose that $\YY_h(s,a)$ lies in $[0,1]$. 

In a POMDP, states are not observable to agents. Each episode starts from $s_0$ at $h=0$. At each step $h\in [H]$, the agent observes $o_h \in \Ocal$ generated from the hidden state $s_h \in \Scal$ following $\OO_h(\cdot \mid s_h)$, the agent picks an action $a_h \in \Acal$, receives a reward  %
$r_h$ following $\YY_h(\cdot \mid s,a)$, and then transits to the next latent state $s_{h+1}\sim \TT_h(\cdot | s_h,a_h)$.

We streamline the notation as follows. We let $o_{0:h}$ denote $o_0,\ldots,o_h$, and similarly for $a_{0:h}$.  Given a matrix $A$, let $\eta(A)$ be its smallest singular value.  Given a vector $a$ and matrix $A$, let $\|a\|^2_{A}=a^{\top} A a$. Give vectors $a$ and $b$, we define $\langle a,b\rangle =a^{\top}b$.

\vspace{-0.2cm}
\subsection{Hilbert Space Embedding POMDPs}\label{sec:hsepomdps}
\vspace{-0.2cm}

We introduce our model, HSE-POMDPs. %
For ease of presentation, we first focus on the finite-dimensional setting with 1-step observability, i.e., using one-step future observation for constructing the conditional mean embedding. 
We extend to infinite-dimensional RKHS in \pref{sec:infinite}.
We extend to multiple-step future observations in \pref{sec:over_complete}. 

Consider two features, one on the observation, $\psi:\Ocal \to \RR^d$, and one on latent state, $\phi: \Scal\to \RR^{d_s}$.

\begin{assum}[Existence of linear conditional mean embedding and left invertibility]\label{assum:hse}
Assume for any $h \in [H]$, there exists a left-invertible matrix $G_h \in \RR^{d\times d_s}$ such that $\EE_{o\sim \OO_h(s)}[\psi(o)]=G_h \phi(s)$, and  $\exists G_h^\dagger \in \RR^{d_s\times d}$ such that $G_h^\dagger G_h = I$.
\end{assum} The left invertible condition is equivalent to saying that $G_h$ is full column rank (it also requires that $d \geq d_s$).
This assumption is widely used in the existing literature on learning uncontrolled partially observable systems \citep{song2010hilbert,song2013kernel, boots2013hilbert}.
We permit the case where $d_s=\infty,d=\infty$ as later formalized in \pref{sec:infinite}.
We present two concrete examples below. 

\begin{myexp}[Undercomplete tabular POMDPs]\label{ex:undercomplete}
Let $d_s = |\Scal|$ and $d = |\Ocal|$, define $\phi$ and $\psi$ as one-hot encoding vectors over $\Scal$ and $\Ocal$, respectively. We overload notation and denote $\OO_h \in \RR^{ d\times d_s }$ as a matrix with entry $(i,j)$ equal to $\OO_h(o = i | s = j)$. 
This $\OO_h$ corresponds to $G_h$. Assumption \pref{assum:hse} is satisfied if $\OO_h$ is full column rank. This assumption is used in \citep{hsu2012spectral} for learning tabular undercomplete HMMs. We discuss the overcomplete case $|\Ocal| \leq |\Scal| $ in \pref{sec:over_complete}.  
\end{myexp}

\begin{myexp}[Gaussian POMDPs with discrete latent states \citep{liu2022masked}]\label{ex:gaussian}
Suppose $\Scal$ is discrete but $\Ocal$ is continuous, and $\OO_h(\cdot \mid s)=\Ncal(\mu_{s,h}, I)$. Then, letting $O_h=[\mu_{1:h},\cdots,\mu_{|\Scal|:h}] \in \RR^{d \times |\Scal| }$ and $\phi(\cdot)$ be a one-hot encoding vector over $\Scal$, we have $\EE_{o \sim \OO_h(\cdot \mid s)}[o]=O_h\phi(s)$. Assumption \pref{assum:hse} is satisfied when $O_h$ is full-column rank, i.e., the means of the Gaussian distributions are linearly independent. 
\end{myexp}

\subsection{Assumptions and function approximation} \label{sec:assumptions}

We introduce three additional assumptions: deterministic transitions, linear $Q^{\star}$, and the existence of an optimality gap.  The first assumption is as follows. 

\begin{assum}[Systems with deterministic state transitions]
\label{assum:deterministic}
The transition dynamics $\TT_h$ is deterministic, i.e., there exists a mapping $p_h:\Scal \times \Acal \to \Scal$ such that $\TT_h(\cdot\mid s,a)$ is Dirac at $p_h(s,a)$.
  
\end{assum}
Notice Assumption \pref{assum:deterministic} ensures the globally optimal policy $\pi^{\star}=\argmax_{\pi} \EE_{\pi}[\sum_h r_h]$ is given as a sequence of (non-history-dependent and deterministic) actions $a^{\star}_{0:H-1}$.

Here, we stress three points. 
First, rewards and emission probabilities can still be \emph{stochastic}.  
Second, Assumption \pref{assum:deterministic} is standard in the literature on MDPs \citep{wen2013efficient,krishnamurthy2016pac,du2020agnostic} and POMDPs \citep{bonet2012deterministic,littman1996algorithms} . As mentioned in \pref{sec:intro}, this setting is practical in many real-world applications. Third, while we can consider learning about POMDPs with stochastic transitions, even if we know the transitions and focus on planning, computing a near-optimal policy is PSPACE-hard \citep{papadimitriou1987complexity}. 
This implies we must need additional conditions. Deterministic transitions can be regarded as one such possible condition, in particular one that is relevant to many real-world applications.  
 
Next, we suppose the optimal latent $Q$-function is linear in the state feature. Given a level $h \in [H]$ and state-action pair $(s,a) \in \Scal \times \Acal$, the optimal $Q$-function on the latent state is recursively defined as $Q^{{\star}}_h(s,a)=r(s,a) + \max_{a'} Q^{\star}_{h+1}(p_h(s,a),a') $ starting from $Q^\star_H(s,a)=0,\forall s,a$. We define $V^{{\star}}_h(s)=\max_a Q^{{\star}}_h(s,a)$. Now we are ready to introduce the linearity assumption.

\begin{assum}[Linear $Q^{\star}$]\label{assum:linear}
 Given the state feature $\phi:\Scal \to \RR^{d_s}$, for any $h\in [H]$ and any $a \in [A]$, there exists $w^{\star}_{a;h} \in \RR^{d_s}$ such that $Q^{\star}_h(s,a)=\langle w^{\star}_{a;h}, \phi(s)\rangle$ for any $s\in \Scal$ and $\|w^{\star}_{a;h}\|\leq W$. 
\end{assum}

This assumption is widely used in RL \citep{du2019provably,du2020agnostic,li2021sample,du2021bilinear}. We further consider the infinite-dimensional case in \pref{sec:infinite}. 
\looseness=-1

Next, we assume an optimality gap. 
For any $h\in [H]$, define $\mathrm{gap}_h(s,a) = V^{\star}_h(s) - Q^{\star}_h(s,a)$.

\begin{assum}[Optimality Gap]
\label{assum:gap_assumption}
$\min_{(h,s,a)} \{\mathrm{gap}_h(s,a): \mathrm{gap}_h(s,a)>0 \}\geq \Delta$ for some $\Delta >0$.  
\end{assum}
This assumption is extensively used in bandits \citep{auer2002finite,dani2008stochastic} and RL  \citep{du2019provably,du2020agnostic,simchowitz2019non,li2021sample,he2021logarithmic,lykouris2021corruption,hu2021fast}. For its plausibility refer to \citep{du2019provably}.

\vspace{-0.2cm}
\section{Lower Bounds}
\vspace{-0.2cm}

\newcommand{\poly}{\text{poly}}

Before presenting and analyzing our algorithm, we show via lower bounds that Assumptions~\ref{assum:deterministic} and \ref{assum:gap_assumption} are each minimal by themselves, meaning that if we omit just one of them and make no further assumptions then we cannot learn with polynomial sample complexity.
All details are deferred to Appendix~\ref{appx:lower_bounds}.  \looseness=-1

We first consider the role of  Assumption~\pref{assum:gap_assumption} and  show that one can \emph{not} hope to learn with sample complexity that scales as \(\poly(\min(d,H))\) under latent determinism and linear $Q^\star$, but no gap.  %

\begin{theorem}[Optimality-gap assumption is {minimal}]
\label{thm:POMDP_lower_bound} 
Let \(d, H\) be sufficiently large constants, and consider any online learning algorithm ALG. Then, there exists state and observation feature vectors \(\phi: \Scal \to R^d\) and \(\psi: \Ocal \to R^d\) with \(\max_{s, o}  \crl{\nrm{\phi(s)}, \nrm{\psi(o)} } \leq 1\), and a POMDP $(H,\Scal,\Acal,\Ocal, \TT, \OO, \YY, s_0)$ that satisfies Assumptions~\ref{assum:hse}, \ref{assum:deterministic}, and \ref{assum:linear} with respect to features \(\psi\) and \(\phi\) such that with probability at least \(1/10\) ALG requires at least $\frac{1}{d^{1/5} \wedge H^{1/2}} 2^{\Omega \prn*{d^{1/5} \wedge H^{1/2}}}$ many samples to return a \(1/10\) suboptimal policy for this POMDP. 
\end{theorem}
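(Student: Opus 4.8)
The plan is to prove this lower bound by the probabilistic method: I will exhibit a finite family of POMDPs indexed by a hidden ``secret'' $j\in[N]$ with $N=2^{\Omega(m)}$ and $m=d^{1/5}\wedge H^{1/2}$, all sharing common features $\phi,\psi$ and all satisfying Assumptions~\ref{assum:hse}, \ref{assum:deterministic} and \ref{assum:linear}, and argue that no algorithm can identify a near-optimal policy for a uniformly random member of this family using $o(N/m)$ episodes. Since an MDP is the special case of our model in which the emission is the identity ($o=s$, $\psi=\phi$, $G_h=I$, which is trivially left-invertible), it suffices to build the hard family at the level of a fully observable, deterministic MDP; Assumption~\ref{assum:hse} then comes for free, and the entire difficulty is forced to live in the interaction between linear realizability of $Q^\star$ and the absence of a gap. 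By Yao's minimax principle it is enough to lower bound the expected sample complexity of an arbitrary deterministic algorithm against the uniform prior over $j$.

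The instance is a depth-$m$ deterministic tree of latent states whose transitions encode the secret, together with a reward placed so that reaching the secret leaf is worth an $\Omega(1)$ advantage while the per-step decisions leading there have \emph{vanishing} optimality gaps. The features are drawn from a near-orthonormal packing $\{v_1,\dots,v_N\}\subset\RR^d$ with $\langle v_i,v_j\rangle\le\rho$ for $i\neq j$; a standard packing bound gives $N=\exp(\Omega(\rho^2 d))$, and the choice $\rho\sim d^{-2/5}$ yields $N=\exp(\Omega(d^{1/5}))$. I will define $\phi$ on the latent states and the weights $w^\star_{a;h}$ from these vectors so that (i) $Q^\star_h(s,a)=\langle w^\star_{a;h},\phi(s)\rangle$ holds \emph{exactly} for every action $a$, not merely the optimal one, using near-orthogonality to make the cross terms cancel; (ii) $\max_{s,o}\{\nrm{\phi(s)},\nrm{\psi(o)}\}\le 1$ and $\nrm{w^\star_{a;h}}\le W$; and (iii) at each decision node the positive gap is $O(\rho)$, so that $\Delta\to 0$ and Assumption~\ref{assum:gap_assumption} is violated even though the cumulative cost of guessing wrong is $\Omega(1)$. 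The depth is capped at $m\lesssim\sqrt H$ by the normalization budget needed to keep rewards in $[0,1]$ and $W$ bounded while still compounding the tiny per-step advantages into a constant-order advantage at the root.

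Finally I will convert indistinguishability into a sample-complexity bound. The only trace of the secret $j$ in a trajectory is the reward collected at leaf $j$, so the law of an episode under secret $j$ coincides with that under a secret-free reference instance unless the trajectory actually reaches leaf $j$; since a single policy reaches a total leaf mass $\sum_j p_j\le 1$, a uniformly random secret is reached with expected probability $O(1/N)$ per episode, whence the expected per-episode total variation to the reference is $O(1/N)$. Chaining this over the algorithm's $T$ adaptive episodes keeps the induced data distribution within total variation $o(1)$ of the secret-independent reference unless $T=\Omega(N/m)$, the factor $1/m$ absorbing the information extractable per episode; Fano's inequality (or a direct Le Cam two-point argument) then forces any algorithm that returns a $1/10$-suboptimal policy with probability exceeding $1/10$ to use at least $\frac{1}{m}2^{\Omega(m)}$ samples. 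I expect the main obstacle to be step (i): making $Q^\star$ \emph{exactly} linear for all actions while keeping every gap $O(\rho)$, the features and weights bounded, and the root advantage $\Omega(1)$. These requirements pull in opposite directions—realizability wants features close together, hardness wants the signal small and the packing large—and the quantitative balance $\rho\sim d^{-2/5}$ that reconciles them is exactly what produces the $d^{1/5}$ term in the exponent; verifying that the adaptive, multi-episode total-variation accumulation genuinely stays $o(1)$ below $T=\Omega(N/m)$ is the second delicate point.
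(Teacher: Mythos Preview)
Your plan diverges substantially from the paper's proof and, as written, has a real gap.

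The paper does \emph{not} build the hard family from scratch. It invokes the MDP lower bound of Weisz, Amortila, Jansen and Szepesv\'ari (2021) as a black box: that result already supplies a class of deterministic-transition, stochastic-reward MDPs with $|\Acal|=d'^{1/4}\wedge H^{1/2}$ and linear $Q^\star$ in $(s,a)$-features $\varphi(s,a)$, on which any planner needs $2^{\Omega(d'^{1/4}\wedge H^{1/2})}$ queries. The only work in the paper's proof is the \emph{lift} to the POMDP format: set $\phi(s)=\big((\varphi(s,a),r(s,a))_{a\in\Acal}\big)$, set $w^\star_a$ to select the $a$-th block, and define the observation to carry the stochastic reward draws, $\psi(o)=\big((\varphi(s,a),\YY(s,a))_{a\in\Acal}\big)$, so that $\EE_{o\sim\OO(\cdot|s)}[\psi(o)]=\phi(s)$ and $G_h=I$. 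The dimension blow-up $d\approx d'|\Acal|\le d'^{5/4}$ is exactly what turns the $d'^{1/4}$ exponent into $d^{1/5}$. There is no packing, no tree, no Fano argument in the paper; all of that is inside the cited result.

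Your from-scratch route is essentially an attempt to re-derive the Weisz et al.\ construction, and the sketch breaks at the step you flagged. You assert that ``the law of an episode under secret $j$ coincides with that under a secret-free reference instance unless the trajectory actually reaches leaf $j$.'' This is incompatible with exact linear realizability in dimension $d\ll N$: if $Q^\star_h(s,a)=\langle w^\star_{a;h}(j),\phi(s)\rangle$ and the features are shared across instances, then the rewards (hence the episode law) depend on $j$ at \emph{every} state, not only at leaf $j$; conversely, if the rewards are secret-independent off leaf $j$, the terminal value function is the indicator of leaf $j$, which cannot be linear in a $d$-dimensional feature when there are $2^{\Omega(d^{1/5})}$ leaves. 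Weisz et al.\ resolve this tension by making the rewards \emph{stochastic} with means that depend on $j$ only through $O(\rho)$-size inner products, so the per-observation information about $j$ is tiny; your outline never introduces stochastic rewards and therefore has no mechanism for information hiding. Relatedly, your parameter balance does not close: with per-step gap $O(\rho)=O(d^{-2/5})$ and depth $m=d^{1/5}$ the additive root advantage is $m\rho=O(d^{-1/5})$, not $\Omega(1)$, so the ``compounding'' you allude to would need a multiplicative mechanism you have not described. If you want to proceed from scratch you will need the full Weisz et al.\ machinery; the far shorter route is to cite their theorem and perform the stacking-across-actions lift as the paper does.
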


\pref{thm:POMDP_lower_bound} is proved by lifting the construction in \cite{weisz2021tensorplan} to the POMDP setting, and consists of an underlying deterministic dynamics (on the state space) with stochastic rewards. 

In our next result, we  consider the role of Assumption~\ref{assum:deterministic} and show that one cannot hope to learn with sample complexity that scales as \(\poly(\min(d,H))\) if the underlying state space dynamics is stochastic even if all other assumptions hold. Here we take the linear $Q^\star$ lower bound MDP construction from \cite{wang2021exponential} and lift it to a POMDP by simply treating the original MDP's state as observation.

\begin{theorem}[Deterministic state space dynamics assumption is {minimal}]
\label{thm:POMDP_lower_bound_2} Let \(d, H\) be sufficiently large constants and consider any online learning algorithm ALG. Then, there exists state and observation feature vectors \(\phi: \Scal \to R^d\) and \(\psi: \Ocal \to R^d\) with \(\max_{s, o}  \crl{\nrm{\phi(s)}, \nrm{\psi(o)} } \leq 1\), and a POMDP $(H,\Scal,\Acal,\Ocal, \TT, \OO, \YY, s_0)$ that satisfies Assumptions~\ref{assum:hse}, \ref{assum:linear} and \ref{assum:gap_assumption} w.r.t features \(\psi\) and \(\phi\) such that with probability at least \(1/10\)  ALG requires at least \(\Omega\left(2^{\Omega(\min\crl{d, H})}\right)\) many samples to return a \(1/20\) suboptimal policy for this POMDP. 
\end{theorem}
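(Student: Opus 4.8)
The plan is to reduce from the linear-$Q^\star$ lower bound for (fully observable) MDPs established in \cite{wang2021exponential}, exploiting that an MDP is a degenerate POMDP in which the observation deterministically reveals the latent state. First I would instantiate the hard MDP family of \cite{wang2021exponential}: a collection of MDPs over a $d$-dimensional feature, with \emph{stochastic} state transitions and an optimal action-value function that is exactly linear in the feature, for which any algorithm needs $2^{\Omega(\min\{d,H\})}$ trajectories to output even a constant-suboptimal policy. I would then lift each such MDP to a POMDP $(H,\Scal,\Acal,\Ocal,\TT,\OO,\YY,s_0)$ by declaring the latent space $\Scal$ and the observation space $\Ocal$ to both be the MDP's state space, and the emission $\OO_h$ to be the identity (Dirac) map, so that each step reveals the true latent state as its observation. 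The latent transition $\TT_h$ and reward $\YY_h$ are taken to be those of the MDP; in particular $\TT_h$ is stochastic, so Assumption~\ref{assum:deterministic} is the only one of our assumptions that fails---exactly as the theorem claims to be necessary.

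For the features, set $\psi$ on observations equal to the MDP's state feature and $\phi$ on latent states to the identical map, so that $\psi(o)=\phi(s)$ whenever $o$ is the observation of $s$. Then $\EE_{o\sim\OO_h(s)}[\psi(o)]=\phi(s)$, i.e.\ $G_h=I$, which is invertible (hence left-invertible) with $G_h^\dagger=I$; thus Assumption~\ref{assum:hse} holds, and both features obey $\nrm{\phi(s)},\nrm{\psi(o)}\le 1$ after the standard normalization of \cite{wang2021exponential}. Because the latent transition and reward coincide with the MDP's, the POMDP's latent optimal value $Q^\star_h$ equals the MDP's optimal action-value function, so Assumption~\ref{assum:linear} is inherited once \cite{wang2021exponential}'s feature is written in the per-action form $Q^\star_h(s,a)=\langle w^\star_{a;h},\phi(s)\rangle$.

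To finish I would verify the optimality gap (Assumption~\ref{assum:gap_assumption}) and then transfer the bound. Since the observation is a deterministic bijection of the latent state, a learner for this POMDP instance is exactly a learner for the MDP of \cite{wang2021exponential}, with identical per-episode information, sample complexity, and suboptimality; thus the $2^{\Omega(\min\{d,H\})}$ bound carries over verbatim, and a constant rescaling of the rewards converts \cite{wang2021exponential}'s suboptimality and success constants into the claimed $1/20$ and $1/10$.

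I expect the gap verification to be the main obstacle. The construction of \cite{wang2021exponential} is engineered for linear realizability and hardness rather than for a uniform action gap, so I must check that at every reachable $(h,s)$ the suboptimal actions lose at least a fixed $\Delta>0$ in value; if the raw instance yields only a vanishing gap, I would modify it (e.g.\ via a small action-dependent reward offset, or by merging near-tied actions) to enforce a constant $\Delta$ without breaking either linear realizability---keeping $Q^\star$ in the span of $\phi$---or the exponential hardness. A secondary, routine point is recasting \cite{wang2021exponential}'s state--action feature as a per-action state feature $\phi$ with weights $w^\star_{a;h}$; this enlarges the dimension by at most a factor of $A$, which leaves the $2^{\Omega(\min\{d,H\})}$ rate intact, and one confirms $\nrm{\phi(s)}\le 1$.
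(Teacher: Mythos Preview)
Your overall reduction---lifting the hard MDP family of \cite{wang2021exponential} to a POMDP by taking $\Ocal=\Scal$ with identity emission, so $G_h=I$ and Assumptions~\ref{assum:hse}, \ref{assum:linear}, \ref{assum:gap_assumption} are inherited while Assumption~\ref{assum:deterministic} fails---is exactly the paper's approach. However, your assessment of where the difficulties lie is inverted, and one of the steps you mark as ``routine'' is in fact the only nontrivial point, while the step you flag as the ``main obstacle'' is no obstacle at all.

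First, the gap. The construction of \cite{wang2021exponential} is not merely ``engineered for linear realizability and hardness''; a constant suboptimality gap is the headline feature of that paper (it is in the title). So Assumption~\ref{assum:gap_assumption} holds directly for their instance with a fixed $\Delta>0$, and no modification, reward offset, or action merging is needed. The paper simply records this as property~(b) of the restated theorem and moves on.

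Second, the feature conversion. Your proposed fix---stack the state--action features per action to get a state-only feature, at the cost of multiplying the dimension by $A$---does \emph{not} leave the rate intact. In \cite{wang2021exponential} the number of actions is $A=2^{\Theta(d')}$ in the original feature dimension $d'$; blowing the dimension up to $d=d'A$ makes the lower bound $2^{\Omega(d')}$ only polynomial in $d$, destroying the statement. The paper avoids this by observing that in the hard instance one has
\[
Q^\star(s,a)=\bigl(\langle v_s,u_a\rangle+2\alpha\bigr)\,\langle v_s,v_{a^\star}\rangle,
\]
which is bilinear in tensorized state and action features: set $\varphi(s)=\mathrm{vec}\bigl([1,v_s]\otimes[1,v_s]\bigr)$ and $\chi(a)=\mathrm{vec}\bigl([1,v_a]\otimes[1,v_a]\bigr)$, so $Q^\star(s,a)=\varphi(s)^\top M\,\chi(a)$ for a bounded $M$. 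Then $\phi(s):=\varphi(s)$ and $w_a:=M\chi(a)$ give the per-action linear form $Q^\star(s,a)=\langle w_a,\phi(s)\rangle$ with state-feature dimension only polynomial in $d'$ (independent of $A$), and the exponential lower bound transfers.
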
 

The above two results indicate that neither latent determinism nor gap condition alone can ensure statistically efficient learning, so our assumptions are minimal. In the next section, we show that efficient PAC learning is possible when latent determinism and gap conditions are combined.

\section{Algorithm for HSE-POMDPs} \label{sec:finite}
In this section, we discuss the case where features are finite-dimensional and propose a new algorithm.  Before presenting our algorithm, we review some useful observations.

When latent transition dynamics and initial states are deterministic, given any sequence $a_{0:h-1}$, the latent state that it reaches is fixed. We denote the latent state corresponding to $a_{0:h-1}$ as $s_h(a_{0:h-1})$. Since latent states are not observable, even if we knew $Q^\star_h(s,a), \forall s,a$, we cannot extract the optimal policy easily since during execution  we never observe a latent state $s_h$. %
To overcome this issue, we leverage the existence of left-invertible linear conditional mean embedding operator in Assumption~\ref{assum:hse}: %
\begin{align*}
    Q^{\star}_h(s,a)  = \langle w^{\star}_{a;h}, G^{\dagger}_h G_h \phi(s) \rangle = \langle \{G^{\dagger}_h\}^{\top} w^{\star}_{a;h}, \EE_{o \sim \OO_h(s)}[\psi(o)] \rangle. 
\end{align*}
Hence, letting $\theta^{\star}_{a;h} = \{G^{\dagger}_h\}^{\top} w^{\star}_{a;h}$, the function $ Q^{\star}_h(s,a) $ is linear in a new latent-state feature $\EE_{o \sim \OO_h(s)}[\psi(o)]$. 
By leveraging the determinism in the latent transition, given a sequence of actions $a_{0:h-1}$, we can estimate the observable feature $x_h(a_{0:h-1}):= \EE_{o \sim \OO_h(s_h(a_{0:h-1}) )}[\psi(o)]$ by repeatedly executing $a_{0:h-1}$ $M$ times from the beginning, recording the $M$ i.i.d observations $\{o^{(i)}\}_{i=1}^M$ generated from $\OO_h(\cdot | s_h(a_{0:h-1}))$,  resulting in an estimator defined as: \looseness=-1
$  \hat x_h(a_{0:h-1})= \sum_{j} \psi(o^{(j)}) / M$. Now, if we knew $\theta^{\star}_{a;h}$,  we could consistently estimate $Q^{\star}_h(s_h(a_{0:h-1}),a)$ by ${\theta^\star_{a;h}}^\top \hat x_h(a_{0:h-1})$ using the observable quantity $\hat x_h(a_{0:h-1})$. The remaining challenge is to learn $\theta^{\star}_{a;h}$. 
At high-level, if we knew $V^\star_{h+1}(s_{h+1}(a_{0:h-1}, a))$, then we can estimate $\theta^\star_{a;h}$ by regressing  target 
$r_h(s_h(a_{0:h-1}), a) + V^\star_{h+1}(s_{h+1}(a_{0:h-1}, a))$ on the feature $\hat x_h(a_{0:h-1})$. 
Below, we present our recursion based algorithm that recursively estimates $V^\star_h$ and also performs exploration at the same time. 

\subsection{Algorithm}

We present the description of our algorithm. The algorithm is divided into two parts:  \pref{alg:deterministic_pomdp},
in which we define the main loop, and \pref{alg:compute_q_star}, in which we define a recursion-based subroutine. Intuitively, \pref{alg:compute_q_star} takes any sequence of actions $a_{0:h-1}$ as input, and returns a Monte-Carlo estimator of $V^{\star}_h(s_h(a_{0:h-1}))$ with sufficiently small error. We keep two data sets $\Dcal_h, \Dcal_{a;h}$ in the algorithm. $\Dcal_h$ simply stores features in the format of $\hat x_h(a_{0:h-1})$, and $\Dcal_{a;h}$ stores pairs of feature and scalar $( \hat x_h(a_{0:h-1}), y )$ where as we will explain later $y$ approximates $Q^\star_h(s(a_{0:h-1}),a)$. The dataset $\Dcal_{a;h}$ will be used for linear regression. %

The high-level idea behind our algorithm is that at a latent state reached by $a_{0:h-1}$,  we use least squares to predict the optimal action when the data at hand is exploratory enough to cover $\hat x_h(a_{0:h-1})$ (intuitively, coverage means $\hat x_h(a_{0:h-1})$ lives in the span of the features in $\Dcal_h$). Once we predict the optimal action $a$, we execute that action $a$ and call \pref{alg:compute_q_star} to estimate the value $V^\star_{h+1}( s_h(a_{0:h-1},a) )$, which together with the reward $r_h(s_h(a_{0:h-1}), a)$, gives us an estimate of $V^\star_h(s_h(a_{0:h-1}))$.
On the other hand, if the data $\Dcal_h$ does not cover $\hat x_h(a_{0:h-1})$, which means that we cannot rely on least square predictions to confidently pick the optimal action at $s_h(a_{0:h-1})$, we simply try out all possible actions $a\in\Acal$, each followed by a call to \pref{alg:compute_q_star} to compute the value of \text{Compute-$V^{\star}(a_{0:h-1},a)$}. Once we estimate $Q^\star_h(s_h(a_{0:h-1}),a), \forall a\in \Acal$, we can select the optimal action.  To avoid making too many recursive calls, we notice that whenever our algorithm encounters the situation where the current data does not cover the test point $\hat x_h(a_{0:h-1})$ (i.e., a bad event), we add $\hat x_h(a_{0:h-1})$ to the dataset $\Dcal_h$ to expand the coverage of $\Dcal_h$. %

We first explain \pref{alg:deterministic_pomdp} assuming \pref{alg:compute_q_star} returns the optimal $V^{\star}_h(s_h(a_{0:h-1}))$ with small error when the input is $a_{0:h-1}$. In \pref{line:estimate_q_star}, we recursively estimate $Q^{\star}_h(s_h(a_{0:h-1}),a)$ by running least squares regression. If at every level the data is exploratory in \pref{line:check_optimal}, then we return the set of actions in \pref{line:return_optimal} and terminate the algorithm. We later prove that this returned sequence of actions is indeed the globally optimal sequence of actions.  If the data is not exploratory at some level  $h$, the estimation based on least squares regression would not be accurate enough, we query recursive calls for all actions and get an estimation of $Q^{\star}_h(s_h(a_{0:h-1}),a))$ for all $a \in \Acal$. %
Whenever line~\ref{line:eliptical} is triggered, it means that we run into a state $s_h(a_{0:h-1})$ whose feature $\hat x_h(a_{0:h-1})$ is not covered by the training data $\Dcal_h$. Hence, to keep track of the progress of learning, we add all new data collected at $s(a_{0:h-1})$ into the existing data set in line \ref{alg:add_data1} and line~\ref{line:add_data_2}.  

Next, we explain \pref{alg:compute_q_star} whose goal is to return an estimate of $V^\star_h(s_h(a_{0:h-1}))$ with sufficiently small error for a given sequence $a_{0:h-1}$.  This algorithm is recursively defined. In \pref{line:judge}, we judge whether the data is exploratory enough so that least square predictions can be accurate. If it is, we choose the optimal action using estimate of $Q^{\star}_h(s_h(a_{0:h-1}),a)$ for each $a$ on the data set $\Dcal_{a;h}$, i.e., $\langle \hat \theta_{a;h} , \hat x_h(a_{0:h-1}) \rangle$ by running least squares regression in \pref{line:calculate_mu}. While the data set has good coverage, the finite sample error still remains in this estimation step. Thanks to the gap in Assumption~\pref{assum:gap_assumption}, even if there is certain estimation error in $\langle \hat \theta_{a;h} , \hat x_h(a_{0:h-1}) \rangle$, as long as that is smaller than half of the gap, the selected action $a_h $ in \pref{line:calculate_mu} is correct (i.e., $a_h$ is the optimal action at latent state $s_h(a_{0:h-1})$). Then, after rolling out this $a_h$ and calling the recursion at $h+1$ in \pref{line:recursion_good}, we get a Monte-Carlo estimate of $V^{\star}_h(s_h(a_{0:h-1}))$ with sufficiently small error as proved by induction later. 

We consider bad events where the data is not exploratory enough. In this case, for each action $a'\in \Acal$, we call the recursion in \pref{line:recursion_bad}. Since this call gives a Monte-Carlo estimate of $V^{\star}_{h+1}(s_{h+1}(a_{0:h-1},a'))$, we can obtain a Monte-Carlo estimator of $Q^{\star}_h(s_h(a_{0:h-1}),a)$ by adding $\hat r_h(s_h(a_{0:h-1}),a)$. In bad events, we record the pair of $\{\hat x_h(a_{0:h-1}),y_{a; h}\}$ for each $a \in \Acal$ in \pref{line:add_data2} and record  $\hat x_h(a_{0:h-1})$ in \pref{line:add_data_8}. Whenever this bad events happen, by adding new data to the datasets, we have explored. 
In \pref{line:final_return}, we return an estimate of $V^{\star}_h(s_h(a_{0:h-1}))$ with small error. 

\begin{algorithm}[t] 
\caption{Efficient Q-learning for Deterministic POMDPs (EQDP) } \label{alg:deterministic_pomdp}
\begin{algorithmic}[1]
  \STATE \textbf{Input:} parameters $M,\varepsilon,\lambda$
  \STATE Initialize datasets $\Dcal_{a;0}, \dots, \Dcal_{a;H-1}$ for any $a\in \Acal$ and $\Dcal_{0}, \dots, \Dcal_{H-1}$. Given $\Dcal_{a;h}$ and $\Dcal_h$,  
  \begin{align*}\textstyle
\hat \theta_{a;h}(\Dcal_{a;h}) := \Sigma^{-1}_h(\Dcal_h)\sum_{j=1}^{|\Dcal_{h}| }\hat x_h(a^{(j)}_{0:h-1})y^{(j)}_{a;h}, 
 \textstyle\Sigma_h(\Dcal_h)  := \sum_{j=1}^{|\Dcal_{h}| } \hat x_h(a^{(j)}_{0:h-1})\hat x^{\top}_h(a^{(j)}_{0:h-1}) + \lambda I 
  \end{align*}
  where $\Dcal_{a;h}=\{\hat x_h(a^{(j)}_{0:h-1}),  y^{(j)}_{a;h} \}$ and $\Dcal_h = \{\hat x_h(a^{(j)}_{0:h-1})\}$. 
  \WHILE{true}
	\FOR{$h = 0 \to H-1$} \label{line:initial_for}
	    \STATE Collect $M$ i.i.d samples from $\OO_h(\cdot \mid s_h(a_{0:h-1}))$ by executing $\{ a_{0:h-1}\}$ and construct  an estimated feature $\hat x_h(a_{0:h-1})  =(1/M) \sum \psi(o^{(i)})$ \label{line:roll_out_main}
		\STATE Set $a_h = \argmax_{a} \langle \hat \theta_{a;h}(\Dcal_{a;h}), \hat x_h(a_{0:h-1}) \rangle$  \label{line:estimate_q_star}
	\ENDFOR  \label{line:end_for}
	\IF{ $\forall h: \|\hat x_h(a_{0:h-1})\|_{\Sigma^{-1}_h (\Dcal_h)}  \leq \varepsilon $ }\label{line:check_optimal}
		\STATE Return $\{a_0, \dots, a_{H-1}\}$  \label{line:return_optimal}
	\ELSE \label{line:bad_algoritm1} 
		\STATE Find the smallest $h$ such that $\|\hat x_h(a_{0:h-1})\|_{\Sigma^{-1}_h(\Dcal_h)}   > \varepsilon$ \label{line:eliptical} 
		
		\FOR{  $\forall a'\in \Acal$}
		\label{alg:for1} 
		
		\STATE   Collect $M$ i.i.d samples from $\YY_h( \cdot \mid s_h(a_{0:h-1}),a' )$ by executing $\{a_{0:h-1},a'\}$ and compute $\hat r_h(s_h(a_{0:h-1}),a')$ by taking its mean  
		\STATE   Compute $y_{a'; h} = \hat r_h(s_h(a_{0:h-1}),a') + \text{Compute-}V^\star(h+1; \{a_{0:h-1},a'\})$ \label{line:start} \label{line:recursion_bad_original} 
		\STATE Add $\Dcal_{a';h} = \Dcal_{a';h} + \{ \hat x_h( a_{0:h-1}), y_{a'; h} \}$ \label{alg:add_data1} 
		\ENDFOR \label{alg:for2} 
		\STATE Add $\Dcal_{h} = \Dcal_{h} + \{ \hat x_h( a_{0:h-1})\}$  \label{line:add_data_2}
	\ENDIF
  \ENDWHILE
\end{algorithmic}
\end{algorithm}

\begin{algorithm}[t] 
\caption{Compute-$V^\star$} \label{alg:compute_q_star}
\begin{algorithmic}[1]
  \STATE {\bf  Input:} time step $h$, state $a_{0:h-1}$
	\IF{$h = H-1$}
		\STATE   Collect $M$ i.i.d samples from $\YY_h( \cdot \mid s_h(a_{0:h-1}),a' ) $ by executing $\{a_{0:h-1},a'\}$ and compute $\hat r_h(s_h(a_{0:h-1}),a')$ by taking its mean for any $a'\in \Acal $ \label{line:add_data3}
		\STATE Return $\max_{a} \hat r_h(a_{0:H-2},a)$ %
	\ELSE 
	    \STATE Collect $M$ i.i.d samples from $\OO_h(\cdot \mid s_h(a_{0:h-1}))$ by executing $\{ a_{0:h-1}\}$ and construct  an estimated feature $\hat x_h(a_{0:h-1})  =1/M\sum \psi(o^{(i)})$ \label{line:roll_out} 
		\IF{ $\|\hat x_h(a_{0:h-1})\|_{\Sigma^{-1}_h(\Dcal_{h})}  \leq \varepsilon$}  \label{line:judge}
		    \STATE Set $a_h = \argmax_{a} \langle \hat \theta_{a;h}(\Dcal_{a;h}), \hat  x(a_{0:h-1}) \rangle$  \label{line:calculate_mu}
		   	\STATE   Collect $M$ i.i.d samples from $\YY_h( \cdot \mid s_h(a_{0:h-1}),a_h )$ by executing $\{a_{0:h-1},a_h\}$ and compute $\hat r_h(s_h(a_{0:h-1}),a_h)$ by taking its mean 
		   	    \label{line:add_data_4}
			\STATE Return $\hat r_h(s_h(a_{0:h-1}),a_h) + \text{Compute-}V^\star(h+1; \{a_{0:h-1},a_h\})$ \\
			\label{line:recursion_good}
		\ELSE  \label{line:eliptical2}  
			\FOR{$ a' \in \Acal $} 
			    	\STATE   Collect $M$ i.i.d samples from $\YY_h( \cdot \mid s_h(a_{0:h-1}),a' )$ by executing $\{a_{0:h-1},a'\}$ and compute $\hat r_h(s_h(a_{0:h-1}),a')$ by taking its mean  \label{line:add_data_5}
				\STATE $y_{a'; h} = \hat r_h(s_h(a_{0:h-1}), a') + \text{Compute-}V^\star( h+1; \{a_{0:h-1},a'\} )$  %
				\label{line:recursion_bad}
				\STATE $\Dcal_{a';h} := \Dcal_{a';h} +  \{ \hat x_h( a_{0:h-1}), y_{a'; h} \}$ \label{line:add_data2}
			\ENDFOR
			\STATE Add $\Dcal_{h} := \Dcal_{h} + \{ \hat x_h( a_{0:h-1}) \}$   \label{line:add_data_8}
			\STATE  Return  $\max_{a} y_{a;h}$   \label{line:final_return}
		\ENDIF 
	\ENDIF
	
\end{algorithmic}

\end{algorithm}

\vspace{-0.3cm}
\subsection{Analysis}
\vspace{-0.3cm}

The following theorems are our main results. We can ensure our algorithm is both statistically and computationally efficient. Our work is the first work with such a favorable guarantee on POMDPs. 

\begin{theorem}[Sample Complexity] \label{thm:sample_complexity}
Suppose Assumption \ref{assum:hse}, \ref{assum:deterministic}, \ref{assum:linear} and \ref{assum:gap_assumption} hold. Assume $\nrm{\psi(o)} \leq 1$ for any $o\in \Ocal$. %
Define $\Theta = W/\min_{h}\eta(G_h)$ where $\eta(G_h)$ is the smallest singular value of $G_h$. 
By properly setting $\lambda,M,\varepsilon$, with probability $1-\delta$, the algorithm  outputs the optimal actions $a^{\star}_{0:H-1}$ after using at most the following number of samples \begin{align*}
 \tilde O \prns{ H^5 \Theta^5 d^{2} A^2 (1/\Delta)^5\ln(1/\delta)  }.  
\end{align*}
Here, we $\tilde O$ suppresses  $\mathrm{polylog}(H,d,\ln(1/\delta),1/\Delta,A,\Theta) $ multiplicative factors. 
\end{theorem}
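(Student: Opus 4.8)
The plan is to combine four ingredients: uniform concentration of the Monte-Carlo feature and reward estimates, an in-distribution least-squares prediction bound, the optimality-gap argument that turns accurate predictions into \emph{exactly} correct action choices, and an elliptical-potential argument that caps the number of ``uncovered'' events and hence the number of recursive calls. First I would control the per-query statistical error: since $\nrm{\psi(o)}\le 1$, vector concentration gives $\nrm{\hat x_h(a_{0:h-1}) - x_h(a_{0:h-1})}\le b_x$ with $b_x$ of order $\sqrt{\ln(1/\delta')/M}$, and boundedness of the reward gives $|\hat r_h - r_h|\le b_r$ at the same rate, uniformly over the polynomially many queries after a union bound. I would also record that $\nrm{\theta^\star_{a;h}}=\nrm{\{G_h^\dagger\}^\top w^\star_{a;h}}\le W/\eta(G_h)\le\Theta$, so the target regressor has norm at most $\Theta$, and that by Assumption~\ref{assum:hse} the embedding gives $Q^\star_h(s_h,a)=\langle\theta^\star_{a;h},x_h(a_{0:h-1})\rangle$ exactly.

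The core technical step is bounding, for a test point with $\nrm{\hat x_h}_{\Sigma^{-1}_h(\Dcal_h)}\le\varepsilon$, the prediction error $|\langle\hat\theta_{a;h},\hat x_h\rangle - Q^\star_h(s_h,a)|$. I would expand $\hat\theta_{a;h}=\Sigma_h^{-1}\sum_j \hat x_h^{(j)} y^{(j)}_{a;h}$ and split into four contributions: a regularization term of order $\varepsilon\sqrt{\lambda}\,\Theta$; a self-normalized zero-mean noise term of order $\varepsilon\sqrt{\beta}$ with $\beta$ of order $d+\lambda\Theta^2$ up to logs; a feature-error term scaling with $b_x\Theta$; and a \emph{bias} term arising because each regression target $y^{(j)}_{a;h}$ is an \emph{estimate} of $Q^\star$ rather than $Q^\star$ itself. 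The bias is the delicate piece: writing the per-sample target error as $\zeta$, I would prove $\nrm{\sum_j \hat x_h^{(j)} c_j}_{\Sigma_h^{-1}}\le\zeta\sqrt{|\Dcal_h|}$ for any coefficients $|c_j|\le\zeta$ (since $\sum_j\langle v,\hat x_h^{(j)}\rangle^2\le\nrm{v}^2_{\Sigma_h}$), so the bias contributes at most $\varepsilon\,\zeta\sqrt{|\Dcal_h|}$ to the prediction; with $|\Dcal_h|$ shown below to be $\tilde{O}(d/\varepsilon^2)$, this is of order $\zeta\sqrt{d}$ up to logs.

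Since $\zeta$ is exactly the value-estimation error one level ahead, the accuracy requirements couple across the horizon, and I would close them by backward induction. Defining $\mathrm{err}_h$ as the worst-case error of \textsc{Compute-}$V^\star$ at level $h$, the base case $h=H-1$ gives $\mathrm{err}_{H-1}\le b_r$, and in both the covered branch (\pref{line:recursion_good}) and the uncovered branch (\pref{line:final_return}) one gets $\mathrm{err}_h\le b_r+\mathrm{err}_{h+1}$, hence $\mathrm{err}_h\le H b_r$. Crucially, the covered branch only maintains this recursion when every greedy choice in \pref{line:calculate_mu} and \pref{line:estimate_q_star} is correct; by Assumption~\ref{assum:gap_assumption} this holds the moment the prediction error above is below $\Delta/2$, which is what pins down the parameters (roughly $\varepsilon$ of order $\Delta/(\sqrt{d}\,\mathrm{poly}(H,\Theta))$ and $M$ of order $\mathrm{poly}(H,\Theta,d)/\Delta^2$ up to logs). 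Because a correct greedy choice recovers the \emph{exact} optimal action (any suboptimal action is $\Delta$-separated in $Q^\star$ while estimates lie within $\Delta/2$), the sequence returned in \pref{line:return_optimal} is exactly $a^\star_{0:H-1}$.

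Finally I would count samples. Each uncovered event at level $h$ appends one point with $\nrm{\hat x_h}_{\Sigma_h^{-1}}>\varepsilon$ to $\Dcal_h$, so the elliptical-potential (information-gain) lemma caps the number of such events by $|\Dcal_h|\le \tilde{O}(d/\varepsilon^2)$; this simultaneously certifies termination of the while loop and bounds the recursion branching, since a node spawns $A$ children only at an uncovered event. Propagating across the $H$ levels bounds the total number of \textsc{Compute-}$V^\star$ calls by $\tilde{O}(H^2 A d/\varepsilon^2)$, each costing $O(AM)$ samples; substituting the chosen $\varepsilon$ (so that $1/\varepsilon^2$ supplies an extra factor of $d$ together with $\mathrm{poly}(H,\Theta)/\Delta^2$) and $M$ yields the claimed $\tilde{O}(H^5\Theta^5 d^2 A^2\Delta^{-5}\ln(1/\delta))$ bound. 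The main obstacle I anticipate is precisely the bias term of paragraph two coupled to the induction of paragraph three: controlling the systematic error in the least-squares targets so that it does not grow with $|\Dcal_h|$, and then selecting a single parameter triple $(\varepsilon,M,\lambda)$ that drives the total prediction error below $\Delta/2$ at \emph{every} level at once. The feature and reward concentration, the gap-to-exact-action argument, and the potential argument are comparatively standard.
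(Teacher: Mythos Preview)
Your proposal is correct and follows essentially the same architecture as the paper: concentrate the Monte-Carlo feature and reward estimates, prove by backward induction over $h$ (and across episodes) that \textsc{Compute-}$V^\star$ returns $V^\star_h$ up to accumulated reward noise of size $O(Hb_r)$, use the gap to convert $\Delta/2$-accurate predictions into exact optimal-action choices, and cap the number of recursive calls via the elliptical-potential lemma. The only difference is cosmetic, in how the regression error is decomposed and the parameters allocated: the paper does not introduce a separate self-normalized noise term but, after conditioning on the per-sample concentration events, treats the entire target deviation $w^{(i)}=\langle\theta^\star_{a;h},x^{(i)}-\hat x^{(i)}\rangle+\sum_{k\ge h}\nu_k$ as a deterministically bounded quantity and bounds its contribution by $|w^{(i)}|\sqrt{N'\varepsilon}$; correspondingly it fixes $\varepsilon=\Delta/(6\Theta)$ (independent of $d$) and instead enlarges $M$ so that $|w^{(i)}|\le \Delta/(6\sqrt{N'\varepsilon})$ with $N'=\tilde O(d/\varepsilon^2)$, whereas you shrink $\varepsilon$ by an extra $\sqrt d$ factor and keep $M$ smaller---both allocations lead to the same final $\tilde O(H^5\Theta^5 d^2 A^2\Delta^{-5}\ln(1/\delta))$ bound.
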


Note \pref{thm:sample_complexity} is a PAC result, except there is no ``approximately'' (the ``A'' of ``PAC'') because we output the true optimal action sequence with probability $1-\delta$. I.e., we are simply \textit{probably correct}.

\begin{corollary}[Computational complexity]
Assume basic arithmetic operations $+,-,\times,\div$, sampling one sample, comparison of two values,  take unit time. The computational complexity \footnote{We ignore the bit complexity following the convention. We focus on arithmetic complexity.} is $\mathrm{poly}(H,d,\Theta,\ln(1/\delta),1/\Delta, A ) $. 
\end{corollary}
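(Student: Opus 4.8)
The plan is to bound the total runtime as (number of routine invocations) $\times$ (per-invocation cost) and check that every factor is polynomial in $H,d,\Theta,\ln(1/\delta),1/\Delta,A$. The backbone is an elliptical-potential argument that caps how often the algorithm enters an ``uncovered'' branch. First I would bound, across the \emph{entire} execution (the while-loop of \pref{alg:deterministic_pomdp} together with every recursive call of \pref{alg:compute_q_star}), the number of times a coverage test fails, i.e. $\|\hat x_h(a_{0:h-1})\|_{\Sigma^{-1}_h(\Dcal_h)} > \varepsilon$ in \pref{line:eliptical} or \pref{line:judge}. Since $\nrm{\psi(o)}\le 1$, each $\hat x_h$ is an average of unit vectors and thus $\nrm{\hat x_h}\le 1$. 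Every failed test is immediately followed by inserting $\hat x_h$ into $\Dcal_h$ (\pref{line:add_data_2}, \pref{line:add_data_8}), and because the datasets are persistent and shared across all recursive calls, such an insertion increases $\log\det\Sigma_h(\Dcal_h)$ by at least $\log(1+\varepsilon^2/\lambda)$. As $\log\det\Sigma_h$ starts at $d\log\lambda$ and $\mathrm{tr}(\Sigma_h)$ grows by at most $1$ per insertion, the standard elliptical-potential lemma caps the number of insertions into each $\Dcal_h$ by $N := \tilde O(d/\varepsilon^2)$, so the total number of uncovered events over all levels is $\tilde O(Hd/\varepsilon^2)$.

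Next I would convert this into a bound on the number of \text{Compute-}$V^\star$ invocations. Each call is a node of a recursion forest with levels in $\{0,\dots,H-1\}$: a covered node (\pref{line:recursion_good}) makes a single recursive call, while an uncovered node (\pref{line:recursion_bad}) branches into $A$ calls. Covered nodes thus form chains of length at most $H$, and by the handshake identity a rooted tree whose only branching nodes have out-degree $A$ has at most $U(A-1)+1$ leaves, where $U$ is the number of branching nodes in that tree. Since every uncovered node across all trees is charged to a distinct dataset insertion, the total number of branching nodes is at most the $\tilde O(Hd/\varepsilon^2)$ of the previous step; the number of top-level trees is likewise $\tilde O(AHd/\varepsilon^2)$ (each non-terminating while-iteration spawns $A$ calls in \pref{line:recursion_bad_original} and performs at least one insertion in \pref{line:add_data_2}, capping the while-iterations by $N{+}1$). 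Summing the per-tree leaf bound and multiplying by the depth $H$ (each node lies on a root-to-leaf path) gives a total of $\tilde O(AH^2 d/\varepsilon^2)$ nodes and while-iterations, all polynomial once the setting of $\varepsilon$ from \pref{thm:sample_complexity} is substituted.

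Then I would bound the work at each node and each while-iteration. Building $\hat x_h$ from $M$ samples costs $O(Md)$; each coverage test $\nrm{\hat x_h}_{\Sigma_h^{-1}}$ costs $O(d^2)$; forming $\hat\theta_{a;h}$ needs $\Sigma_h^{-1}$ and the weighted sum, at most $O(d^3+\abs{\Dcal_h}d^2)$ if recomputed from scratch (or $O(d^2)$ per update by Sherman--Morrison), with $\abs{\Dcal_h}\le N$; the $\argmax$ over $\Acal$ costs $O(Ad)$; and the reward averages cost $O(M)$. Each of these is $\mathrm{poly}(d,A,M)$. Finally I would multiply the polynomial node/iteration count by the polynomial per-node cost and substitute the settings of $\lambda,\varepsilon,M$ from the proof of \pref{thm:sample_complexity} (each polynomial in $H,\Theta,d,A,1/\Delta,\ln(1/\delta)$), yielding the claimed $\mathrm{poly}(H,d,\Theta,\ln(1/\delta),1/\Delta,A)$ bound; the dominant term is essentially the sample-complexity bound of \pref{thm:sample_complexity} inflated by the $O(d^3)$ linear-algebra cost per node.

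The main obstacle is the second step: ruling out exponential blow-up of the recursion. The branching factor $A$ together with depth $H$ would naively give $A^H$ calls, and the resolution hinges on the fact that each branch is ``paid for'' by a permanent, globally accounted dataset insertion. One must therefore argue carefully that the potential is never reset between or within recursive calls --- the datasets $\Dcal_h,\Dcal_{a;h}$ are updated in place at \pref{line:add_data2} and \pref{line:add_data_8} as the recursion unwinds --- so that a single global elliptical-potential charging controls the total branching. Verifying that the in-place, depth-first updates are consistent with this charging (e.g.\ that a feature inserted deep in one subtree already tightens the coverage test seen later in a sibling subtree) is the delicate point that makes the whole polynomial bound go through.
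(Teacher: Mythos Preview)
Your proposal is correct and follows essentially the same approach as the paper. The paper does not give a standalone proof of this corollary; it is implicit in the sample-complexity lemma (Lemma~\ref{lem:final_lemma}), whose proof likewise bounds the number of ``bad'' events per level by an elliptical-potential argument, then bounds the total number of recursive calls by showing that good-event calls $\alpha_m$ are at most $H$ times the bad-event calls $\beta_m$ via a connected-component decomposition of the recursion tree. Your leaf-counting handshake identity is a slightly different bookkeeping device for the same fact (chains of covered nodes have length $\le H$, branching is globally charged to dataset insertions), and your explicit accounting of the $O(d^3)$ linear-algebra cost per node fills in a detail the paper leaves unstated.
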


We provide the sketch of the proof. For ease of understanding, suppose the reward is deterministic; thus, $\hat r_h=r_h$. The full proof is deferred to \pref{sec:proof}. The proof consists of three steps: 
\begin{enumerate}
    \item Show $\textit{Compute-}V^{\star}$ always returns $V^{\star}_h(s_h(a_{0:h-1}))$ given input $a_{0:h-1}$ in high probability. %
    \item Show when the algorithm terminates, it returns the optimal policy. 
    \item Show the number of samples we use is upper-bounded by $\mathrm{poly}(H,d,\Theta,\ln(1/\delta),1/\Delta,A)$. 
\end{enumerate}
Hereafter, we always condition on events $\|\hat x_h(a_{0:h-1}) - x_h(a_{0:h-1})\|$ is small enough every time when we generate $\hat x_h(a_{0:h-1}) $. %
Before proceeding, we remark in MDPs with deterministic transitions, a similar strategy is employed \citep{du2020agnostic,wen2013efficient}. Compared to them, we need to handle the unique challenge of uncertainty about $\hat x_h$. Recall we cannot use the true $x_h$. 

\textbf{First step}\quad We use induction regarding $h \in [H]$. The correctness of the base case ($h=H-1$) is immediately verified.  Thus, we prove this is true at $h$ assuming $\textit{Compute-}V^{\star}(h+1,a_{0:h})$ returns  $V^{\star}_{h+1}(s_{h+1}(a_{0:h}))$ at $h+1$ for any possible inputs $a_{0:h}$ in the algorithm.

We need to consider two cases. The first case is a good event when $ \|\hat x_h(a_{0:h-1})\|_{\Sigma^{-1}_h(\Dcal_{h})}  \leq \varepsilon$. In this case, we first regress $y_{a;h}$ on $\hat x_h$  and obtain $\langle \hat \theta_{a;h}, \hat  x_h(a_{0:h-1}) \rangle$. As we mentioned, the challenge is that the estimated feature $\hat x_h(a_{0:h-1})$ is not equal to the true feature $x_h(a_{0:h-1})$. Here, we have
\begin{align*}
    & |\langle \hat \theta_{a;h}, \hat  x_h(a_{0:h-1}) \rangle - Q^{\star}_h(s_h(a_{0:h-1}),a )  | = |\langle \hat \theta_{a;h} , \hat x_h(a_{0:h-1}) \rangle - \langle \theta^{\star}_{a;h}, x_h(a_{0:h-1}) \rangle | \\
    &\leq    |\langle \hat \theta_{a;h} , \hat x_h(a_{0:h-1}) \rangle - \langle \theta^{\star}_{a;h} , \hat x_h(a_{0:h-1}) \rangle | +  |\theta^{\star}_{a;h} , \hat x_h(a_{0:h-1}) \rangle - \langle \theta^{\star}_{a;h} , x_h(a_{0:h-1}) \rangle | \\
    &\leq \underbrace{\mathrm{poly}(1/M,d,\Theta,H)\|\hat x_h(a_{0:h-1})\|_{\Sigma^{-1}_h(\Dcal_h)}}_{(a)} + \underbrace{\Theta \|\hat x_h(a_{0:h-1})-x_h(a_{0:h-1})\|}_{(b)}. 
\end{align*}
From the second line to the third line, we use some non-trivial reformulation as explained in \pref{sec:proof}. In (a), the term $\|\hat x_h(a_{0:h-1})\|_{\Sigma^{-1}_h(\Dcal_h)}$ is upper-bounded by $\varepsilon$. By setting $\varepsilon$ properly and taking large $M$, we can ensure the term (a) is less than $\Delta/4$. Similarly, by taking large $M$, we can ensure the term (b) is upper-bounded by $\Delta/4$. Therefore, we can show $ |\langle \hat \theta_{a;h}, \hat  x_h(a_{0:h-1}) \rangle - Q^{\star}_h(s_h(a_{0:h-1}),a )  |<\Delta/2$ for any $a \in \Acal$. Since $ |\langle \hat \theta_{a;h}, \hat  x_h(a_{0:h-1}) \rangle - Q^{\star}_h(s_h(a_{0:h-1}),a )  |<\Delta/2$ for any $a \in \Acal$, together with the gap assumption, by setting $M=\mathrm{poly}(H,d,\Theta,\ln(1/\delta),1/\Delta, A) $, we can ensure  $\argmax_a \langle \hat \theta_{a;h}, \hat  x_h(a_{0:h-1}) \rangle = \argmax_a Q^{\star}_h(s_h(a_{0:h-1}),a ) $ in \pref{line:calculate_mu} in  \pref{alg:compute_q_star}.  Since this selected action $a_h$ is optimal (after $a_{0:h-1}$),  by inductive hypothesis, we ensure to return $V^{\star}_h(s_h(a_{0:h-1}))$ recalling $\textit{Compute-}V^{\star}(h+1;a_{0:h})= V^{\star}_{h+1}(s_{h+1}(a_{0:h}))$. 

Next, we consider a bad event when $ \|\hat x_h(a_{0:h-1})\|_{\Sigma^{-1}_h(\Dcal_{h})} > \varepsilon$. In this case, we query the recursion for any $a' \in \Acal$. 
By inductive hypothesis, 
we can ensure $y_{a';h}=Q^{\star}_h(s_h(a_{0:h-1}),a')$. Hence, in \pref{line:final_return} in \pref{alg:compute_q_star}, $\max_a y_{a;h} = V^{\star}_h(s_h(a_{0:h-1}))$ is returned. 

\textbf{Second step} \quad When the algorithm terminates, i.e., $ \|\hat x_h(a_{0:h-1})\|_{\Sigma^{-1}_h(\Dcal_{h})} < \varepsilon$ for all $h$, following the first-step, we can show $a_h=\argmax_{a} \langle \hat \theta_{a;h}, \hat  x_h(a_{0:h-1}) \rangle$ always returns the optimal action $a^{\star}_h$. 

\textbf{Third step} \quad \redred{The total number of bad events $ \|\hat x_h(a_{0:h-1})\|_{\Sigma^{-1}_h(\Dcal_{h})} > \varepsilon$ ( \pref{line:bad_algoritm1} in \pref{alg:deterministic_pomdp} and \pref{line:eliptical2} in \pref{alg:compute_q_star})
for any $h$ can be bounded in the order of $O( d / \varepsilon^2 )$  via a standard elliptical potential argument. Once no such bad events happen, the termination criteria in the main algorithm ensures we will terminate. With some additional argument, we can also show the number of times we visit \pref{line:recursion_good} and \pref{line:recursion_bad} in \pref{alg:compute_q_star} is upper-bounded by $O(H^2Ad / \varepsilon^2 )$. 
Thus the algorithm must terminate in polynomial number of calls of $\text{Compute-}V^\star$.  Each procedure $\text{Compute-}V^\star$ collects $O(M)$ fresh samples in \pref{line:roll_out}. Thus the total sample complexity is bounded by $O(H^2MAd/\varepsilon^2)$. }

\begin{remark}[Comparison to  \citep{du2020agnostic}]\label{rem:comparison}
\redred{In deterministic MDPs, \citep{du2020agnostic} uses a gap assumption to tackle agnostic learning, i.e., model misspecification. The reason we use the gap is different from theirs. We use the gap assumption to handle the noise from estimating features using future observations. We additionally deal with the unique challenge arising from uncertainty in features. }
\end{remark}

\subsection{Examples}
\label{sec:examples}

We instantiate our results with tabular POMDPs and Gaussian POMDPs.  

\begin{myexp}[continues=ex:undercomplete]
Let $|\Scal| = S, |\Ocal|=O$. In the tabular case, we suppose $S\leq O$. Here, $d=O$.  Recall we suppose the reward at any step lies in $[0,1]$. Since $Q^{\star}_h(s,a)$ belongs to $\{\langle \theta_{a;h}, \phi(s) \rangle; \|\theta_a\|\leq \sqrt{S}H \}$ where $\phi(\cdot)$ is a one-hot encoding vector over $\Scal$, we can set $\Theta=\sqrt{S}H/(\min_h \eta(\OO_h))$. The sample complexity is $\tilde O(H^{10} S^{5/2} A^2 O^{2} \ln(1/\delta)/\{\min_h \eta(\OO_h)^5 \Delta^5 \})$. 
\end{myexp}

\redred{\cite{jin2020sample} obtain a similar result in the tabular setting without a gap condition to get an $\varepsilon$-near optimal policy. Together with the gap, their algorithm can also output the exact optimal policy with polynomial sample complexity like our guarantee. } 
However, it is unclear whether their algorithm can be extended to HSE-POMDPs where state space or observation space is continuous.

\begin{myexp}[continues=ex:gaussian]
In Gaussian POMDPs, we assume $S\leq d$. Recall we suppose the reward at any step lies in $[0,1]$. 
Since $Q^{\star}_h(s,a)$ belongs to $\{\langle \theta_{a;h}, \phi(s) \rangle; \|\theta_{a;h}\|\leq \sqrt{S}H \}$ where $\phi(\cdot)$ is a one-hot encoding vector over $\Scal$, we can set $\Theta=\sqrt{S}H/(\min_h \eta(O_h))$. The sample complexity is $\tilde O(H^{10} S^{5/2} A^2 d^{2} \ln(1/\delta)/\{ \min_h \eta(\OO_h)^5 \Delta^5 \})$. Notably, this result does not depends on $|\Ocal|$. 
\end{myexp}

\vspace{-0.2cm}
\subsection{Infinite-Dimensional Case} \label{subsec:infinite}
\vspace{-0.2cm}

We briefly discuss the case when $\phi$ and $\psi$ are infinite-dimensional. The detail is deferred to \pref{sec:infinite}. We introduce a kernel $k_{\Scal}(\cdot,\cdot):\Scal \times \Scal \to \RR$  and $k_{\Ocal}(\cdot,\cdot):\Ocal \times \Ocal \to \RR$ and denote the corresponding feature vector $\psi:\Scal \to \RR$ and $\phi:\Ocal \to \RR$, respectively. Then, when $Q^{\star}_h(\cdot,a)$ belongs to $\Hcal_{\Scal}$ which is an RKHS corresponding to $k_{\Scal}$, if there exists a left invertible conditional embedding, we can ensure  $Q^{\star}(\cdot,a)$ is linear in $\EE_{o\sim \OO_h(\cdot)}[\psi(o)]$. After this observation, we can use a similar algorithm as \pref{alg:deterministic_pomdp} and \pref{alg:compute_q_star}  by replacing linear regression with kernel ridge regression using $k_{\Scal}(\cdot,\cdot)$ and $k_{\Ocal}(\cdot,\cdot)$. Finally, the sample complexity is similarly obtained by replacing $d$ with the maximum information gain over $\psi(\cdot)$ denoted by $\tilde d$. The rate of maximum information  gain is known in many kernels such as Mat\'ern kernel or Gaussian kernel \citep{valko2013finite,srinivas2009gaussian,chowdhury2017kernelized}. In terms of computation, we can still ensure the polynomial complexity with respect to $\tilde d$ noting kernel ridge regression just requires $O(n^3)$ computation when we have $n$ data at hand ($n$ depends on $\tilde d$).

\vspace{-0.2cm}
\section{Learning with Multi-step Futures }\label{sec:over_complete}
\vspace{-0.2cm}

We have so far considered one-step future has some signal of latent states. In this section, we show we can use multi-step futures which can be useful in settings such as overcomplete POMDPs. To build intuition, we first focus on the tabular case. 

\vspace{-0.1cm}
\textbf{Tabular overcomplete POMDPs}\quad  Consider a distribution $\Scal \to \Delta(\Ocal^K)$: $ \PP(o_{h:h+K-1} \mid s_h; a_{h:h+K-2} )$
which means the conditional distribution of $o_{h:h+K-1}$ given $s_h$ when we execute actions $a_{h:h+K-2}$. 
Let $\PP^{[K]}_h(a_{h:h+K-2}) \in \RR^{O^K  \times S }$ be the corresponding matrix where each entry is $ \PP(o_{h:h+K-1}\mid s_h; a_{h:h+K-2})$. For undercomplete POMDPs, we have $\PP^{[1]}_h = \OO_h$ and $\PP^{[1]}_h$ being full column rank. 
 Note there is no dependence of actions when $K=1$. %

\begin{assum}\label{assum:multi_step}
Given $K \in \mathbb{N}^+$, there exists an (unknown) sequence $a^{\diamond}_{h:h+K-2} \in \Acal^{K-1}$ such that $\PP^{[K]}_h(a^{\diamond}_{h:h+K-2})$ is full-column rank, i.e., $\rank(\PP^{[K]}_h(a^{\diamond}_{h:h+K-2}) ) = S$. 
\end{assum}

This assumption says a multi-step future after executing some (unknown) action sequence with length $K-1$ has some signal of latent states. Executing such a sequence of actions can be considered as performing the procedure of information gathering (i.e., a robot hand with touch sensors can always execute the sequential  actions of touching an object from multiple angles to localize the object before grasping it).
This assumption is weaker than $\rank(\PP^1_h)=S$ and extensively used in the literature on PSRs \citep{boots2011closing,littman2001predictive,singh2004predictive}. %
This assumption permits learning in the overcomplete case $S>O$. Under Assumption \ref{assum:multi_step}, we can show $Q^{\star}_h$ is still linear in some estimable feature. 

\begin{lemma}\label{lem:q_star_multi}
For a overcomplete tabular POMDP, suppose Assumption \ref{assum:multi_step} holds. Define a mapping $z^{[K]}_h:\Scal \to \RR^{\Ocal^K\times \Acal^{K-1}}$ as  $z^{[K]}_h(s_h)=\{\PP(o_{h:h+K-1} \mid s_h; a_{h:h+K-2})\}. $ For $\forall a\in \Acal$, there exists $\theta^{\star}_{a;h}$  such that $Q^{\star}_h(s,a) = \langle \theta^{\star}_{a;h}, z^{[K]}_h(s) \rangle $. 
\end{lemma}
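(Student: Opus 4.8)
The plan is to reduce the claim to a purely linear-algebraic statement: that the one-hot state encoding $\phi(s)$ can be recovered as a \emph{fixed} linear function of the observable feature $z^{[K]}_h(s)$, uniform over $s$. Once this is established, linearity of $Q^{\star}_h(\cdot,a)$ in $z^{[K]}_h$ is immediate. Indeed, in the tabular setting $Q^{\star}_h(\cdot,a):\Scal\to\RR$ is an arbitrary function on a finite set, so it is trivially linear in the one-hot encoding, i.e. $Q^{\star}_h(s,a)=\langle w^{\star}_{a;h},\phi(s)\rangle$ where the $s$-th coordinate of $w^{\star}_{a;h}$ is just $Q^{\star}_h(s,a)$ (this is Assumption~\ref{assum:linear} with $\phi$ one-hot). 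Hence if $\phi(s)=B\,z^{[K]}_h(s)$ for some matrix $B$ independent of $s$, then $Q^{\star}_h(s,a)=\langle w^{\star}_{a;h}, B\,z^{[K]}_h(s)\rangle=\langle B^{\top}w^{\star}_{a;h}, z^{[K]}_h(s)\rangle$, and we may take $\theta^{\star}_{a;h}=B^{\top}w^{\star}_{a;h}$.

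The key step is to exhibit such a $B$ using Assumption~\ref{assum:multi_step}. First I would observe that $z^{[K]}_h(s)$ is a long vector whose coordinates are indexed by pairs $(o_{h:h+K-1},a_{h:h+K-2})$; restricting attention to the single (unknown) good action sequence $a^{\diamond}_{h:h+K-2}$ furnished by the assumption extracts a sub-vector of length $O^K$ whose entries are exactly $\{\PP(o_{h:h+K-1}\mid s; a^{\diamond}_{h:h+K-2})\}_{o_{h:h+K-1}}$. By definition of the matrix $\PP^{[K]}_h(a^{\diamond}_{h:h+K-2})\in\RR^{O^K\times S}$, this sub-vector is precisely its $s$-th column, i.e. $\PP^{[K]}_h(a^{\diamond}_{h:h+K-2})\,\phi(s)$ (the one-hot $\phi(s)$ selecting column $s$). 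Writing $P$ for the fixed coordinate-selection matrix that maps $z^{[K]}_h(s)$ to this $a^{\diamond}$-block, we thus have $P\,z^{[K]}_h(s)=\PP^{[K]}_h(a^{\diamond}_{h:h+K-2})\,\phi(s)$, an identity that holds simultaneously for all $s$.

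The plan then concludes by inverting. Since Assumption~\ref{assum:multi_step} guarantees $\PP^{[K]}_h(a^{\diamond}_{h:h+K-2})$ has full column rank $S$, it admits a left inverse $M^{\dagger}$ with $M^{\dagger}\,\PP^{[K]}_h(a^{\diamond}_{h:h+K-2})=I$. Applying $M^{\dagger}$ to the identity above yields $\phi(s)=M^{\dagger}P\,z^{[K]}_h(s)$, so $B:=M^{\dagger}P$ is the desired fixed linear map and $\theta^{\star}_{a;h}=(M^{\dagger}P)^{\top}w^{\star}_{a;h}$ completes the argument. I do not expect a genuine obstacle: the content is the single observation that the good-future block of $z^{[K]}_h$ equals $\PP^{[K]}_h(a^{\diamond})\,\phi(s)$, after which full column rank delivers the left inverse exactly as left invertibility of $G_h$ did in the one-step case (Assumption~\ref{assum:hse}). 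The only care needed is notational --- setting up the selection matrix $P$ and verifying that $B$ is independent of both $s$ and the (unknown) identity of $a^{\diamond}_{h:h+K-2}$, which it is since $a^{\diamond}_{h:h+K-2}$ is a single sequence fixed across all states at level $h$.
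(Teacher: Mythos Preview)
Your proposal is correct and follows essentially the same route as the paper: write $Q^{\star}_h(s,a)=\langle w^{\star}_{a;h},\phi(s)\rangle$, observe that the $a^{\diamond}$-block of $z^{[K]}_h(s)$ equals $\PP^{[K]}_h(a^{\diamond}_{h:h+K-2})\phi(s)$, use full column rank to left-invert, and then transpose to obtain $\theta^{\star}_{a;h}$. The paper simply says ``$\EE[\psi(o_{h:h+K-1})\mid s_h;a^{\diamond}_{h:h+K-2}]$ is a sub-vector of $z^{[K]}_h(s_h)$'' where you make the selection matrix $P$ explicit, but the content is identical; one small wording caveat is that your $B=M^{\dagger}P$ \emph{does} depend on $a^{\diamond}_{h:h+K-2}$ --- what matters (and what you correctly note) is that $a^{\diamond}_{h:h+K-2}$ is a single fixed sequence at level $h$, so $B$ is a fixed (if unknown) matrix independent of $s$, which is all that the existence statement requires.
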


\vspace{-0.1cm}
\textbf{Non-tabular setting}\quad Now, we return to the non-tabular setting. We define a feature $\psi:\Ocal^K \to \RR^d$. We need the following assumption, which is a generalization of Assumption \ref{assum:multi_step}. 

\begin{assum}\label{assum:multi_step_general}
Given $K \in \mathbb{N}^+$, there exists an (unknown) sequence $a^{\diamond}_{h:h+K-2} \in \Acal^{K-1}$ and a left-invertible matrix $G_h$ such that $\EE[\psi(o_{h:h+K-1}) \mid s_h;a^{\diamond}_{h:h+K-2} ]=G_h \phi(s_h)$. 
\end{assum}

Then, we can ensure $Q^{\star}_h(s,a)$ is linear in some estimable feature. This is a generalization of \pref{lem:q_star_multi}. 

\vspace{-0.05cm}
\begin{lemma}\label{lem:q_star_multi_general}
Suppose Assumption \ref{assum:multi_step_general}. We define a feature $z^{[K]}_h:\Scal \to \RR^{d \Acal^{K-1}}$ where $z^{[K]}_h(s_h)$ is defined as a $d \Acal^{K-1}$-dimensional vector stacking $\EE[\psi(o_{h:h+K-1}) \mid s_h;a_{h:h+K-2}] $ for each $a_{h:h+K-2}\in \Acal^{K-1}$. %
Then, for each $a\in \Acal$, there exists $\theta^{\star}_{a;h} \in \RR^{d \Acal^{K-1}}$ such that $Q^{\star}_h(s,a)=\langle \theta^{\star}_{a;h}, z^{[K]}_h(s) \rangle$
\end{lemma}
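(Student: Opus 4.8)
The plan is to mirror, almost verbatim, the one-step derivation already carried out in \pref{sec:finite}, the only genuinely new ingredient being that the resulting weight vector must be placed into the correct block of the stacked feature $z^{[K]}_h$. First I would invoke the linear-$Q^\star$ assumption (\pref{assum:linear}): for each $a\in\Acal$ there is $w^\star_{a;h}\in\RR^{d_s}$ with $Q^\star_h(s,a)=\langle w^\star_{a;h},\phi(s)\rangle$ for all $s$. Next I would use the left-invertibility supplied by \pref{assum:multi_step_general}, which (exactly as in \pref{assum:hse}) yields a $G^\dagger_h$ with $G^\dagger_h G_h=I$, so that $\phi(s)=G^\dagger_h G_h\phi(s)$ for every $s\in\Scal$.

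Substituting the conditional-embedding identity $G_h\phi(s)=\EE[\psi(o_{h:h+K-1})\mid s;a^{\diamond}_{h:h+K-2}]$ from \pref{assum:multi_step_general} then gives $\phi(s)=G^\dagger_h\,\EE[\psi(o_{h:h+K-1})\mid s;a^{\diamond}_{h:h+K-2}]$, whence
\begin{align*}
Q^\star_h(s,a) &= \langle w^\star_{a;h},\, G^\dagger_h\,\EE[\psi(o_{h:h+K-1})\mid s;a^{\diamond}_{h:h+K-2}] \rangle \\
&= \langle \{G^\dagger_h\}^\top w^\star_{a;h},\, \EE[\psi(o_{h:h+K-1})\mid s;a^{\diamond}_{h:h+K-2}] \rangle .
\end{align*}
This already expresses $Q^\star_h(s,a)$ as a linear functional of the single $d$-dimensional block of $z^{[K]}_h(s)$ indexed by the (unknown) sequence $a^{\diamond}_{h:h+K-2}$.

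The final step is purely organizational: I would define $\theta^\star_{a;h}\in\RR^{d\Acal^{K-1}}$ to be the block-sparse vector whose block at index $a^{\diamond}_{h:h+K-2}$ equals $\{G^\dagger_h\}^\top w^\star_{a;h}$ and whose remaining $\Acal^{K-1}-1$ blocks vanish. Since the inner product $\langle \theta^\star_{a;h}, z^{[K]}_h(s)\rangle$ then selects exactly the $a^{\diamond}_{h:h+K-2}$ block of $z^{[K]}_h(s)$, it reproduces the displayed expression and hence equals $Q^\star_h(s,a)$, as claimed.

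The hard part here is really not hard: the argument is a direct generalization of the one-step case, and the only point worth flagging is that although the good sequence $a^{\diamond}_{h:h+K-2}$ guaranteed by \pref{assum:multi_step_general} is \emph{unknown}, the lemma asserts only the \emph{existence} of $\theta^\star_{a;h}$. Stacking over all $a_{h:h+K-2}\in\Acal^{K-1}$ therefore guarantees that one block carries the correct embedding, and we are free to concentrate all the weight there and zero out the rest without ever identifying $a^{\diamond}$. The tabular \pref{lem:q_star_multi} is then recovered as the special case where $\psi$ is the one-hot encoding of $o_{h:h+K-1}$ and $G_h=\PP^{[K]}_h(a^{\diamond}_{h:h+K-2})$.
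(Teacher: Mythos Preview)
Your proposal is correct and follows essentially the same route as the paper: invoke \pref{assum:linear}, insert $G_h^\dagger G_h=I$, replace $G_h\phi(s)$ by $\EE[\psi(o_{h:h+K-1})\mid s;a^{\diamond}_{h:h+K-2}]$, and then observe that this expectation is one block of $z^{[K]}_h(s)$ so that a block-sparse $\theta^\star_{a;h}$ suffices. The paper's write-up is terser (it just says the relevant expectation ``is a sub-vector of $z^{[K]}_h(s_h)$''), and note that in the appendix the proof labels of \pref{lem:q_star_multi} and \pref{lem:q_star_multi_general} appear to be swapped, but the argument you give is exactly the intended one.
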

\vspace{-0.05cm}

The above lemma suggests that $Q^{\star}_h(s,a)$ is linear in $z^{[K]}_h(s)$ for each $a\in \Acal$. However, since we cannot exactly know $z^{[K]}_h(s)$, we need to estimate this new feature. Comparing to the case with $K=1$, we need to execute multiple ($K-1$) actions. Given a sequence $a_{0:h-1}$, we want to estimate $x^{[K]}(a_{0:h-1}):=z^{[K]}_h(s_h(a_{0:h-1}))$ since our aim is to estimate $Q^{\star}_h(s_h(a_{0:h-1}),a)$ for any $a\in \Acal$ at time step $h$. The feature $x^{[K]}(a_{0:h-1}) \in \RR^{d A^{K-1}}$ is estimated by taking an empirical approximation of $\EE[\psi(o_{h:h+K-1}) \mid s_h(a_{0:h-1});a_{h:h+K-2}] $ 
by rolling out every possible actions of $a_{h:h+K-2}$ after $a_{0:h-1}$. We denote this estimate by $\hat x^{[K]}_h(a_{0:h-1})$. Therefore, we can run the same algorithm as \pref{alg:deterministic_pomdp} and \pref{alg:compute_q_star} by just replacing $\hat x_h(a_{0:h-1})$ with $\hat x^{[K]}_h(a_{0:h-1})$.  %
Compared to the case with $K=1$, when $K>1$, we need to pay an additional multiplicative $A^{K-1}$ factor to try every possible action with length $K-1$. We have the following guarantee.  

\vspace{-0.05cm}
\begin{theorem}[Sample complexity]\label{thm:sample_complexity_multi}

Suppose Assumptions \pref{assum:deterministic}, \pref{assum:linear},\pref{assum:gap_assumption} and \pref{assum:multi_step_general} hold. Assume $\|\psi(o)\|\leq 1$ for any $o\in \Ocal$. Let $\Theta = W/\min_{h}\eta(G_h)$ . 
By properly setting $\lambda,M$ and $\varepsilon$, with probability $1-\delta$, the algorithm outputs the optimal actions $a^{\star}_{0:H-1}$ after using at most the following  number of samples
\vspace{-0.05cm}
\begin{align*}\textstyle 
      \tilde O \prns{   H^5 \Theta^5 A^{3K-1} d^{2}(1/\Delta)^5 \ln(1/\delta) }. 
\end{align*}
\end{theorem}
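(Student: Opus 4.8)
The plan is to reduce the argument to the one-step analysis of \pref{thm:sample_complexity}. By \pref{lem:q_star_multi_general}, Assumption~\pref{assum:multi_step_general} guarantees that $Q^{\star}_h(s,a)=\langle \theta^{\star}_{a;h}, z^{[K]}_h(s)\rangle$ is linear in the stacked feature $z^{[K]}_h(s)\in\RR^{dA^{K-1}}$, exactly as $Q^{\star}_h$ was linear in $\EE_{o\sim\OO_h(s)}[\psi(o)]$ in the $K=1$ case. One can in fact take $\theta^{\star}_{a;h}=\{G^{\dagger}_h\}^{\top}w^{\star}_{a;h}$ supported on the block indexed by $a^{\diamond}_{h:h+K-2}$, so that by left-invertibility of $G_h$, $\|\theta^{\star}_{a;h}\|\leq W/\eta(G_h)\leq\Theta$, and $\Theta$ plays the identical role as before. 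I would therefore run \pref{alg:deterministic_pomdp} and \pref{alg:compute_q_star} verbatim after substituting $\hat x_h(a_{0:h-1})$ by the estimate $\hat x^{[K]}_h(a_{0:h-1})$ of $x^{[K]}_h(a_{0:h-1})=z^{[K]}_h(s_h(a_{0:h-1}))$, and replacing the feature dimension $d$ by the effective dimension $\tilde d:=dA^{K-1}$ wherever the one-step proof invoked it.

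The three-step skeleton then carries over. In the first step I would again induct on $h$, showing \textit{Compute-}$V^{\star}$ returns $V^{\star}_h(s_h(a_{0:h-1}))$: on the good event $\|\hat x^{[K]}_h(a_{0:h-1})\|_{\Sigma^{-1}_h(\Dcal_h)}\leq\varepsilon$, the error $|\langle\hat\theta_{a;h},\hat x^{[K]}_h\rangle-Q^{\star}_h(s_h(a_{0:h-1}),a)|$ splits into a regression-confidence term of the form $C_a\cdot\varepsilon$ and a feature-estimation term $\Theta\,\|\hat x^{[K]}_h-x^{[K]}_h\|$; choosing $\varepsilon$ and $M$ so each is below $\Delta/4$, Assumption~\pref{assum:gap_assumption} forces the selected $\argmax$ to be optimal and closes the induction. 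The bad event and the second step (termination implies the returned sequence is optimal) are unchanged. In the third step the elliptical-potential argument bounds the number of bad events by $O(\tilde d/\varepsilon^2)$ and the number of recursive calls by $O(H^2 A\tilde d/\varepsilon^2)$.

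The only genuinely new accounting concerns the powers of $A$, and three separate factors of $A^{K-1}$ appear on top of the single-step bound. First, estimating $\hat x^{[K]}_h(a_{0:h-1})$ requires rolling out all $A^{K-1}$ length-$(K-1)$ continuations $a_{h:h+K-2}$, each $M$ times, inflating the per-call sample cost by $A^{K-1}$. Second, the effective dimension $\tilde d=dA^{K-1}$ enters the bad-event count linearly, contributing another $A^{K-1}$. Third, since $x^{[K]}_h$ is a stack of $A^{K-1}$ blocks estimated from independent rollouts with $\|\psi(o_{h:h+K-1})\|\leq 1$, blockwise concentration gives $\|\hat x^{[K]}_h-x^{[K]}_h\|^2 = O(A^{K-1}\log(1/\delta)/M)$, so keeping the feature-estimation term below $\Delta/4$ forces $M$ to be a factor $A^{K-1}$ larger. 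The single-step base contributes $d^2 A^2 \Theta^5 H^5 \Delta^{-5}$, with $C_a$ scaling with the observation-feature dimension $d$ (and a constant power of $A$) rather than with $\tilde d$, so the effective dimension enters $\varepsilon$ only through the bad-event count. Multiplying the three $A^{K-1}$ factors against the base $A^2$ yields $A^2\cdot A^{3(K-1)}=A^{3K-1}$, while $d^2$, $\Theta^5$, $H^5$, and $\Delta^{-5}$ remain intact, giving the stated bound.

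I expect the main obstacle to be confirming that the confidence coefficient $C_a$—and hence $\varepsilon$—scales with the observation dimension $d$ and \emph{not} with $\tilde d=dA^{K-1}$. If $C_a$ scaled with $\tilde d$, then setting $\varepsilon\sim\Delta/\mathrm{poly}(\tilde d)$ would inject a fourth factor of $A^{K-1}$ through $1/\varepsilon^2$ and inflate the exponent to $A^{4K-2}$. Avoiding this requires the errors-in-variables regression bound, restricted to the covered directions via $\|\hat x^{[K]}_h\|_{\Sigma^{-1}_h(\Dcal_h)}\leq\varepsilon$, to be controlled by the feature-error scale (already folded into $M$) and $\Theta$ rather than by the action-stacked ambient dimension. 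Establishing this clean separation, together with the blockwise concentration that isolates exactly three $A^{K-1}$ contributions, is the crux; once it is in place the remainder is a verbatim repetition of the one-step analysis with $d$ replaced by $\tilde d$ and the extra $A^{K-1}$ rollout cost folded into the sample count.
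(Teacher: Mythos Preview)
Your reduction to the single-step analysis is exactly the paper's approach: its (very terse) proof simply says to multiply $M$ by $A^{K-1}$ (the rollout cost for estimating $\hat x^{[K]}_h$ over all length-$(K{-}1)$ continuations) and to replace $d$ by $\tilde d=dA^{K-1}$ everywhere, which supplies the remaining two $A^{K-1}$ factors since $d$ enters the one-step bound quadratically. Your third factor is correctly located in $M$, but the mechanism is not the block-norm concentration you cite: because $\theta^{\star}_{a;h}$ is supported on the single $a^{\diamond}$-block, $|\langle\theta^{\star}_{a;h},\hat x^{[K]}_h-x^{[K]}_h\rangle|$ concentrates at the \emph{same} $\Theta/\sqrt{M}$ rate as when $K=1$; the extra $A^{K-1}$ in $M$ instead enters through the constraint $\Theta/\sqrt{M}\lesssim\Delta/\sqrt{N'\varepsilon}$, since $N'\propto\tilde d$. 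This also dissolves your stated obstacle: in the paper's proof $\varepsilon=\Delta/(6\Theta)$ carries \emph{no} dimension dependence whatsoever, so there is no fourth $A^{K-1}$ lurking in $1/\varepsilon^2$.
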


Comparing to \pref{thm:sample_complexity}, we would incur additional $O(A^K)$. In the tabular case, noting $d=O^{K}$, we would additionally incur $O(O^K)$. Computationally, we also need to pay $O(A^K)$. Hence, there is some tradeoff between the weakness of the assumption and the sample/computational complexity.

\section{Summary}
We propose a computationally and statistically efficient algorithm on large-scale POMDPs where transitions are deterministic and emission distributions have conditional mean embeddings.

\section*{Acknowledgement}

JDL acknowledges support of the ARO under MURI Award W911NF-11-1-0304,  the Sloan Research Fellowship, NSF CCF 2002272, NSF IIS 2107304, ONR Young Investigator Award, and NSF CAREER Award 2144994. MU is Supported by Masason Foundation.

\bibliography{refs}
\bibliographystyle{plain}

\newpage 
\appendix

\section{Proof of Lower bounds} \label{appx:lower_bounds} 

\subsection{Proof of \pref{thm:POMDP_lower_bound}}
 The proof of \pref{thm:POMDP_lower_bound} is based on the lower bounds in \cite{weisz2021tensorplan}, and consists of an underlying deterministic dynamics (on the state space) with stochastic rewards. We recall the following result from \cite{weisz2021tensorplan}: 
\begin{theorem}[Theorem 1.1 \cite{weisz2021tensorplan} rephrased, Lower bound for the MDP setting] 
\label{thm:gellerts_lower_bound}
Suppose the learner has access to the features \(\varphi: \Scal \times \Acal \to \mathbb{R}^{d}\) such that \(\max_{s} \crl{\nrm{\psi(s)} \leq 1}\). Furthermore, let \(W, d, H\) be large enough constants There exists a class \(\Mcal\) of MDPs with deterministic transitions, stochastic rewards, action space \(\Acal\) with \(\abs*{\Acal} = d^{1/4} \wedge H^{1/2}\), and linearly realizable \(Q^*\) w.r.t. feature \(\varphi\) (i.e. \(Q^*(s, a) = (w^*)^\top \phi(s, a)\) with \(\nrm{w^*} \leq W)\), such that any online planner that even has the ability to query a simulator at any state and action of its choice, must query at least $\Omega\big(2^{\Omega(d^{1/4} \wedge H^{1/2})}\big)$ many samples (in expectation) to find an \(1/10\)-optimal policy for some MDP in this class. 

Since learning is harder than planning, the lower bound also extends to the online learning setting. 
\end{theorem}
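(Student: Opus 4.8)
The plan is to exhibit a family $\Mcal$ of deterministic-transition, stochastic-reward MDPs that are statistically hard to tell apart, yet each of which admits a linearly realizable $Q^\star$ with a uniformly bounded weight vector. I would index the instances by a hidden combinatorial ``secret'' (for concreteness, a hidden sequence of distinguished actions, equivalently a hidden subset or vector) drawn from a set of cardinality $2^{\Omega(m)}$, where $m = |\Acal| = d^{1/4}\wedge H^{1/2}$. All instances would share the same states, the same deterministic dynamics, and the same feature map $\varphi$, differing only in \emph{where} the informative (high-mean) reward is placed. Acting $1/10$-optimally on a given instance would then require the planner to have effectively identified the secret, so a lower bound on the number of queries needed to locate the secret with constant probability yields the claimed sample bound.

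The crux is the feature construction, which must simultaneously keep $Q^\star$ linear and hide the secret. First I would lay the MDP out in $H$ layers and arrange that the optimal return from a state equals a \emph{product} of per-step ``alignment'' factors, each a low-dimensional inner product that tests whether the action just taken is consistent with the secret. Lifting this product of inner products into a tensor (Kronecker) product space turns it into a single linear functional, so that $Q^\star(s,a) = \langle w^\star, \varphi(s,a)\rangle$ with $\varphi$ the tensorized base feature and $w^\star$ the tensorization of the secret; keeping all base features in the unit ball keeps $\|w^\star\| \leq W$. The tensor order is governed by $m$: the dimension of the lifted space scales like $m^4$ (forcing $m = O(d^{1/4})$), while a horizon of order $m^2$ is needed to embed the test gadget (forcing $m = O(H^{1/2})$), which is exactly where the balance $m = d^{1/4}\wedge H^{1/2}$ originates.

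Next I would establish statistical indistinguishability. Because transitions are deterministic and the only randomness is in the (bounded, say Bernoulli) rewards, any query returns a reward whose mean differs across instances only at the states that actually reveal the secret; everywhere else the instances are identical. A needle-in-a-haystack / likelihood-ratio argument, taken over the uniform prior on the $2^{\Omega(m)}$ secrets, then shows that any possibly adaptive algorithm---even one empowered to query the simulator at arbitrary states and actions---making fewer than $2^{\Omega(m)}$ queries produces a query--response transcript whose law has total-variation distance $o(1)$ across the hidden instances, and so cannot locate the secret with probability exceeding $9/10$. Hence with probability at least $1/10$ it fails, and by the reward construction its output is $1/10$-suboptimal, giving the stated bound $\Omega\big(2^{\Omega(d^{1/4}\wedge H^{1/2})}\big)$ in expectation. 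The closing sentence---that the bound transfers from planning with a generative model to online learning---is immediate, since an online learner can only generate samples along reachable trajectories and is therefore no stronger than a planner allowed arbitrary simulator queries.

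The main obstacle is the tension at the heart of the second step: the feature dimension must be small enough that linear realizability is a genuine constraint, yet the family must remain rich enough to be information-theoretically hard. The tensor-product lift is what reconciles these, but making $Q^\star$ \emph{exactly} linear at every layer---including the Bellman backups through the deterministic dynamics---while keeping every base feature and the secret bounded, and simultaneously verifying that a sub-$2^{\Omega(m)}$ query budget reveals negligible information about the secret, is the delicate quantitative core. This is precisely where the construction of \cite{weisz2021tensorplan} does the work, and I would invoke it to supply the explicit gadget and the matching parameter accounting.
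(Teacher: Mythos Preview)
The paper does not prove this statement at all: it is explicitly introduced as ``Theorem 1.1 \cite{weisz2021tensorplan} rephrased,'' and the surrounding text says only ``We recall the following result from \cite{weisz2021tensorplan},'' after which the theorem is used as a black box in the proof of \pref{thm:POMDP_lower_bound}. There is no accompanying argument in the paper beyond the one-line observation that the suboptimality gap in that construction is exponentially small in $d$.

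Your proposal goes well beyond this: you sketch the actual mechanics of the \cite{weisz2021tensorplan} construction (tensor-product lift of per-step alignment factors, a secret of cardinality $2^{\Omega(m)}$ with $m = d^{1/4}\wedge H^{1/2}$, and a needle-in-a-haystack indistinguishability argument over Bernoulli rewards). That outline is faithful in spirit to the cited reference, and your final sentence---that you would ``invoke \cite{weisz2021tensorplan} to supply the explicit gadget and the matching parameter accounting''---is in fact \emph{all} the present paper does. So there is no discrepancy to flag; rather, you have over-delivered relative to the paper, which treats this theorem purely as an imported lemma. If the goal is to match the paper, a single sentence citing \cite{weisz2021tensorplan} suffices.
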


An important thing to note about the above construction is that the suboptimality-gap is exponentially small in \(d\), i.e.  \(\Delta = O(A^{-d})\). The above lower bound can be immediately extended to the POMDP setting. The key idea is to encode the stochastic rewards as "stochastic observations" while still preserving the linear structure. However, one needs to be careful of the fact that in our setting the features only depend on the states whereas in the above lower bound, the features depend on both state and actions. This can be easily fixed for finite action setting as shown in the following.

\begin{proof} %

The proof follows by lifting the class of MDPs \(\Mcal\) in Theorem~\ref{thm:gellerts_lower_bound} to POMDPs. Consider any MDP \(M = (\Scal, \Acal,\TT, H, d', s_0, \varphi) \in \Mcal\). Note that the construction of \(\Mcal\) guarantees that for \(M \in \Mcal\), 
\begin{enumerate}[label=\((\alph*)\)]
    \item There exists a \(w^*\) with \(\nrm{w^*} \leq W\) such that for any \(s\), \(Q^*(s, a) = (w^*)^\top \varphi(s, a). \) \item There exists a stochastic reward function \(\YY(s, a)\) for any \(\sa\). 
    \item $|\Acal| = d'^{1/4} \wedge H^{1/2}$. 
\end{enumerate}

In the following, for each MDP \(M \in \Mcal\), we define a corresponding POMDP $P_M$. The underlying dynamics of the state space remains the same. The feature vector for any \(s \in \Scal\) is defined as 
\begin{align*}
\phi(s) = \prn{\prn{\varphi(s, a), {r(s, a)}}_{a \in \Acal}}. 
\end{align*}
where the dimensionality of \(\phi\) is given by \(d = (d' + 1) \abs{A} = (d' + 1) (d'^{1/4} \wedge H^{1/2}) \leq 2 (d'^{5/4} \wedge d'H^{1/2})\). Furthermore, we define \(w^*_a = \prn{(w^* \ind\crl{a' = a}, 0)_{a' \in \Acal}} \in \RR^d\) and note that
\begin{align}
Q^*(s, a) = w^\top \varphi(s, a) &= \tri*{\prn{(w \ind\crl{a' = a}, 0)_{a' \in \Acal}}, \prn{\prn{\varphi(s, a), {r(s, a)}}_{a \in \Acal}}} \notag \\ 
&= w_a^\top \phi(s), \label{eq:lower_bound_POMDP1} 
\end{align}  and thus the above feature maps satisfies the linear \(Q^*\) property w.r.t. the features \(\phi\). By \pref{thm:gellerts_lower_bound}, \(\nrm{w_a} \leq W\) (Assumption~\ref{assum:linear} satisfied). ~\\ 

We next define the emission distribution \(\OO\) and the feature maps \(\psi: \Ocal \to \RR^d\). At any state \(s \), we have stochastic observations  \(o\) of the form 
\begin{align*}
    \psi(o) =  \prn{\prn{\varphi(s, a), \YY(s, a)}_{a \in \Acal}}, 
\end{align*}
Since the rewards are stochastic, the observations above are also stochastic and clearly the emission distribution \(\OO\) is partitioned into \(\abs{\Scal}\) many components since each \(o \in \Ocal\) is associated with only one state \(s \in \Scal\). Furthermore, the above definition satisfies the relation s 
\begin{align}
\En_{o \sim \OO(\cdot \mid s)} \brk*{\psi(o)} = \phi(s). \label{eq:lower_bound_POMDP2}
\end{align}

Clearly, the above shows that Assumption~\ref{assum:hse} holds. Finally, Assumption~\ref{assum:deterministic} is satisfied by the construction in \pref{thm:gellerts_lower_bound}. 
Thus, the POMDP \(P_M\) constructed above satisfies Assumption~\ref{assum:hse}, \ref{assum:deterministic} and \ref{assum:linear}. We can similarly lift every MDP \(M \in \Mcal\) to construct the POMDP class \(\Pcal = (P_M)_{M \in \Mcal}\). Clearly, the observations in the POMDP (and the corresponding feature vectors) do not reveal any new information to the learner that can not be accessed by making \(\Acal\) many calls in the underlying MDP at the same state (which due to deterministic state space dynamics can be simulated by taking all the other actions same till the last step, and then trying all other actions at the last step). Thus, from the query complexity lower bound in Theorem~\ref{thm:gellerts_lower_bound}, we immediately get that there must exist some POMDP in the class \(\Pcal\) for which we need to collect 
\begin{align*}
\Omega \Big({\frac{1}{\abs{\Acal}} 2^{\Omega(d'^{1/4} \wedge H^{1/2})}}\Big)
\end{align*}
many samples (in expectation) in order to find an \(1/10\)-optimal policy, where the \(\Omega(\cdot)\) notation hides polynomial dependence on \(W, d\) and \(H\). Plugging in the relation \(d = d'^{5/4} \wedge H^{1/2}\) in the above, we get the lower bound
\begin{align*}
\Omega \Big(\frac{1}{d^{1/5} \wedge H^{1/2}} 2^{\Omega \prn*{d^{1/5} \wedge H^{1/2}}}\Big). 
\end{align*}
\end{proof}

\subsection{Proof of \pref{thm:POMDP_lower_bound_2}}
The proof of \pref{thm:POMDP_lower_bound_2} is based on the lower bounds in \citep{wang2021exponential}, and consists of an underlying MDP with stochastic transitions (on the state space) and deterministic rewards. We recall the following result from \citep{wang2021exponential}.

\begin{theorem}[Theorem 1 \citep{wang2021exponential} rephrased, Lower bound for the MDP setting] 
\label{thm:lower_bound_2_subresult}
Fix any \(\Delta > 0\), and consider any online RL algorithm \(\text{ALG}\) that takes the state feature mapping \(\varphi: \Scal \to \RR^d\) and action feature mapping \(\chi: \Acal \to \RR^d\) as input. There exists a pair of state and action feature mappings  \((\varphi, \chi)\) with \(\max_{s, a} \left\{\nrm{\varphi(s), \chi(a)} \right\} \leq 1\), and an MDP $(\Scal, \Acal, \TT, H, r)$ such that:  
\begin{enumerate}[label=(\alph*), leftmargin=8mm]
    \item (Linear \(Q^*\) property) There exists an \(M \in \RR^{d \times d}\) such that \(Q^*(s, a) = \varphi(s)^T M \chi(a)\) for any \(\sa\). Furthermore, \(\nrm{M} \leq B\) for some universal constant \(B\). 
    \item (Suboptimality gap) There exists a \(\Delta > 0\) such that \(\min_{h \in [H], s, a} \left\{ \text{gap}_h(s, a) \mid   \text{gap}_h(s, a) > 0 \right\} = \Delta\) where \(\text{gap}_h(s, a)\) is defined as \(V_h^*(s) - Q^*_h(s, a)\). 
    \item The state space dynamics \(\TT\) is not deterministic. 
\end{enumerate}
Furthermore, \(ALG\) requires at least \(\Omega\left(2^{\Omega(\min\crl{d, H})}\right)\) samples to find an \(1/20\)-suboptimal policy for this MDP with probability at least \(1/10\). 
\end{theorem}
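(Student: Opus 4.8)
The plan is to prove this lower bound by exhibiting a family of MDPs indexed by a hidden parameter, showing that every member simultaneously satisfies linear realizability of $Q^\star$ and a constant suboptimality gap, and then arguing information-theoretically that no algorithm issuing $2^{o(\min\crl{d,H})}$ trajectories can identify a near-optimal policy. This is exactly the structure of \citep{wang2021exponential}, and the role of the stochastic transitions is essential: it is what separates this exponential bound from the polynomial sample complexity achievable under \emph{deterministic} dynamics (\pref{thm:sample_complexity}). Dropping determinism is precisely the single change this theorem makes relative to that positive result.

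The first ingredient I would set up is a large packing: by a standard probabilistic argument there exist $N = 2^{\Omega(d)}$ unit vectors $v_1, \ldots, v_N \in \RR^d$ whose pairwise inner products are bounded by a small absolute constant $\gamma$ (say $\gamma \le 1/4$). I would use these near-orthogonal vectors both to index actions and to build the feature maps $\varphi$ and $\chi$. The MDP is laid out over $H$ levels with a single rewarding terminal ``good'' state and an absorbing ``dead'' state. The hidden parameter is a distinguished action sequence (equivalently a distinguished vector from the packing at each reachable state); taking the hidden-correct action advances the agent toward the reward, while every other action routes it, with high probability, into the dead state. The features are chosen so that $Q^\star(s,a) = \varphi(s)^\top M \chi(a)$ holds \emph{exactly}: near-orthogonality guarantees that $\varphi(s)^\top M \chi(a)$ is large only when $a$ is the correct action and is uniformly smaller (by a constant) for all other actions, which delivers both the bilinear realizability with $\nrm{M} \le B$ (property (a)) and the constant gap $\Delta$ (property (b)). Stochasticity enters through a ``leaking'' transition: even along the correct action there is a fixed probability of being absorbed, ensuring $\TT$ is genuinely non-deterministic (property (c)) and that a single trajectory reveals information about essentially one state-action pair.

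The core of the argument is the indistinguishability step. Placing a uniform prior over the hidden parameter, I would show that until an algorithm actually plays the correct action at a given level, its observed trajectory distribution is (almost) independent of which of the $N$ candidates is hidden, because the leaking dynamics send every ``probe'' to the dead state with the same law regardless of the hidden choice. Each trajectory can therefore rule out at most $O(1)$ candidate actions per level, so a standard counting/averaging (or Fano/Le Cam) argument shows that identifying the hidden path --- and hence recovering any $1/20$-optimal policy --- with constant probability requires $2^{\Omega(\min\crl{d,H})}$ samples; the $\min\crl{d,H}$ arises because the number of hidden choices grows as $\min\crl{N, (\text{branching})^H}$.

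The main obstacle is engineering the construction so that all three properties coexist. Linear realizability is a \emph{global} constraint on $Q^\star$ across the whole family, yet I simultaneously need (i) a \emph{constant} gap that does not shrink with $d$ or $H$, and (ii) stochastic transitions whose marginal trajectory law carries negligible information about the hidden parameter. These pull against each other: too much leakage destroys the gap or the realizability of $Q^\star$, while too little leakage lets the agent localize the correct actions and collapses the bound to polynomial. Threading this needle --- choosing the leak probability, the packing constant $\gamma$, and the reward scale so that realizability holds exactly for every family member while observations remain nearly uninformative --- is the delicate heart of the proof, and is where I would follow the precise parameter choices of \citep{wang2021exponential}.
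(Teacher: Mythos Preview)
Your high-level sketch of the Wang et al.\ lower bound is broadly correct, and the paper, like you, simply defers to \citep{wang2021exponential} for the packing construction, the leaking dynamics, and the indistinguishability argument. However, you have missed the only technical point the paper's proof actually supplies: the \emph{separation} of the joint feature $\varphi(s,a)$ used in \citep{wang2021exponential} into separate state features $\varphi(s)$ and action features $\chi(a)$ yielding the bilinear form $\varphi(s)^\top M \chi(a)$. In Wang et al.'s construction one has $Q^\star(s,a) = (\langle v_s, u_a\rangle + 2\alpha)\langle v_s, v_{a^\star}\rangle$, which is a \emph{product of two inner products} and therefore not bilinear in $v_s$ and $u_a$ (or $v_a$). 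Your proposal simply asserts that ``the features are chosen so that $Q^\star(s,a) = \varphi(s)^\top M \chi(a)$ holds exactly,'' but this is precisely the step that requires work and is the entire content of the paper's proof. The paper resolves it by a tensor lift: setting $\varphi(s) = \mathrm{vec}([1,v_s]\otimes[1,v_s])$ and $\chi(a) = \mathrm{vec}([1,v_a]\otimes[1,v_a])$ turns the quartic expression into a bilinear one in the lifted features, from which the bound $\|M\|\le B$ and the gap are inherited directly from \citep{wang2021exponential}. Without this observation you have not established property~(a) of the statement.

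A secondary inaccuracy: you describe the hidden parameter as ``a distinguished action sequence (equivalently a distinguished vector from the packing at each reachable state).'' In Wang et al.'s construction the hidden parameter is a \emph{single} special action $a^\star$; the state-dependence of the optimal action arises through the inner product $\langle v_s, v_{a^\star}\rangle$, not through a combinatorial per-level choice of a hidden path.
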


\begin{proof} The proof is almost identical to the proof of Theorem 1 in \citep{wang2021exponential}. However, there is a subtle difference in the feature mapping considered.  \citep{wang2021exponential} consider feature mappings that take both \(s\) and  \(a\) as inputs, however for our result we need separate  state features and actions features. A closer analysis of \citep{wang2021exponential} reveals that one can in-fact replicate their lower bounds with separate state features and action features. In particular, note that the result in \citep{wang2021exponential} follows by associating a vector \(v_s \in \RR^{d'}\)  with each state s and a vector \(u_a \in \RR^{d'}\) with each action \(a\) such that: 
\begin{align*}
Q^*(s, a) = \prn*{\tri{v_s, u_a} + 2 \alpha} \tri{v_s, v_{a^*}}, 
\end{align*} where \(\alpha\) is a universal constant and \(a^*\) is a fixed special action. Clearly, we can define the feature \(\varphi(s) = \text{vec}([1, v_s] \otimes [1, v_s]) \in \RR^{d}\),  the feature \(\chi(a) =\text{vec}([1, v_a] \otimes [1, v_a]) \in \RR^{d}\) and the matrix $M \in \RR^{d \times d}$ with \(d = 2 d' + 2\) such that 
\begin{align*}
Q^*(s, a) = \varphi(s)^\top M \chi(a). 
\end{align*}

The minimum suboptimality-gap assumption and the lower bound now follow immediately from their result. We refer the reader to \citep{wang2021exponential} for complete details of the construction. 
\end{proof}

Note that the above construction has stochastic state space dynamics. The above lower bound can be immediately extended to our POMDP settings as shown below. 

\begin{proof} %
The POMDP that we construct is essentially the MDP given in \pref{thm:lower_bound_2_subresult}. We define the features \(\phi(s) = \varphi(s)\) (where \(\varphi\) are the features defined in \pref{thm:lower_bound_2_subresult})

We set the observations to exactly contain the underlying state, i.e. \(\Ocal = \Scal\) and \(\OO[o, s] = \ind(s = o)\). Further, for any \(o\), we define the features \(\psi(o) = \phi(s)\) where $s$ is the corresponding state for \(o\). Clearly, Assumption~\ref{assum:hse} is satisfied. 

We next note that \(Q^*(s, a) = \phi(s)^\top M \chi(a) = w_a^\top \phi(s)\), where \(w_a = M \chi(a)\) and satisfies \(\nrm{w_a} \leq \nrm{M} \nrm{\chi(a)} \leq B\). Thus, Assumption~\ref{assum:linear} is satisfied. Finally, Assumption~\ref{assum:gap_assumption} is satisfied by the statement of \pref{thm:lower_bound_2_subresult}. Finally, note that learning in this POMDP is exactly equivalent to learning in the corresponding MDP and thus the lower bound extends naturally. 

\end{proof}

\section{Learning in Infinite Dimensional HSE-POMDPs} \label{sec:infinite}

We consider the extension to infinite-dimensional RKHS.  We introduce several definitions, provide an algorithm and show the guarantee. To simplify the notation, we assume $\OO_h = \OO$ for any $h \in [H]$.

Let $k_{\Scal}(\cdot,\cdot)$ be a (positive-definite) kernel over a state space. We denote the corresponding RKHS and feature vector as $\Hcal_{\Scal}$ and $\phi(\cdot)$, respectively. We list several key properties in RKHS \cite[Chapter 12]{wainwright2019high}. First, for any $f \in \Fcal$, there exists $\{a_i\}$ such that $f=\sum_i a_i \phi_{i}$ and the following holds $\EE_{s \sim u_S(s)}[\phi_i(s)\phi_i(s)]=\mathrm{I}(i=j)\mu_i$ where $u_S(s)$ is some distribution over $\Scal$.  
Besides, we have $k(\cdot,\cdot) =\sum_i \phi_i(\cdot)\phi_i(\cdot)$ and  the inner product of $f,g$ in $\Hcal_{\Scal}$ satisfies $    \langle f, g\rangle_{\Hcal_{\Scal}} = \langle \sum_i a_i \phi_{i}, \sum_i b_i \phi_{i}\rangle_{\Hcal_{\Scal}} = \sum_i a_i b_i. $ Similarly, let $k_{\Ocal}(\cdot,\cdot)$ be a (positive-definite) kernel over the observation space with feature $\psi(o)$ such that $\EE_{o \sim u_O(o) }[\psi_i(o)\psi_j(o)]=\mathrm{I}(i=j)\nu_i $ where $u_{\Ocal}(\cdot)$ is some distribution over $\Ocal$. %

Then, the new kernel $\EE_{o \sim z(s),o'\sim z(s')}[k_{\Ocal}(o,o')]$ over $\Scal\times \Scal$ is induced. We denote this kernel by $\bar k(\cdot,\cdot)$ and the corresponding RKHS by $\Hcal_{\bar \Scal}$.

Now, we introduce the following assumption which corresponds to Assumption \pref{assum:hse}. 

\begin{assum}[Existence of linear mean embedding and its well-posedness]\label{assum:rkhs}
Suppose $\Hcal_{\bar \Scal} = \Hcal_{\Scal}$ and  
 \begin{align}\label{eq:technical}
   \sup_{\|p\| \leq 1} \frac{p^{\top}\EE_{u \sim u_S(s)}[\phi(s) \phi(s)^{\top}]p}{p^{\top}\EE_{u \sim u_S(s)}[\EE_{o \sim \OO}[ \psi(o)\mid s]\EE_{o \sim \OO}[ \psi(o)\mid s]^{\top}]p}<\iota^2. 
\end{align}
\end{assum} 

The first assumption states that for any $a^{\top}\phi(s)$ in $\Hcal_{\Scal}$, there exists $b^{\top}\EE_{s \sim \OO}[\psi(o)]$ and the vice versa holds. This is a common assumption to ensure the existence of linear mean embedding operators \citep{song2010hilbert,chowdhury2020no}. Equation \ref{eq:technical} is a technical condition to impose constraints on the norms. For example, when $\psi$ and $\phi$ are finite-dimensional, we can obtain this condition by setting $\iota=1/( \min_h \eta(K_h))$. We remark a similar assumption is often imposed in the literature on instrumental variables \citep{dikkala2020minimax}. Under the above assumption, we can obtain the following lemma. 

\begin{lemma}\label{lem:rkhs}
Given $a^{\top}\phi(s) \in \Hcal_{\Scal}$ s.t. $\|a\|\leq 1$, there exists $b$ s.t. $a^{\top}\phi(s)=b^{\top}\EE_{o \sim O(s)}[\psi(o)]$ and $\|b\|\leq c$.  
\end{lemma}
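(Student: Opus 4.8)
The plan is to deduce the lemma from a single norm comparison between the two RKHSs. Introduce the conditional mean embedding $\bar\psi(s) := \EE_{o\sim\OO(s)}[\psi(o)]$; since $\bar k(s,s')=\langle\bar\psi(s),\bar\psi(s')\rangle$, the map $\bar\psi$ is exactly the feature map generating $\Hcal_{\bar \Scal}$. Given $f=a^\top\phi$ with $\|a\|\le 1$, the identity $\|a\|=\|f\|_{\Hcal_{\Scal}}$ coming from the series representation recorded in the preliminaries gives $\|f\|_{\Hcal_{\Scal}}\le 1$, and the hypothesis $\Hcal_{\bar \Scal}=\Hcal_{\Scal}$ of \pref{assum:rkhs} guarantees $f\in\Hcal_{\bar \Scal}$. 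Hence $f$ admits a representer $b$ with $f(s)=\langle b,\bar\psi(s)\rangle=b^\top\EE_{o\sim\OO(s)}[\psi(o)]$; taking the minimal-norm representer yields $\|b\|=\|f\|_{\Hcal_{\bar \Scal}}$. It therefore suffices to establish the comparison $\|f\|_{\Hcal_{\bar \Scal}}\le\iota\,\|f\|_{\Hcal_{\Scal}}$, since then $\|b\|\le\iota$ and we may take $c=\iota$.

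To prove this comparison I would pass to the ambient space $L^2(u_S)$ and express each RKHS norm through the inverse of the corresponding covariance (integral) operator. Writing $\Sigma_\phi=\EE_{s\sim u_S}[\phi(s)\phi(s)^\top]$ and $\Sigma_{\bar\psi}=\EE_{s\sim u_S}[\bar\psi(s)\bar\psi(s)^\top]$, the standard Mercer identities give $\|f\|_{\Hcal_{\Scal}}^2=\langle f,\Sigma_\phi^{-1}f\rangle$ and $\|f\|_{\Hcal_{\bar \Scal}}^2=\langle f,\Sigma_{\bar\psi}^{-1}f\rangle$, where the inner products and inverses are taken in $L^2(u_S)$. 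The well-posedness condition \eqref{eq:technical} is precisely the statement that $\Sigma_\phi\preceq\iota^2\Sigma_{\bar\psi}$. By operator monotonicity (antitonicity) of inversion on positive operators this reverses to $\Sigma_{\bar\psi}^{-1}\preceq\iota^2\Sigma_\phi^{-1}$, and substituting yields $\|f\|_{\Hcal_{\bar \Scal}}^2=\langle f,\Sigma_{\bar\psi}^{-1}f\rangle\le\iota^2\langle f,\Sigma_\phi^{-1}f\rangle=\iota^2\|f\|_{\Hcal_{\Scal}}^2$, which is the desired comparison. As a consistency check, specializing to $\bar\psi=G\phi$ recovers the minimal-norm solution $b=G(G^\top G)^{-1}a$ with $\|b\|\le\|a\|/\eta(G)$, recovering the finite-dimensional value $\iota=1/\eta(G)$ stated after the assumption.

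The hard part will be making the infinite-dimensional operator calculus rigorous. First I would verify that $\Sigma_\phi,\Sigma_{\bar\psi}$ are well-defined, self-adjoint, positive, compact operators so that the Mercer identities $\|\cdot\|_\Hcal^2=\langle\cdot,\Sigma^{-1}\cdot\rangle$ hold on the RKHS. The most delicate point is translating \eqref{eq:technical}, which is phrased for coefficient vectors $p$ living in two a priori different feature spaces, into a genuine operator inequality on one common space; this is exactly where the identification $\Hcal_{\bar \Scal}=\Hcal_{\Scal}$ enters, letting me regard $k_{\Scal}$ and $\bar k$ as two kernels on $\Scal$ sharing a single eigenbasis of $L^2(u_S)$ and compare their integral operators there. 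Finally, because the inverses $\Sigma^{-1}$ are generally unbounded, I would apply operator monotonicity to the regularized operators $\Sigma+\epsilon I$ (or to finite spectral truncations) and pass to the limit $\epsilon\downarrow 0$, using that $f\in\Hcal_{\Scal}$ lies in the domain of $\Sigma_\phi^{-1}$ so that $\langle f,\Sigma_\phi^{-1}f\rangle$ is finite. With these technical points in place the comparison holds and the lemma follows with $c=\iota$.
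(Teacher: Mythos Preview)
Your approach is genuinely different from the paper's, and the step you correctly flag as ``most delicate'' is in fact where the argument does not close. The paper never passes to $L^2(u_S)$ or invokes Mercer: it expands $a^\top\phi(\cdot)$ in the reproducing kernel of $\Hcal_{\bar\Scal}$ as $\sum_i\alpha_i\bar k(\cdot,s^{[i]})$, reads off the explicit representer $b=\sum_i\alpha_i\EE_{o\sim\OO(s^{[i]})}[\psi(o)]$, and then applies condition~\eqref{eq:technical} \emph{once, at the single coefficient vector} $p=b$. This yields the chain $a^\top\Sigma_\phi a=b^\top\Sigma_{\bar\psi}b\geq\iota^{-2}\,b^\top\Sigma_\phi b$ entirely on the feature/coefficient side, with no operator inversion, no Mercer identity, and no regularization-and-limit argument.

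Your route requires the inequality between \emph{integral} operators on $L^2(u_S)$, whereas condition~\eqref{eq:technical} is an inequality between \emph{covariance} operators on the coefficient space, and these are not interchangeable. Writing $A_\phi:p\mapsto p^\top\phi(\cdot)$ and $A_{\bar\psi}:p\mapsto p^\top\bar\psi(\cdot)$ as maps from the common coefficient space into $L^2(u_S)$, the covariance is $A^*A$ while the integral operator is $AA^*$; condition~\eqref{eq:technical} gives $A_\phi^*A_\phi\preceq\iota^2A_{\bar\psi}^*A_{\bar\psi}$, but your Mercer-plus-antitonicity step needs $A_\phi A_\phi^*\preceq\iota^2A_{\bar\psi}A_{\bar\psi}^*$. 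That implication fails even when the two RKHSs coincide as sets: in two dimensions take $A_\phi=\mathrm{diag}(1,\epsilon)$ and $A_{\bar\psi}=\bigl(\begin{smallmatrix}0&\epsilon\\1&0\end{smallmatrix}\bigr)$ with $0<\epsilon<1$; both have range $\RR^2$, and $A_\phi^*A_\phi=A_{\bar\psi}^*A_{\bar\psi}=\mathrm{diag}(1,\epsilon^2)$, yet $A_\phi A_\phi^*=\mathrm{diag}(1,\epsilon^2)\not\preceq\mathrm{diag}(\epsilon^2,1)=A_{\bar\psi}A_{\bar\psi}^*$. The set equality $\Hcal_\Scal=\Hcal_{\bar\Scal}$ also does not force a shared $L^2$-eigenbasis (two positive integral operators with identical range need not commute), so the proposed repair does not go through. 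The paper's one-point use of~\eqref{eq:technical} is exactly what sidesteps this obstruction.
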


When linear $Q^{\star}$-assumption holds as $Q^{\star}_h(\cdot,a) \in \Hcal_{\Scal} (\forall a \in \Acal)$,  since  $Q^{\star}_h(\cdot,a) \in \Hcal_{\bar \Scal} (\forall a \in \Acal)$ from the assumption, we can run a kernel regression corresponding to $\bar k(\cdot,\cdot)$ to estimate $Q^{\star}_h(\cdot,a)$. 
The challenge here is we cannot directly use $\bar k(\cdot,\cdot)$ in $\Hcal_{\bar \Scal}$. We can only obtain an estimate of $\bar k(\cdot,\cdot)$. More concretely, given $a_{0:h-1}$, an estimate of $\bar k(s_h(a_{0:h-1}),s_h(a_{0:h-1}))$ is given by 
\begin{align*}
     \hat k \prns{ Z(a_{0:h-1}),  Z(a'_{0:h-1})} =  1/M^2 \sum_{z_i \in Z(a_{0:h-1} ), z'_j \in Z(a'_{0:h-1})} k_{\Ocal}( z_i ,z'_j)
\end{align*}
where $Z(a_{0:h-1})$ is a set of i.i.d $M$ samples following $\OO(\cdot \mid s_h( a_{0:h-1}) )$ and $Z(a'_{0:h-1})$ is a set of i.i.d $M$ samples following $\OO(\cdot \mid s_h( a'_{0:h-1}) )$. 

\subsection{Algorithm}

\begin{algorithm}[!th] 
\caption{Deterministic POMDP} \label{alg:deterministic_pomdp_rkhs}
\begin{algorithmic}[1]
  \STATE Initialize datasets $\Dcal_{a;0}, \dots, \Dcal_{a;H-1}$ for any $a\in \Acal$ and $\Dcal_{0}, \dots, \Dcal_{H-1}$
  
  \WHILE{true}
	\FOR{$h = 0 \to H-1$}  
	    \STATE Collect $M$ i.i.d samples $Z(\{ a_{0:h-1}\}) \sim \OO_h(\cdot \mid s_h(a_{0:h-1}))$ by executing $\{ a_{0:h-1}\}$ \\
		\STATE Set $a_h = \argmax_{a} \mu_{a;h}(Z( a_{0:h-1}),\Dcal_{a;h})$  
	\ENDFOR   
	\IF{ $\forall h: \sigma_h(Z( a_{0:h-1}),\Dcal_{h})  \leq \varepsilon $ } 
		\STATE Return $\{a_0, \dots, a_{H-1}\}$   
	\ELSE 
		\STATE Find the smallest $h$ such that $ \sigma_h(Z( a_{0:h-1}),\Dcal_{h})  > \varepsilon$,   
		\FOR{  $\forall a'\in \Acal$}  
		\STATE   Collect $M'$ i.i.d samples from $\YY_h(\cdot \mid s_h(a_{0:h-1}),a' ) $ by executing $\{a_{0:h-1},a'\}$ and compute $\hat r_h(s_h(a_{0:h-1}),a')$ by taking its mean 
		\STATE   Compute $y_{a'; h} = \hat r_h(s_h(a_{0:h-1}),a') + \text{Compute-}V^\star(h+1; \{a_{0:h-1},a'\})$ 
		\STATE Add $\Dcal_{a';h} = \Dcal_{a';h} + \{ Z( a_{0:h-1}), y_{a'; h} \}$  
		\ENDFOR 
	\STATE Add $\Dcal_{h} = \Dcal_{h} + \{ Z( a_{0:h-1}) \}$ 
	\ENDIF
  \ENDWHILE
\end{algorithmic}
\end{algorithm}

\begin{algorithm}[t] 
\caption{Compute-$V^\star$} \label{alg:compute_q_star_rkhs}
\begin{algorithmic}[1]
  \STATE {\bf  Input:} time step $h$, state $a_{0:h-1}$
	\IF{$h = H-1$}
	    \STATE Collect $M'$ i.i.d samples from $\YY_h(\cdot \mid s_h(a_{0:h-1}),a' ) $ by executing $\{a_{0:h-1},a'\} 
	 $ and compute $\hat r_h(s_h(a_{0:h-1}),a')$ by taking its mean  for any $a' \in \Acal$
		\STATE Return $\max_{a} \hat r_h(a_{0:H-2},a)$ %
	\ELSE 
	    \STATE  Collect $M$ i.i.d samples $Z(\{ a_{0:h-1}\}) \sim \OO_h(\cdot \mid s_h(a_{0:h-1}))$ by executing $\{ a_{0:h-1}\}$ 
		\IF{ $\sigma_h(Z( a_{0:h-1}),\Dcal_{h})  \leq \varepsilon$}   
		    \STATE Set $a_h = \argmax_{a} \mu_{a;h}(Z( a_{0:h-1}),\Dcal_{a;h})$  
		    \STATE Collect $M'$ i.i.d samples from $\YY_h(\cdot \mid s_h(a_{0:h-1}),a_h ) $ by executing $\{a_{0:h}\}$ and compute $\hat r_h(s_h(a_{0:h-1}),a_h)$ by taking its mean 
			\STATE Return $\hat r_h(s_h(a_{0:h-1}),a_h) + \text{Compute-}V^\star(h+1; \{a_{0:h-1},a_h\})$ \\

		\ELSE  
			\FOR{$ a' \in \Acal $} 
			    \STATE Collect $M'$ i.i.d samples from $\YY_h(\cdot \mid s_h(a_{0:h-1}),a' ) $ by executing $\{a_{0:h-1},a'\}$ and compute $\hat r_h(s_h(a_{0:h-1}),a')$ by taking its mean 
				\STATE $y_{a'; h} = \hat r_h(s_h(a_{0:h-1}), a') + \text{Compute-}V^\star( h+1; \{a_{0:h-1},a'\} )$  %

				\STATE $\Dcal_{a';h} := \Dcal_{a';h} +  \{ Z( a_{0:h-1}), y_{a'; h} \}$ 
			\ENDFOR
			\STATE Add $\Dcal_{h} = \Dcal_{h} + \{ Z( a_{0:h-1}) \}$  \\ 
			\STATE Return $\max_{a} y_{a;h}$    
		\ENDIF 
	\ENDIF
	
\end{algorithmic}
\end{algorithm}

With slight modification, we can use the same algorithm as \pref{alg:deterministic_pomdp_rkhs} and \pref{alg:compute_q_star_rkhs}. The only modification is changing the forms of $\mu_{a;h}$ and $\sigma^2_{h}$ using (nonparametric) kernel regression. Here, we define 
\begin{align*}
    & \mu_{a;h}(Z(a_{0:h-1}),\Dcal_{a;h})   =   \hat \kb(Z(a_{0:h-1}),\Dcal_{a;h})^{\top}(\hat \Kb(\Dcal_{h}) +\lambda I)^{-1}\Yb(\Dcal_{a;h}), \\ 
   & \sigma^2_h(\{a_{0:h-1}\},\Dcal_{h}) = \hat k( Z(a_{0:h-1}), Z(a_{0:h-1}) )-\|\hat \kb(Z(a_{0:h-1}),\Dcal_{h})\|^2_{(\hat \Kb(\Dcal_{h}) +\lambda I)^{-1}}. 
\end{align*}
where 
\begin{align*}
   \hat \kb(Z(x), \Dcal_{h})=\{ \hat k(Z(x), Z(x^i)) \}_{i=1}^{|\Dcal_{h}|},\quad \hat \Kb(\Dcal_{h}) =\{\hat k \prns{ Z(x^i), Z(x^j)} \}_{i=1,j=1}^{|\Dcal_{h}|,|\Dcal_{h}| }, \quad \Yb(\Dcal_{a;h}) = \{ y^{i}_a\}_{i=1}^{|\Dcal_{a;h}|}. 
\end{align*}
Note when features are finite-dimensional, they are reduced to \pref{alg:deterministic_pomdp} and \pref{alg:compute_q_star}. 

\subsection{Analysis}

Let $\gamma(N;k_{\Ocal})$ be a maximum information gain corresponding to a kernel $k_{\Ocal}(\cdot ,\cdot) $ defined by $\max_{C \subset \Ocal: |C|=N }\ln(\det (I + \Kb_C) )$ where $\Kb_C$ is a kernel matrix whose $(i,j)$-th entry is $k_{\Ocal}(x_i,x_j)$  when $C=\{x_i\}$. This corresponds to $d$ in the finite-dimensional setting. Maximum information gain can be computed in many kernel such as Gaussian kernels or Mat\'ern kernels \citep{srinivas2009gaussian,valko2013finite}.

\begin{theorem}\label{thm:rkhs_complexity}
Suppose for any $a \in \Acal, h \in [H]$, $Q^{\star}_h(\cdot,a) \in \Hcal_{\Scal}$ such that $\|Q^{\star}_h(\cdot,a)\|_{\Hcal_{\Scal} }\leq W$, Assumption \ref{assum:deterministic}, \ref{assum:linear}, \ref{assum:gap_assumption} and \ref{assum:rkhs}. Then, when $\gamma(N;k_{\Ocal})= \Gamma N^{\alpha}(0<\alpha<1)$ and $k_{\Ocal}(\cdot,\cdot)\leq 1$, with probability $1-\delta$, the algorithm outputs the optimal sequence of actions $a^{\star}_{0:H-1}$ using at most the following number of samples: 
\begin{align*}
    \mathrm{poly}(W, \iota, \log(1/\delta), H, \Gamma, 1/\Delta, A ) . 
\end{align*}
 The computational complexity is  $    \mathrm{poly}(W, \iota, \log(1/\delta), H, \Gamma, 1/\Delta, A ) $ as well. 
\end{theorem}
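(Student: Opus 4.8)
The plan is to follow the same three-step template used for \pref{thm:sample_complexity}, replacing the finite-dimensional elliptical quantities by their kernelized analogues and carefully accounting for the fact that the algorithm operates with the \emph{estimated} kernel $\hat k$ rather than the population kernel $\bar k$. First I would record the structural fact, guaranteed by \pref{assum:rkhs} and \pref{lem:rkhs}, that each $Q^\star_h(\cdot,a)$ lies in $\Hcal_{\bar\Scal}$ with norm at most $c=O(\iota W)$, since $\bar k(s,s')=\langle \EE_{o\sim\OO(s)}[\psi(o)],\EE_{o'\sim\OO(s')}[\psi(o')]\rangle$ has feature map $s\mapsto\EE_{o\sim\OO(s)}[\psi(o)]$. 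Thus the regression targets $y_{a;h}$, which by the inductive hypothesis approximate $Q^\star_h(s_h(a_{0:h-1}),a)$, are noisy evaluations of a bounded-norm element of $\Hcal_{\bar\Scal}$, so kernel ridge regression with the \emph{true} kernel $\bar k$ would enjoy a standard self-normalized confidence bound.

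The central new ingredient is a confidence bound for the algorithm's estimator $\mu_{a;h}$ and width $\sigma_h$, which use $\hat k$ in place of $\bar k$. I would establish it in three pieces. \textbf{(i) Kernel concentration:} since $k_\Ocal\le 1$, each entry $\hat k(Z(a_{0:h-1}),Z(a'_{0:h-1}))$ is a two-sample average of bounded terms with mean $\bar k(s_h(a_{0:h-1}),s_h(a'_{0:h-1}))$, so McDiarmid gives $|\hat k-\bar k|\le O(\sqrt{\log(1/\delta)/M})=:\epsilon_k$, made uniform by a union bound over the polynomially many evaluations. \textbf{(ii) Perturbation of the GP posterior:} writing $\hat\Kb=\Kb+E$ with $\|E\|\le|\Dcal|\epsilon_k$, a resolvent perturbation argument bounds $|\mu_{a;h}^{\hat k}-\mu_{a;h}^{\bar k}|$ and $|\sigma_h^{\hat k}-\sigma_h^{\bar k}|$ by $\poly(|\Dcal|,1/\lambda,H)\,\epsilon_k$. \textbf{(iii) Exact-kernel confidence:} the kernelized self-normalized bound yields $|\mu_{a;h}^{\bar k}(x)-Q^\star_h(s_h(x),a)|\le\beta\,\sigma_h^{\bar k}(x)$ with $\beta=O(c+\sqrt{(\gamma+\log(1/\delta))/\lambda})$. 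Combining, on the good event $\sigma_h^{\hat k}\le\varepsilon$ the total prediction error is at most $\beta\varepsilon+\poly(|\Dcal|,1/\lambda,H)\epsilon_k$, which by choosing $\varepsilon$ small and $M,M'$ large is driven below $\Delta/2$.

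Steps 1 and 2 then go through as before: by induction on $h$, in a good event the $\Delta/2$ accuracy together with \pref{assum:gap_assumption} forces $\argmax_a\mu_{a;h}=\argmax_a Q^\star_h(s_h(a_{0:h-1}),a)$, so the selected action is optimal and the recursion returns $V^\star_h$; in a bad event the recursion is queried for every action, so $\max_a y_{a;h}=V^\star_h(s_h(a_{0:h-1}))$. When the main loop terminates (all $\sigma_h^{\hat k}\le\varepsilon$) the same argument shows the returned sequence is $a^\star_{0:H-1}$. For Step 3 I would replace the finite-dimensional elliptical potential lemma with the information-gain bound $\sum_j(\sigma_h^{\bar k})^2\le O(\gamma(N;k_\Ocal))$, where the posterior-variance sum for $\bar k$ is controlled by the information gain of $k_\Ocal$ through the conditional embedding; since a bad event requires $\sigma_h>\varepsilon$ and $\gamma=\Gamma N^\alpha$ with $\alpha<1$, the number $N$ of bad events per level satisfies $N\varepsilon^2\le 2\Gamma N^\alpha$, hence $N\le(2\Gamma/\varepsilon^2)^{1/(1-\alpha)}$. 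Multiplying by $H$, $A$, and the per-call cost $O(M+M')$ gives the claimed $\poly(W,\iota,\log(1/\delta),H,\Gamma,1/\Delta,A)$ sample bound, while the computation is dominated by $O(N^3)$ kernel-regression solves, also polynomial.

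The hard part will be piece (ii): controlling the GP posterior under a perturbed kernel while keeping the dependence on the growing dataset size $|\Dcal|$ benign. Here $\epsilon_k$ must shrink like $\poly(|\Dcal|,1/\lambda)^{-1}$, yet $|\Dcal|$ is only bounded through Step 3, creating an apparent circularity. I would break it by observing that the a priori bound $N_{\max}=(2\Gamma/\varepsilon^2)^{1/(1-\alpha)}$ depends only on $\Gamma,\varepsilon,\alpha$ and not on $M$, since it comes from $\sum_j(\sigma_h^{\bar k})^2\le O(\gamma)$, a statement about the population kernel that is independent of the Monte-Carlo accuracy. One therefore first fixes $N_{\max}$, then chooses $M,M'$ large enough that the perturbation of (ii) stays below $\varepsilon/2$ (so that $\sigma_h^{\hat k}>\varepsilon$ implies $\sigma_h^{\bar k}>\varepsilon/2$ and the prediction error stays below $\Delta/2$) for every dataset of size at most $N_{\max}$, and finally verifies by induction that the realized $|\Dcal|$ never exceeds $N_{\max}$.
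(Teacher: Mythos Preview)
Your three-step plan is sound and would work, but the paper takes a noticeably simpler route that sidesteps your perturbation machinery entirely. The key observation you are missing is that the \emph{estimated} kernel
\[
\hat k\bigl(Z(a_{0:h-1}),Z(a'_{0:h-1})\bigr)=\frac{1}{M^2}\sum_{i,j}k_\Ocal(z_i,z'_j)=\bigl\langle \hat x_h(a_{0:h-1}),\,\hat x_h(a'_{0:h-1})\bigr\rangle
\]
is itself an \emph{exact} positive-definite kernel whose feature map is the empirical mean $\hat x_h(a_{0:h-1})=\frac{1}{M}\sum_i\psi(o^{(i)})$. Thus kernel ridge regression with $\hat k$ is, in the primal view, nothing but linear ridge regression with the estimated features $\hat x_h$. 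The paper exploits this: it rewrites $\mu_{a;h}=\hat x_h^{\top}\hat\theta_{a;h}$ and $\sigma_h=\|\hat x_h\|_{\Sigma_h^{-1}}$ with $\Sigma_h=\sum_j\hat x_h^{(j)}(\hat x_h^{(j)})^{\top}+\lambda I$, and then the entire finite-dimensional argument of \pref{thm:sample_complexity} carries over verbatim, with only two local modifications. First, the elliptical-potential count is replaced by the information-gain bound $\ln\det(I+\hat\Kb(\Dcal_h))\le M\,\gamma(N;k_\Ocal)$ (proved directly by writing $\hat\Kb$ as an average of $M$ Gram matrices built from $k_\Ocal$), which under $\gamma(N)=\Gamma N^{\alpha}$ gives $N'\le(\Gamma^{1/2}M^{1/2}/\varepsilon)^{2/(1-\alpha)}$. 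Second, the feature-concentration lemma becomes a Hoeffding bound on $\langle\theta^\star_{a;h},\hat x_h-x_h\rangle$, which is $\Theta^2/M$-sub-Gaussian because $k_\Ocal\le 1$ bounds $\|\psi(o)\|$.

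Your approach instead treats $\hat k$ as a perturbation of $\bar k$ and uses McDiarmid plus resolvent perturbation plus a self-normalized bound for the exact kernel. That is correct in spirit, but it forces you into the circularity you describe (the perturbation error scales with $|\Dcal_h|$, which you only bound via Step~3) and requires an additional claim---that $\gamma(N;\bar k)$ is controlled by $\gamma(N;k_\Ocal)$---which you gesture at but do not prove. Both difficulties simply evaporate in the primal view: the regression is always exact for the features actually used, the information gain is bounded directly on the empirical Gram matrix $\hat\Kb$, and the only stochastic error is the same $\langle\theta^\star,\hat x-x\rangle$ term that already appeared in the finite-dimensional proof. The price the paper pays is that its bound on $N'$ picks up an extra polynomial dependence on $M$ (through the $M$ factor in the information-gain inequality), but since both $N'$ and $M$ are polynomial in the problem parameters this is harmless for the stated $\mathrm{poly}(\cdot)$ conclusion.
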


\section{Proof of \pref{sec:finite}}\label{sec:proof}

The proof consists of three steps. We flip the order of the first and third step comparing to the main body to formalize the proof. To make the proof clear, we write the number of samples we use to construct $\hat r_h$ by $M'$. In the end, we set $M=M'$. Besides, we set $\lambda=1$. In the proof, $c_1,c_2,\cdots$ are universal constants. 

\subsection{First Step}

We start with the following lemma to show the algorithm terminates and the sample complexity is  $\mathrm{poly}(M,M',H,d,1/\varepsilon)$. We will later set appropriate $M,M',\varepsilon$. 

\begin{lemma}[Sample complexity]\label{lem:final_lemma}
Algorithm~\ref{alg:deterministic_pomdp} terminates after using $O( (M+M')H^3Ad/\varepsilon^2 \ln(1/\varepsilon) )$ samples.   
\end{lemma}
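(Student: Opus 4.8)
The plan is to reduce the sample count to a counting problem over two quantities: the number of iterations of the outer \textbf{while} loop of \pref{alg:deterministic_pomdp}, and the total number of recursive invocations of \pref{alg:compute_q_star}. Every site at which samples are drawn (\pref{line:roll_out_main} and the reward-collection steps of \pref{alg:deterministic_pomdp}, and \pref{line:roll_out} and the reward steps of \pref{alg:compute_q_star}) collects at most $O(M + M')$ fresh samples, and the initial \textbf{for} loop of each outer iteration sweeps $h = 0, \dots, H-1$ collecting $O(M)$ samples per level. Hence it suffices to bound the number of outer iterations and the number of calls to \pref{alg:compute_q_star}, then multiply these counts by the per-site cost and by the $O(H)$ length of the sweeps.

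The first pillar is a per-level elliptical potential argument. A feature $\hat x_h(a_{0:h-1})$ is appended to $\Dcal_h$ only inside a \emph{bad event}, i.e. only when $\|\hat x_h(a_{0:h-1})\|_{\Sigma_h^{-1}(\Dcal_h)} > \varepsilon$ (\pref{line:add_data_2} of \pref{alg:deterministic_pomdp} and \pref{line:add_data_8} of \pref{alg:compute_q_star}). Since $\hat x_h$ is an average of vectors $\psi(o^{(i)})$ with $\|\psi(o^{(i)})\| \le 1$ we have $\|\hat x_h\| \le 1$, and with $\lambda = 1$ the matrices $\Sigma_h(\Dcal_h)$ grow monotonically as $\Dcal_h$ expands. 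The standard elliptical-potential counting lemma then bounds the number of additions to $\Dcal_h$ by $O((d/\varepsilon^2)\ln(1/\varepsilon))$; summing over the $H$ levels gives a global bad-event budget $B = O((Hd/\varepsilon^2)\ln(1/\varepsilon))$. Because every non-terminating outer iteration appends exactly one feature to some $\Dcal_h$ (the smallest bad level found in \pref{line:eliptical}), the number of outer iterations is at most $B + 1$; and once the datasets are rich enough that the greedy sweep triggers no bad event, the test in \pref{line:check_optimal} passes and the algorithm returns.

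The second pillar bounds the recursion. View all invocations of \pref{alg:compute_q_star} as a forest, rooted at the $\le A$ calls issued per outer iteration. Label each node \emph{good} if the coverage test in \pref{line:judge} holds, in which case it has a single child (\pref{line:recursion_good}), and \emph{bad} otherwise, in which case it branches into $A$ children (\pref{line:recursion_bad}). Every bad node appends a point to some $\Dcal_h$, so the number of bad nodes is at most $B$. Each good node lies on a maximal chain of single-child nodes whose level index strictly increases at every step, so each chain has length at most $H$; chains are started only at roots or at children of bad nodes, of which there are at most $A(B+1) + AB = O(AB)$. Consequently the forest has $O(ABH)$ nodes. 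Combining the $O(B)$ outer iterations, each carrying an $O(H)$-length sampling sweep, with the $O(ABH)$ recursive invocations, and charging $O(M+M')$ fresh samples to every sampling site, yields the claimed $O((M+M')H^3 A d/\varepsilon^2 \ln(1/\varepsilon))$ bound.

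The step I expect to be the main obstacle is the recursion count: one must argue that the $A$-way branching at bad events is ``paid for'' by genuine progress in the elliptical potential, which requires that additions to the \emph{shared} datasets $\Dcal_h$ made from both \pref{alg:deterministic_pomdp} and the recursive \pref{alg:compute_q_star} are all charged against a single per-level budget, and that every appended feature indeed exceeded the $\varepsilon$ threshold at its time of insertion. A secondary subtlety is that the features are estimated from finitely many observations and are therefore noisy; the counting argument uses only $\|\hat x_h\| \le 1$ and never the consistency of $\hat x_h$, so it decouples cleanly from the statistical guarantees established in the other steps, but this separation must be stated carefully so that the potential-growth bookkeeping is not conflated with the estimation-error analysis.
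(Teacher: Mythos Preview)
Your proposal is correct and follows essentially the same route as the paper: a per-level elliptical-potential bound on the number of ``bad'' insertions into $\Dcal_h$, combined with a tree argument showing that maximal chains of good (single-child) recursion nodes have length at most $H$ and are rooted only at outer-loop calls or at children of bad nodes. The paper phrases the tree step via counters $\alpha_m,\beta_m$ and a connected-component argument to get $\alpha_m\le H\beta_m$, while you phrase it as a forest with bounded-length good chains, but the combinatorics are identical.
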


\begin{proof}
The proof consists of two steps.

\paragraph{The number of times we call Line \ref{line:eliptical} in Algorithm \ref{alg:deterministic_pomdp} ($I_{\max}$) is upper-bounded by $ O(Hd/\varepsilon^2 \ln(d H A/\varepsilon) )$}

At horizon $h$, when the new data $\hat x_h(a_{0:h-1})$ is added, we always have $\|\hat x_h(a_{0:h-1})\|_{\Sigma^{-1}_h}>\varepsilon$ (Line \ref{line:eliptical} in Algorithm \ref{alg:deterministic_pomdp} or Line \ref{line:eliptical2} in Algorithm \ref{alg:compute_q_star}). 
Let the total number of times we call Line \ref{line:eliptical} in Algorithm \ref{alg:deterministic_pomdp} and Line \ref{line:eliptical2} in Algorithm \ref{alg:compute_q_star} be $N'$. 
Then, we have 
\begin{align}\label{eq:potentil_track}
      \sum_{i=1}^{N'} \|\hat x^{(i)}_h\|_{\Sigma^{-1}_h} &  \leq  \sqrt{d N'\ln (1 + N'/d)}. 
\end{align}
Thus, the following holds  
\begin{align*}
    \varepsilon N' \leq c_4\sqrt{d N'\ln (1 + N'/d)}.
\end{align*}
This implies $N'$ is upper-bounded by
\begin{align*}
   O(d/\varepsilon^2 \ln(1/\varepsilon) ). 
\end{align*}
Thus, the number of we call Line \ref{line:eliptical} in Algorithm \ref{alg:deterministic_pomdp} is upper-bounded by  $ O( d/\varepsilon^2 \ln(1/\varepsilon) )$ for any layer $h$. Considering the whole layer, $I_{\max}$ is upper-bounded by $ O( Hd/\varepsilon^2 \ln(1/\varepsilon) )$.
 
~
\paragraph{Calculation of total sample complexity}

When we call Line \ref{line:eliptical} in Algorithm \ref{alg:deterministic_pomdp}, we consider the running time from Line \ref{alg:for1} to  \ref{alg:for2}. Let $m-1$ be the number of times we already visit Line \ref{line:eliptical} in Algorithm \ref{alg:deterministic_pomdp}. Recall the maximum of $m$ is at most $O(Hd /\varepsilon^2 \ln(1 /\varepsilon) ).$

Hereafter, we consider the case at iteration $m$. When we visit Line \ref{line:start} in Algorithm \ref{alg:deterministic_pomdp}, we need to start the recursion step in Algorithm \ref{alg:compute_q_star}. This recursion is repeated in a DFS manner from $h$ to $H-1$ as in \pref{fig:illustration}. When the algorithm moves from some layer to another layer, the algorithm calls Line \ref{line:recursion_good} or Line \ref{line:eliptical2}, i.e., Line \ref{line:recursion_bad} $|A|$ times in Algorithm \ref{alg:compute_q_star}. Let the number of total times the algorithm calls Line \ref{line:recursion_good} in Algorithm \ref{alg:compute_q_star} ($g$ in \pref{fig:illustration}) be $\alpha_m$. Let the number of times the algorithm visits  Line \ref{line:recursion_bad} in Algorithm \ref{alg:compute_q_star} and Line \ref{line:recursion_bad_original} in Algorithm \ref{alg:deterministic_pomdp}  ($b_a$ in \pref{fig:illustration})  be  $\beta_{m}$, respectively. 

\begin{figure}[!t]
    \centering
    \includegraphics[width = 0.7\textwidth]{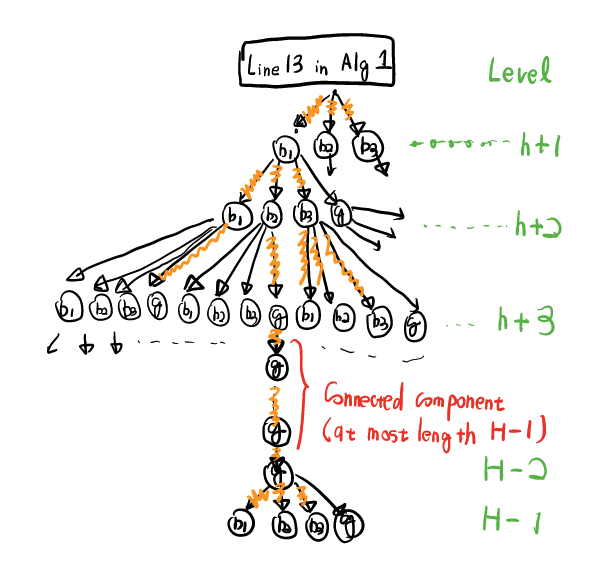}
    \label{fig:illustration}
    \caption{The root node corresponds to Line \ref{line:start} in Algorithm \ref{alg:deterministic_pomdp}. We denote Line \ref{line:recursion_good} in Algorithm \ref{alg:compute_q_star} by $g$. We denote line \ref{line:recursion_bad} in Algorithm \ref{alg:compute_q_star} and Line \ref{line:recursion_bad_original} in Algorithm \ref{alg:deterministic_pomdp} corresponding to $a \in \Acal$ by $b_a$. In the illustration, we set $A=3$. The example of paths the algorithm traverse is marked in orange. This corresponds to a graph $\tilde \Omega_m$. The number $\alpha_m$ is the total number of times the algorithm visits $g$ at iteration $m$. The number $\beta_m$ is the total number of times the algorithm visits $\{b_a\}$ at iteration $m$.   }
    \label{fig:my_label}
\end{figure}

Then, the total sample complexity is upper-bounded by 
\begin{align*}
  \underbrace{H M I_{\max}}_{(a)} + \underbrace{\sum_{m=1}^{ I_{\max}} (M+M'(A+1))H\alpha_{m}+ (M/A+M'(A+1))H \beta_{m} \}}_{(b)}. 
\end{align*}
The term (a) comes from samples we use \pref{line:initial_for} to \pref{line:end_for} in \pref{alg:deterministic_pomdp}. Note $H$ is the number of samples we need to reset, $M$ is the number of samples in $\hat x_h(a_{0:h-1})$  and $N'$ upper-bounds the number of iterations in the main loop. Next, we see the term (b). Here, $M'$ is the number of samples in $\hat r_h$. More specifically, we need $MH$ samples in \pref{line:roll_out} in \pref{alg:compute_q_star}, which we traverse in both good and bad events (per bad event, we just use  $MH/A$ samples). Additionally, we need $M'H$ samples in \pref{line:recursion_good} in \pref{alg:compute_q_star} in good events and $M'H$ samples in \pref{line:recursion_bad} in \pref{alg:compute_q_star} in bad events. When we call $H-1$, we use additionally use $A M'H$ samples. 
For each visit, we use at most $(A+1) M'H$ samples in \pref{line:add_data3}. 

Next, we show $\alpha_{m}\leq H\beta_{m} $. First, we denote sets of all $g$ and $b_a$ nodes the algorithm traverse at iteration $m$ in the tree by $G_m$ and $B_m$, respectively. We denote a subgraph on the tree consisting of nodes and edges which the algorithm traverses by $\tilde \Omega_m$. We denote a subgraph in $\tilde \Omega_m$ consisting of nodes $G_m$ and edges whose both sides belong to $G_m$ by $\tilde G_m$.  We divide $\tilde G_m$ into connected components on $\tilde \Omega_m$. Here, each component has at most $H$ nodes. The most upstream node in a component is adjacent to some node in $B_m$ on $\tilde \Omega_m$. Besides, this node in $B_m$ is not shared by other connected components in $\tilde G_m$. This ensures that  $\alpha_{m}\leq H\beta_{m} $.

Finally, we use $\sum_{m} \beta_{m}\leq O(HAd/\varepsilon^2 \ln(1 /\varepsilon) )$ as we see the number of times we call Line \ref{line:eliptical} in Algorithm \ref{alg:deterministic_pomdp} and Line \ref{line:eliptical2} in Algorithm \ref{alg:compute_q_star} at $h \in [H]$ is upper-bounded by $O(d/\varepsilon^2 \ln(1/\varepsilon))$ and we multiply it by $HA$. Thus, 
\begin{align*}
  & H M I_{\max} + \sum_{m=1}^{I_{\max}} (M+M'(A+1) )H\alpha_{m}+ (M/A+M'(A+1))H\beta_{m} \} \\
 &=O( (M+M')H^3A^2d/\varepsilon^2 \ln(1/\varepsilon)). 
\end{align*}

\end{proof}

In this lemma, as a corollary, the following statement holds: 
\begin{itemize}
    \item The number of times we visit \pref{line:roll_out} in \pref{alg:compute_q_star} is upper-bounded by $\sum_m (\beta_m/A + \alpha_m)$. 
    \item The number of times we visit \pref{line:add_data3}, \pref{line:add_data_4} and \pref{line:add_data_5} in \pref{alg:compute_q_star} is upper-bounded by $\sum_m (\beta_m + \alpha_m)(A+1)$.  
\end{itemize}
Here, we have 
\begin{align*}
    \sum_{m=1}^{I_{\max}} \beta_m = O(HAd/\varepsilon^2 \ln(1/\varepsilon) ),\quad \sum_{m=1}^{I_{\max}} \alpha_m = O(H^2Ad/\varepsilon^2 \ln(1 /\varepsilon) ). 
\end{align*}

\subsection{Second Step}\label{subsubsec:second_step}

We  prove some lemma which implies that \pref{alg:compute_q_star} always returns a good estimate of $V^{\star}_h(s_h(a_{0:h-1} ))$ in the algorithm in high probability.  Before providing the statement, we explain several events we need to condition on. 

\subsubsection{Preparation}

We first note for in the data $\Dcal_{a;h}$, a value $y_{a:h}$ corresponding to $\hat x_h(a_{0:h-1})$ is always in the form of 
\begin{align*}
   y_{a:h}=\EE[ \sum_{k=h}^H r_k\mid s_h(a_{1:h-1});a_h= a, a_{h+1:H-1} = a'_{h+1:H-1} ] +  \sum_{k=h}^H \nu_k, \quad  \nu_k = 1/M'\sum_{i=1}^{M'} \tau^{[i]}_k 
\end{align*}
for some (random) action sequence $a'_{h:H-1}$. Note this is not a high probability statement. Here, $\tau_{a;h}$ is an i.i.d noise in rewards which come from  \pref{line:add_data3} in  \pref{alg:compute_q_star} (when $h=H-1$),  \pref{line:add_data_4} in \pref{alg:compute_q_star} (good events in when $h < H-1$), and  in \pref{line:add_data_5} in \pref{alg:compute_q_star} (bad events in when $h < H-1$).  
We denote the whole noise part in $y_{a:h}$ by 
\begin{align*}
   z_{a;h} = \sum_{k=h}^H \nu_k. 
\end{align*}

\subsubsection{Events we need to condition on }

In the lemma, we need to condition on two  types of events. 

\paragraph{First event}

Firstly, we condition on the event 
\begin{align}
\label{eq:uncertanity}
 \forall a \in \Acal ; |\langle \theta^{\star}_{a;h}, x_h(a_{0:h-1})- \hat x_h(a_{0:h-1})  \rangle | \leq \min \left( \frac{\Delta}{6},  \frac{\Delta}{12\sqrt{N' \varepsilon}} \right)  
\end{align}
every time we visit \pref{line:roll_out} in \pref{alg:compute_q_star} and \pref{line:roll_out_main} in \pref{alg:deterministic_pomdp}. The concentration is obtained noting $\theta_{a;h}^{\top} \{\hat x_h(a_{0:h-1})-x_h(a_{0:h-1})\} $ is a $\Theta^2/M$ sub-Gaussian random variable with mean zero conditional on $x_h(a_{0:h-1})$. Formally, by properly setting $M$, we use the following lemma (simple application of Hoeffeding's inequality).

\begin{lemma}[Concentration of feature estimators] \label{lem:concentration_on_feature}
With probability $1-\delta'$, 
\begin{align*}
 \forall a; |\langle \theta^{\star}_{a;h}, x_h(a_{0:h-1})- \hat x_h(a_{0:h-1})  \rangle | \leq \Theta \sqrt{\ln( A/\delta')/M}. 
\end{align*}
\end{lemma}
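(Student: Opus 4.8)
The plan is to recognize $\langle \theta^{\star}_{a;h}, x_h(a_{0:h-1})-\hat x_h(a_{0:h-1})\rangle$ as a centered average of i.i.d.\ bounded scalars and then invoke Hoeffding's inequality, closing with a union bound over the $A$ actions. Unfolding the estimator with $\hat x_h(a_{0:h-1})=\frac1M\sum_{i=1}^M\psi(o^{(i)})$, where the $o^{(i)}$ are drawn i.i.d.\ from $\OO_h(\cdot\mid s_h(a_{0:h-1}))$, and with $x_h(a_{0:h-1})=\EE_{o\sim\OO_h(s_h(a_{0:h-1}))}[\psi(o)]$, gives
\[
\langle\theta^{\star}_{a;h},\,x_h(a_{0:h-1})-\hat x_h(a_{0:h-1})\rangle=\frac1M\sum_{i=1}^M\bigl(\EE_o[\langle\theta^{\star}_{a;h},\psi(o)\rangle]-\langle\theta^{\star}_{a;h},\psi(o^{(i)})\rangle\bigr),
\]
which is the centered empirical mean of the i.i.d.\ variables $Z_i:=\langle\theta^{\star}_{a;h},\psi(o^{(i)})\rangle$. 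Because the latent transition is deterministic, executing $a_{0:h-1}$ always reaches the same state $s_h(a_{0:h-1})$, so conditionally on the (adaptively chosen) action sequence the $o^{(i)}$ really are i.i.d.\ and $x_h(a_{0:h-1})$ is a fixed vector; this is exactly the ``conditional on $x_h(a_{0:h-1})$'' qualifier, and it lets me apply the tail bound conditionally and then average the conditioning out.

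The only nonroutine ingredient is the range of $Z_i$, which is where $\Theta$ enters. Taking $G^{\dagger}_h$ to be the minimum-norm (Moore--Penrose) left inverse of the full-column-rank $G_h$, so that $\|G^{\dagger}_h\|_{\mathrm{op}}=1/\eta(G_h)$, and using $\theta^{\star}_{a;h}=\{G^{\dagger}_h\}^{\top}w^{\star}_{a;h}$ together with $\|w^{\star}_{a;h}\|\le W$ and $\|\psi(o)\|\le 1$, I would bound
\[
|Z_i|\le \|\theta^{\star}_{a;h}\|\,\|\psi(o^{(i)})\|\le \frac{\|w^{\star}_{a;h}\|}{\eta(G_h)}\le \frac{W}{\min_h\eta(G_h)}=\Theta .
\]
Hence each centered summand lies in an interval of width at most $2\Theta$ and is $\Theta^2$-sub-Gaussian by Hoeffding's lemma, so the empirical average is $\Theta^2/M$-sub-Gaussian, matching the remark in the text preceding the lemma.

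Finally I would apply the two-sided sub-Gaussian tail bound, under which the deviation exceeds $t$ with probability at most $2\exp(-Mt^2/(2\Theta^2))$ for a fixed $a$; setting this equal to $\delta'/A$ and solving yields $t=\Theta\sqrt{2\ln(2A/\delta')/M}$, and a union bound over $a\in\Acal$ makes the inequality hold simultaneously for all actions. The stated form $\Theta\sqrt{\ln(A/\delta')/M}$ then follows after absorbing the absolute constant and the additive $\log 2$ in the usual way. I do not expect a genuine obstacle: the only points demanding care are the operator-norm identity $\|\{G^{\dagger}_h\}^{\top}\|_{\mathrm{op}}=1/\eta(G_h)$ (which requires the full column rank of $G_h$ and the minimum-norm choice of left inverse) and the bookkeeping that the $M$ fresh samples are drawn independently of the history given the action sequence, so that the conditional i.i.d.\ structure underlying Hoeffding is valid.
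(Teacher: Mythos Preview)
Your proposal is correct and matches the paper's approach exactly: the paper simply notes that $\theta_{a;h}^{\star\top}\{\hat x_h(a_{0:h-1})-x_h(a_{0:h-1})\}$ is $\Theta^2/M$ sub-Gaussian (conditionally on $x_h(a_{0:h-1})$) and calls the lemma a ``simple application of Hoeffding's inequality,'' which is precisely the argument you have spelled out in detail, including the bound $\|\theta^{\star}_{a;h}\|\le\Theta$ via the Moore--Penrose inverse and the union bound over $a\in\Acal$.
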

We later choose $M$ so that \pref{eq:uncertanity} holds. Note the number of visitation is $$I_{\max} H + \sum_m (\alpha_m + \beta_m/A)=  O(H^2A d/\varepsilon^2\ln(1/\varepsilon)). $$
We take the union bound later. 

\paragraph{Second event}

We condition on the event 
\begin{align} \label{eq:reward_inequality}
     |\nu_k| \leq  \min \braces{\Delta/(6H) ,\frac{\Delta}{12\sqrt{N'\varepsilon}} }
\end{align}
every time  we visit \pref{line:add_data3} in  \pref{alg:compute_q_star} (when $h=H-1$), \pref{line:add_data_5} in \pref{alg:compute_q_star} (good events in $h\leq H-1$), and \pref{line:add_data2} in \pref{alg:compute_q_star} (bad events in $h\leq H-1$). By properly setting $M'$, the concentration is obtained noting $ \nu_k$ is a $1/M^2$-sub-Gaussian variable as follows. 
This is derived as a simple application of Hoeffeding's inequality. 

\begin{lemma}[Concentration of reward estimators] 
With probability $1-\delta'$, 
\begin{align*}
   |\nu_k| \leq 2\sqrt{\ln(1/\delta')/M'}. 
\end{align*}
\end{lemma}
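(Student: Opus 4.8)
The plan is to recognize $\nu_k$ as the centered empirical mean of $M'$ i.i.d.\ bounded reward samples and invoke Hoeffding's inequality. Recall from the setup that $\nu_k = \frac{1}{M'}\sum_{i=1}^{M'}\tau^{[i]}_k$, where each $\tau^{[i]}_k$ is an independent copy of the reward noise $\tau$ at the relevant $(s,a)$ pair. By definition $r_h(s,a)$ is the conditional mean of the reward distribution $\YY_h(s,a)$, so $\EE[\tau^{[i]}_k]=0$; moreover, the assumption that $\YY_h(s,a)$ is supported on $[0,1]$ means the raw reward draws $r_h(s,a)+\tau^{[i]}_k$ lie in an interval of width one. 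Thus $\nu_k$ is exactly the deviation of the sample average $\hat r_h$ from its mean $r_h$, computed over $M'$ i.i.d.\ random variables each bounded in $[0,1]$.

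First I would apply Hoeffding's inequality for the sample mean of bounded i.i.d.\ variables, which yields
\[
\PP(|\nu_k| > t) \leq 2\exp(-2 M' t^2)
\]
for every $t>0$. Equivalently, by Hoeffding's lemma each $\tau^{[i]}_k/M'$ is $\tfrac{1}{4M'^2}$-sub-Gaussian, and independence adds the sub-Gaussian parameters, so $\nu_k$ is $\tfrac{1}{4M'}$-sub-Gaussian, producing the same tail. Setting the right-hand side equal to $\delta'$ and solving for $t$ gives the deviation bound $t=\sqrt{\ln(2/\delta')/(2M')}$, valid with probability $1-\delta'$.

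Finally I would loosen this to the stated form: since $\sqrt{\ln(2/\delta')/(2M')} \leq 2\sqrt{\ln(1/\delta')/M'}$ holds for all $\delta' \in (0,1/2]$ (squaring reduces this to $\ln 2 \leq 7\ln(1/\delta')$, which is immediate), the claimed bound $|\nu_k| \leq 2\sqrt{\ln(1/\delta')/M'}$ follows. This lemma carries essentially no difficulty; it is a direct Hoeffding application. The only point that deserves care is the probabilistic structure, namely confirming that conditioned on the deterministic trajectory reaching the latent state $s_h(a_{0:h-1})$, the $M'$ reward draws used to form $\hat r_h$ are genuinely independent and identically distributed with the correct mean $r_h$, so that the independence and mean-zero hypotheses of Hoeffding's inequality are indeed met.
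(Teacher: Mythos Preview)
Your proposal is correct and matches the paper's approach exactly: the paper simply states that the lemma ``is derived as a simple application of Hoeffding's inequality,'' and your argument spells out precisely this application using the boundedness of rewards in $[0,1]$ and the i.i.d.\ structure of the $M'$ reward draws.
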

Later, we choose $M'$ so that \eqref{eq:reward_inequality} is satisfied. Note the number of times we visit is $$ \sum_{m=1}^{I_{\max}} (\alpha_m + \beta_m)(A+1) = O((M+M')H^2A^2d/\epsilon^2\ln(1/\epsilon)).$$ We take the union bound later.

\paragraph{Accuracy of $\text{Compute-}V^\star$}

When the above events hold, we can ensure \pref{alg:compute_q_star} always returns $V^{\star}_h(s_h(a_{0:h-1} ))$ with some small deviation error.

\begin{lemma}[The accuracy of $\text{Compute-}V^\star$]  
We set $\varepsilon$ such that $ \varepsilon =\Delta/ (6\Theta )$. Let $a^{\star}_{h:H}(a_{0:h-1}) \in \Acal^{H-h}$ be the optimal action sequence from $h$ to $H-1$ after $a_{0:h-1}$. 
We condition on the events we have mentioned above. 
Then, in the algorithm, we always have  $$\text{Compute-}V^\star(h; a_{0:h-1}) = V^\star_h(s_h(a_{0:h-1})) + \sum_{k=h}^{H-1} \nu_k.$$  
\end{lemma}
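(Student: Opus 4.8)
The plan is to prove the claim by downward induction on $h$, from the base level $h=H-1$ to $h=0$, showing that \emph{every} invocation of $\text{Compute-}V^\star(h;a_{0:h-1})$ occurring during the run returns exactly $V^\star_h(s_h(a_{0:h-1}))$ plus the accumulated reward-estimation noise $\sum_{k=h}^{H-1}\nu_k$. Throughout, I would work on the intersection of the two events \eqref{eq:uncertanity} and \eqref{eq:reward_inequality} that we have conditioned on; on this event all the error bounds below hold deterministically, so the induction yields an ``always'' statement once a union bound over the (polynomially many, by \pref{lem:final_lemma}) calls is taken. For the base case $h=H-1$ we have $V^\star_{H-1}(s)=\max_a r_{H-1}(s,a)$ since $Q^\star_H\equiv 0$, and the routine returns $\max_a\hat r_{H-1}(a_{0:H-2},a)=\max_a\{r_{H-1}(s,a)+\nu_{H-1}\}$. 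Because $|\nu_{H-1}|\le \Delta/(6H)<\Delta/2$ by \eqref{eq:reward_inequality} while any suboptimal action loses at least $\Delta$ by Assumption~\ref{assum:gap_assumption}, the empirical and true maximizers coincide, so the returned value is $V^\star_{H-1}(s)+\nu_{H-1}$, as required.

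For the inductive step I would assume the claim at $h+1$ and split on the branch taken at \pref{line:judge}. The bad branch $\|\hat x_h(a_{0:h-1})\|_{\Sigma^{-1}_h(\Dcal_h)}>\varepsilon$ is the easy one: for each $a'$ the recursive call at \pref{line:recursion_bad} returns $V^\star_{h+1}(s_{h+1}(a_{0:h-1},a'))+\sum_{k=h+1}^{H-1}\nu_k$ by the inductive hypothesis, and adding $\hat r_h=r_h+\nu_h$ gives $y_{a';h}=Q^\star_h(s_h(a_{0:h-1}),a')+z_{a';h}$ with $|z_{a';h}|\le \Delta/6<\Delta/2$ by \eqref{eq:reward_inequality}. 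The gap assumption then forces $\argmax_a y_{a;h}=a^\star_h$, so \pref{line:final_return} returns $Q^\star_h(s_h(a_{0:h-1}),a^\star_h)+z_{a^\star_h;h}=V^\star_h(s_h(a_{0:h-1}))+\sum_{k=h}^{H-1}\nu_k$.

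The good branch is where the real work lies, and I expect bounding the least-squares error to be the main obstacle. Here I must certify that the predictor at \pref{line:calculate_mu} picks the correct action, i.e. $|\langle\hat\theta_{a;h},\hat x_h(a_{0:h-1})\rangle-Q^\star_h(s_h(a_{0:h-1}),a)|<\Delta/2$ for all $a$. I would split this into a feature-uncertainty term $|\langle\theta^\star_{a;h},\hat x_h-x_h\rangle|$, which is $\le \Delta/6$ by \eqref{eq:uncertanity}, and a regression term $|\langle\hat\theta_{a;h}-\theta^\star_{a;h},\hat x_h\rangle|$. The difficulty is that we regress on the noisy features $\hat x_h^{(j)}$ while each stored target $y^{(j)}_{a;h}$ is, by the inductive hypothesis, linear in the \emph{true} feature $x_h^{(j)}$ plus reward noise $z^{(j)}_{a;h}$. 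Expanding $\hat\theta_{a;h}=\Sigma^{-1}_h\sum_j\hat x_h^{(j)}y^{(j)}_{a;h}$ and writing $\langle\theta^\star,x^{(j)}\rangle=\langle\theta^\star,\hat x^{(j)}\rangle+\langle\theta^\star,x^{(j)}-\hat x^{(j)}\rangle$, I would decompose $\hat\theta_{a;h}-\theta^\star_{a;h}$ into a ridge-bias term $-\lambda\Sigma^{-1}_h\theta^\star_{a;h}$, a feature-mismatch term $\Sigma^{-1}_h\sum_j\hat x^{(j)}\langle\theta^\star,x^{(j)}-\hat x^{(j)}\rangle$, and a reward-noise term $\Sigma^{-1}_h\sum_j\hat x^{(j)}z^{(j)}_{a;h}$.

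The crux is that pairing each of these against $\hat x_h$ with $\|\hat x_h\|_{\Sigma^{-1}_h}\le\varepsilon$ must not blow up over the $N'$ stored points. The key estimate I would use is the self-normalized bound $\sum_j(\hat x_h^\top\Sigma^{-1}_h\hat x^{(j)})^2=\hat x_h^\top\Sigma^{-1}_h(\Sigma_h-\lambda I)\Sigma^{-1}_h\hat x_h\le\|\hat x_h\|^2_{\Sigma^{-1}_h}\le\varepsilon^2$, so that by Cauchy--Schwarz the feature-mismatch contribution is at most $\sqrt{N'}\cdot\big(\max_j|\langle\theta^\star,x^{(j)}-\hat x^{(j)}\rangle|\big)\cdot\varepsilon$; the per-point bound $\Delta/(12\sqrt{N'\varepsilon})$ in \eqref{eq:uncertanity} is calibrated precisely so this collapses to $\tfrac{\Delta}{12}\sqrt{\varepsilon}$, and an analogous treatment handles the reward-noise term via \eqref{eq:reward_inequality}, while the ridge bias contributes $\le\sqrt{\lambda}\,W\Theta\,\varepsilon$. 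Substituting $\varepsilon=\Delta/(6\Theta)$ and the prescribed scalings of $M,M',\lambda$ makes the total regression term at most $\Delta/4$, hence the full bound is below $\Delta/2$. Combined with the gap this certifies $a_h=a^\star_h$, and then $\hat r_h(s,a_h)+\text{Compute-}V^\star(h+1;a_{0:h})=Q^\star_h(s_h(a_{0:h-1}),a^\star_h)+\sum_{k=h}^{H-1}\nu_k=V^\star_h(s_h(a_{0:h-1}))+\sum_{k=h}^{H-1}\nu_k$ by the inductive hypothesis at \pref{line:recursion_good}, closing the induction.
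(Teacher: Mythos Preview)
Your proposal is correct and follows essentially the same route as the paper: backward induction on $h$, the same base-case argument via the gap, the same split into the bad branch (direct from the inductive hypothesis plus the gap) and the good branch, and for the latter the same decomposition of $\hat\theta_{a;h}-\theta^\star_{a;h}$ into ridge bias plus a $\Sigma_h^{-1}\sum_j\hat x^{(j)}w^{(j)}$ term, controlled via the identical matrix inequality $\hat x_h^\top\Sigma_h^{-1}(\Sigma_h-\lambda I)\Sigma_h^{-1}\hat x_h\le\|\hat x_h\|^2_{\Sigma_h^{-1}}$. The only cosmetic differences are that the paper bundles the feature-mismatch and reward-noise perturbations into a single $w^{(i)}_{a;h}$ rather than treating them separately, and that the ridge-bias bound is $\sqrt{\lambda}\,\Theta\,\varepsilon$ (not $\sqrt{\lambda}\,W\Theta\,\varepsilon$, since $\|\theta^\star_{a;h}\|\le\Theta$ already absorbs $W$); neither affects the argument.
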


\begin{proof}
We prove by induction. We want to prove this statement for any query we have in the algorithm. Suppose we already visit
Line \ref{line:eliptical} in Algorithm \ref{alg:deterministic_pomdp} $m-1$ times. In other words, we are now at the episode at $m$. 
Thus, we use induction in the sense that assuming the statement holds in all queries in the previous episodes before $m$ and all queries from level $h+1$ to level $H-1$ in episode $m$, we want to prove the statement holds for all queries at level $h$ in episode $m$. 

We first start with the base case level $H-1$ at episode $m$. When $h = H-1$, our procedure simply returns $\max_a \hat r_{H-1}(s_{H-1}(a_{0:H-2}), a)$. From the gap assumption, we have $\max_a \hat r_{H-1}(s_{H-1}(a_{0:H-2}), a) = \max_{a \in \Acal} r_{H-1}(s_{H-1}(a_{0:H-2}), a)$ noting we condition on the event the difference is upper-bounded by $\Delta/(6H)$ from \pref{eq:reward_inequality}. Thus,  $V^\star_{H-1}(s_{H-1}(a_{0:H-2})) + \nu_{H-1} $ by definition. 

Now assume that the conclusion holds for all queries at level $h+1$ in episode $m$ and all queries in the previous episodes before episode $m$. We prove the statement also holds for all queries at level $h$ in episode $m$ when $h<H-1$. 

We divide into two cases. 

\paragraph{Case 1: $\|\hat x_h(a_{0:h-1})\|_{\Sigma^{-1}_{h}} > \varepsilon$ }

The first case is $\|\hat x_h(a_{0:h-1})\|_{\Sigma^{-1}_{h}} > \varepsilon$. In this case, we aim to calculate $Q^\star_h(s_h(a_{0:h-1}), a')$ for all $a' \in \Acal$ by calling $\text{Compute-}V^\star$ at layer $h+1$ with input $\{a_{0:h-1}, a'\}$. Note that by inductive hypothesis, we have 
\begin{align*}
    \text{Compute-}V^\star(h+1; \{a_{0:h-1},a'\})  = V^\star_{h+1}(s_h(a_{0:h-1},a')) + \sum_{k=h+1}^{H-1} \nu_k.
\end{align*}
Hence, from the definition of $y_{a';h} $, 
{\small 
\begin{align}
    y_{a';h} &=  \hat r_h(s_h(a_{0:h-1},a')+  V^\star_{h+1}(s_h(a_{0:h-1},a'))  + \sum_{k=h+1}^{H-1} \nu_k \nonumber \\ 
    &=  Q^\star_{h}(s_h(a_{0:h-1}),a')  +   \sum_{k=h}^{H-1} \nu_k.  \nonumber 
\end{align}
}
Thus, noting we condition on the event $\nu_k$ are upper-bounded by $\Delta/(6H)$ in the algorithm,  we have $$| y_{a'; h} - Q^\star_h(s_h(a_{0:h-1}), a')| \leq (H-h)\Delta/(6H).$$ From the gap assumption (Assumption \ref{assum:gap_assumption}), thus $\argmax_{a'} y_{a'; h} = \argmax_{a'} Q^\star_h(s_h(a_{0:h-1}),a')=a^{\star}_h(a_{0:h-1} )$.  Thus, after choosing the optimal action  we return 
\begin{align*}
 V^\star_h(s_h(a_{0:h-1})) +   \sum_{k=h}^{H-1} \nu_k. 
\end{align*} 
This implies that the conclusion holds for queries at level $h$ in the first case.

\paragraph{Case 2: $\|\hat x_h(a_{0:h-1})\|_{\Sigma^{-1}_{h}} \leq \varepsilon$ }

The second case is $\|\hat x_h(a_{0:h-1})\|_{\Sigma^{-1}_{h}} \leq \varepsilon$.  We first note from the inductive hypothesis, in the data $\Dcal_{a;h}$, for any $\hat x_h(a_{0:h-1})$, the corresponding $y_{a;h}$ is 
\begin{align*}
  y_{a;h} &= Q^\star_h(s_h(a_{0:h-1}),a) +   \sum_{k=h}^H \nu_k 
\end{align*} 
Recall that $Q^\star_h(s_h(a_{0:h-1} ),a) =( \theta_{a;h}^\star)^{\top}  x_h(a_{0:h-1})$. Then, for any $a \in \Acal$, 
\begin{align*}
    \hat \theta_{a;h} &=\Sigma^{-1}_h \sum_{i=1}^{|\Dcal_{a;h}|} \hat x^{(i)}_h(a_{0:h-1})\{\langle x^{(i)}_h(a_{0:h-1}), \theta^{\star}_{a;h} \rangle + z^{(i)}_{a;h}  \} \\ 
    &=\Sigma^{-1}_h \sum_{i=1}^{|\Dcal_{a;h}|} \hat x_h(a_{0:h-1})\{\langle  x^{(i)}_h(a_{0:h-1}) - \hat x^{(i)}_h(a_{0:h-1}), \theta^{\star}_{a;h} \rangle + \langle \hat x^{(i)}_h(a_{0:h-1}),\theta^{\star}_{a;h} \rangle +  z^{(i)}_{a;h}  \} \\ 
    &=\theta^{\star}_{a;h} - \lambda \Sigma^{-1}_h \theta^{\star}_{a;h} + \Sigma^{-1}_h \sum_{i=1}^{|\Dcal_{a;h}|} \hat x^{(i)}_h(a_{0:h-1}) \{\langle  x^{(i)}_h(a_{0:h-1}) - \hat x^{(i)}_h(a_{0:h-1}), \theta^{\star}_{a;h} \rangle + z^{(i)}_{a;h}\} \\
    &= \theta^{\star}_{a;h} - \lambda \Sigma^{-1}_h \theta^{\star}_{a;h} + \Sigma^{-1}_h \sum_{i=1}^{|\Dcal_{a;h}|} \hat x^{(i)}_h(a_{0:h-1})  w^{(i)}_{a;h}
\end{align*}
where $w^{(i)}_{a;h} =\langle \hat x^{(i)}_h(a_{0:h-1}) - x^{(i)}_h(a_{0:h-1}), \theta^{\star}_{a;h} \rangle + z^{(i)}_{a;h}$. 
On the events we condition (\eqref{eq:uncertanity} and \eqref{eq:reward_inequality}), we have $|w^{(i)}_{a;h}| \leq \mathrm{Error}$ where 
\begin{align*}
    \mathrm{Error} := \Delta/(6\sqrt{N'\varepsilon}). 
\end{align*}

Using the above, at level $h$ in episode $m$, 
\begin{align*}
&\forall a: \left\lvert \hat \theta_{a;h}^{\top} \hat x_h(a_{0:h-1}) - (\theta_{a;h}^\star)^{\top} {x}_{h}(a_{0:h-1})   \right\rvert  \\
& \leq \left\lvert (\hat \theta_{a;h} - \theta^\star_{a;h})^{\top} \hat x_h(a_{0:h-1})  \right\rvert + \left\lvert  (\theta^\star_{a;h})^{\top} ( \hat x_h(a_{0:h-1}) -  x_h(a_{0:h-1}))        \right\rvert \\
&\leq  \underbrace{\left\lvert  \langle \lambda \Sigma^{-1}_h \theta^{\star}_{a;h}, \hat x_h(a_{0:h-1}) \rangle \right\rvert }_{(a)} + 
 \underbrace{\left\lvert \langle  \Sigma^{-1}_h \sum_{i=1}^{|\Dcal_{a;h}|} \hat x^{(i)}_h(a_{0:h-1}) w^{(i)}_{a;h},  \hat x_h(a_{0:h-1}) \rangle  \right \lvert}_{(b)}
+ \Delta/6 \tag{Use  \pref{eq:uncertanity} }. 
\end{align*}
The first term (a) is upper-bounded by 
\begin{align*}
      \left\lvert  \langle \lambda \Sigma^{-1}_h \theta^{\star}_{a;h}, \hat x_h(a_{0:h-1}) \rangle \right\rvert 
     & \leq \lambda \|\Sigma^{-1}_h \theta^{\star}_{a;h}\|_{\Sigma_h}  \| \hat x_h(a_{0:h-1})\|_{\Sigma^{-1}_h}  \tag{CS inequality} \\
      &\leq \sqrt{\lambda}\Theta \varepsilon  \tag{$\theta^{\star}_{a;h}\leq \Theta$ and $\| \hat x_h(a_{0:h-1})\|_{\Sigma^{-1}_h}\leq \varepsilon$}\\
       & \leq \Delta/6.  \tag{We set a parameter $\varepsilon$ to satisfy this condition}
\end{align*}
The second term (b) is upper-bounded by 
\begin{align*}
        &\mathrm{Error}\times   \sum_{i=1}^{|\Dcal_{a;h}|} |\hat x^{\top}_h(a_{0:h-1})  \Sigma^{-1}_h  x^{(i)}_h(a_{0:h-1}) | \\
        &\leq \mathrm{Error}\times \sqrt{|\Dcal_{a;h}|\hat x^{\top}_h \Sigma^{-1}_h \sum_{i=1}^{ |\Dcal_{a;h}|}  x^{(i)}_h(a_{0:h-1})x^{(i)}_h(a_{0:h-1})^{\top}\Sigma^{-1}_h \hat x_h } \tag{From L1 norm to L2 norm}\\
        &\leq \mathrm{Error}\times \sqrt{N'\hat x^{\top}_h \Sigma^{-1}_h \sum_{i=1}^{ |\Dcal_{a;h}|}  x^{(i)}_h(a_{0:h-1})x^{(i)}_h(a_{0:h-1})^{\top}\Sigma^{-1}_h \hat x_h } \tag{$N'$ upper-bounds $|\Dcal_{a;h}|$}\\
        &\leq \mathrm{Error}\times \sqrt{N'\varepsilon}\\
        &\leq \Delta/6. \tag{We set $M,M',\varepsilon$ to satisfy this condition }
\end{align*}
From the third line to the fourth line, we use a general fact when $x^{\top}(C+\lambda I)^{-1} x \leq \varepsilon $ is satisfied, we have  $x^{\top}(C+\lambda I)^{-1}C (C+\lambda I)^{-1}x \leq \varepsilon $ for any matrix $C$ and vector $x$. 

Thus, we have that:
\begin{align*}
\forall a: \left\lvert \hat \theta_{a;h}^{\top} \hat x_h(a_{0:h-1}) - (\theta_{a;h}^\star)^{\top} {x}_h(a_{0:h-1})   \right\rvert \leq \Delta / 2.
\end{align*}
Together with the gap assumption, this means $$\argmax_{a} \hat \theta_{a;h}^{\top } \hat x_h(a_{0:h-1}) = \argmax_{a} (\theta^\star_{a;h})^{\top} x_h(a_{0:h-1})=a^{\star}_h(a_{0:h-1}).$$ Thus,  we select $a^{\star}_h(a_{0:h-1})$   at $h$. Then, when we query $\text{Compute-}V^\star(h+1; \{a_{0:h-1},a^{\star}_h(a_{0:h-1})\} )$, which by inductive hypothesis we return 
\begin{align*}
  V^\star_{h+1}(s_{h+1}( \{a_{0:h-1},a^{\star}_h(a_{0:h-1})\} )  ) + \sum_{k=h+1}^{H-1} \nu_k. 
\end{align*} 
Finally, adding the reward, we return 
\begin{align*}
  V^\star_{h}(s_{h}(a_{0:h-1})  ) + \sum_{k=h}^{H-1} \nu_k. 
\end{align*} 
This implies that the conclusion holds for any queries at level h in the second case.
\end{proof}

\subsection{Third Step}

The next lemma shows that when the algorithm terminates, we must find an exact optimal policy. 

\begin{lemma}[Optimality upon termination] Algorithm~\ref{alg:deterministic_pomdp} returns an optimal policy on termination.
\end{lemma}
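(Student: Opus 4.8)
The plan is to show that the action sequence returned at \pref{line:return_optimal} coincides with a globally optimal open-loop sequence $a^\star_{0:H-1}$, and to do so by a forward induction on $h$. The workhorse is the per-step estimation bound already established in Case~2 of the accuracy lemma: whenever the coverage test $\|\hat x_h(a_{0:h-1})\|_{\Sigma^{-1}_h(\Dcal_h)}\leq\varepsilon$ holds, the least-squares estimate satisfies $\bigl|\hat\theta_{a;h}^{\top}\hat x_h(a_{0:h-1})-Q^\star_h(s_h(a_{0:h-1}),a)\bigr|\leq\Delta/2$ for every $a\in\Acal$. I would first note that this bound depends only on the coverage condition and on the current datasets $\Dcal_h,\Dcal_{a;h}$ (which determine $\hat\theta_{a;h}$), not on whether the estimate is computed inside Compute-$V^\star$ or in the main loop of \pref{alg:deterministic_pomdp}; so it applies verbatim to the greedy choice made in \pref{line:estimate_q_star}.

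First I would observe that reaching \pref{line:return_optimal} means the test in \pref{line:check_optimal} passed, i.e. $\|\hat x_h(a_{0:h-1})\|_{\Sigma^{-1}_h(\Dcal_h)}\leq\varepsilon$ for every $h$ along the prefix just produced by the inner loop (\pref{line:initial_for}--\pref{line:end_for}). Combining this with the displayed bound and the optimality gap (\pref{assum:gap_assumption}), I would conclude that at each such level the $\Delta/2$-accurate estimates cannot change the identity of the maximizer, so that $a_h=\argmax_a\hat\theta_{a;h}^{\top}\hat x_h(a_{0:h-1})$ equals $\argmax_a Q^\star_h(s_h(a_{0:h-1}),a)$; that is, $a_h$ is an optimal action at the latent state reached by $a_{0:h-1}$.

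The remaining step is to upgrade this step-wise optimality into global optimality by induction on $h$, with inductive hypothesis that the prefix $a_{0:h-1}$ generated by the loop is an optimal prefix (hence $s_h(a_{0:h-1})$ is a state visited by some optimal policy). The base case $h=0$ is immediate, since the prefix is empty and $s_0$ is fixed. For the inductive step I would invoke the Bellman recursion $Q^\star_h(s,a)=r_h(s,a)+\max_{a'}Q^\star_{h+1}(p_h(s,a),a')$: since $s_h(a_{0:h-1})$ is optimal and $a_h\in\argmax_a Q^\star_h(s_h(a_{0:h-1}),a)$, appending $a_h$ keeps the prefix optimal, so $a_{0:h}$ is again an optimal prefix. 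Iterating to $h=H-1$ then yields that the full returned sequence is globally optimal.

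I expect the genuinely delicate point to be not an estimate but the logical bridge from ``optimal action at the reached state'' to ``globally optimal sequence'': this relies on the fact, recorded just after \pref{assum:deterministic}, that with deterministic transitions and a fixed initial state the optimal policy is a single open-loop action sequence, so that greedily chaining per-state optimal actions along the reached trajectory actually reconstructs an optimal sequence rather than merely a locally greedy one. A minor point I would flag is that $\argmax$ ties are harmless, since the gap assumption forces every action selected by the $\Delta/2$-accurate rule to have zero gap and thus to be optimal, even when several optimal actions coexist.
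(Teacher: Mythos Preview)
Your proposal is correct and follows essentially the same approach as the paper's proof: both run a forward induction on $h$, invoke the Case~2 linear-regression bound established in the accuracy lemma to get $\bigl|\hat\theta_{a;h}^{\top}\hat x_h(a_{0:h-1})-Q^\star_h(s_h(a_{0:h-1}),a)\bigr|\leq\Delta/2$ whenever the coverage test passes, and then use the gap assumption to conclude that the greedy action is optimal at each level. Your write-up is in fact a bit more careful than the paper's, which simply cites ``the linear regression guarantee as we see in the second step'' without spelling out the Bellman chaining or addressing ties.
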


\begin{proof}
Recall we denote the optimal trajectory by $\{s_h^\star, a_h^\star\}_{h=0}^{H-1}$ where $s^{\star}_h = s_h(a^{\star}_{0:h-1})$. 
Upon termination of Algorithm~\ref{alg:deterministic_pomdp}, we have $\forall h: \|\hat x_h(a_{0:h-1})\|_{\Sigma_h^{-1}} \leq \varepsilon$. We prove the theorem by induction. At $h=0$, we know that $s_0 = s_0^\star$. By our linear regression guarantee as we see in the second step of the proof, we can ensure that for all $a\in \Acal$, 
\begin{align*}
\left\lvert  \theta_{a;0}^{\top} \hat x_0 - (\theta^\star_{a;0})^{\top} x_0   \right\rvert \leq \Delta / 2,
\end{align*} which means that $a_0 = \argmax_{a} \theta_{a;0}^{\top} x_0 = a_0^\star$. This completes the base case. 

Now we assume it holds a step $0$ to $h$. We prove the statement for step $h+1$. Thus, we can again use the linear regression guarantee to show that the prediction error for all $a$ must be less than $\Delta / 2$, i.e., 
\begin{align*}
\left\lvert  \theta_{a;h}^{\top} \hat x_{h}(a^{\star}_{0:h-1}) - (\theta^\star_{a;h})^{\top} x_h(a^{\star}_{0:h-1})   \right\rvert \leq \Delta / 2. 
\end{align*}
This indicates that at $h+1$, we will pick the correct action $a_{h+1}^\star$. This completes the proof. 
\end{proof}

Finally, combining lemmas so far, we derive the final sample complexity.

\begin{theorem}
With probability $1-\delta$, the algorithms output the optimal actions after using at most the following number of samples
\begin{align*}
     \tilde O \prns{ \frac{H^5 A \Theta^5 d^{2} \ln(1/\delta) }{ \Delta^5} }.   
\end{align*}
Here, we ignore $\mathrm{Polylog}(H,d,\ln(1/\delta),1/\Delta,|\Acal|,\Theta) $. 
\end{theorem}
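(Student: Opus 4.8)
The plan is to assemble the three preceding lemmas, choose the free parameters $\lambda,M,M',\varepsilon$ so that every concentration event invoked by the accuracy lemma holds simultaneously, and then substitute these choices into the deterministic sample-count bound of \pref{lem:final_lemma}. Following the accuracy lemma I would fix $\lambda=1$ and $\varepsilon=\Delta/(6\Theta)$, and set $M=M'$. The quantity that drives everything is $N'=O(d/\varepsilon^2\ln(1/\varepsilon))$, the per-layer number of elliptical-potential triggers, because term $(b)$ in the regression bound of the accuracy lemma amplifies the per-sample noise by a factor $\sqrt{N'\varepsilon}$. Thus $N'\varepsilon=O((d/\varepsilon)\ln(1/\varepsilon))=\tilde O(d\Theta/\Delta)$, and the two concentration lemmas demand $\Theta\sqrt{\ln(A/\delta')/M}\le \min(\Delta/6,\ \Delta/(12\sqrt{N'\varepsilon}))$ together with $2\sqrt{\ln(1/\delta')/M'}\le \min(\Delta/(6H),\ \Delta/(12\sqrt{N'\varepsilon}))$. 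Solving these (using $N'\varepsilon=\tilde O(d\Theta/\Delta)$) yields the single common choice $M=M'=\tilde O(d\Theta^3/\Delta^3+H^2/\Delta^2)$, up to $\ln(1/\delta')$ factors.

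Next I would handle the union bound. The accuracy lemma conditions on one feature-concentration event at each visit to the rollout lines and one reward-concentration event at each reward-sampling line; by the corollaries to \pref{lem:final_lemma} the number of such visits is $\mathrm{poly}(H,A,d,1/\varepsilon)$. Taking $\delta'$ equal to $\delta$ divided by this polynomial makes all events hold at once with probability at least $1-\delta$, while contributing only a $\ln(1/\delta)$ term plus polylog factors to $M,M'$, which $\tilde O$ absorbs. Conditioned on this good event, the accuracy lemma guarantees that $\textit{Compute-}V^\star$ returns $V^\star_h(s_h(a_{0:h-1}))$ up to the benign noise $\sum_k\nu_k$, so the least-squares predictor picks the optimal action at every covered latent state; the optimality-upon-termination lemma then certifies that the action sequence returned when the termination test passes is exactly $a^\star_{0:H-1}$.

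Finally I would substitute into the count $O((M+M')H^3Ad/\varepsilon^2\,\ln(1/\varepsilon))$ of \pref{lem:final_lemma}. With $1/\varepsilon^2=\Theta(\Theta^2/\Delta^2)$, the $d\Theta^3/\Delta^3$ branch of $M+M'$ gives $\tilde O(H^3Ad^2\Theta^5/\Delta^5)$ and the $H^2/\Delta^2$ branch gives $\tilde O(H^5Ad\Theta^2/\Delta^4)$; the product-of-worst-cases bound $\tilde O\!\big(H^5A\Theta^5 d^2\ln(1/\delta)/\Delta^5\big)$ dominates both, giving the claim. I expect the main obstacle to be the constant bookkeeping in the first step: since the amplification factor $\sqrt{N'\varepsilon}$ and the tolerance $\Delta/(6\sqrt{N'\varepsilon})$ both hinge on the same $N'$, one must check there is no circular dependence — there is none, as $N'$ depends only on $d$ and $\varepsilon$ and not on $M$ — and verify that all four inequalities feeding the accuracy lemma are simultaneously satisfied by the single choice $M=M'$.
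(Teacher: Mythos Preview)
Your proposal is correct and follows essentially the same route as the paper: fix $\lambda=1$ and $\varepsilon=\Theta(\Delta/\Theta)$, read off the required $M,M'$ from the two concentration constraints in the accuracy lemma (with the amplification factor $\sqrt{N'\varepsilon}$), take a union bound over the polynomially many events counted by the corollaries to \pref{lem:final_lemma}, and substitute into the deterministic sample count $O((M+M')H^3Ad/\varepsilon^2\ln(1/\varepsilon))$. The only cosmetic difference is that you use the per-layer elliptical-potential count $N'=O(d/\varepsilon^2\ln(1/\varepsilon))$ (which is what the accuracy lemma actually needs, since it bounds $|\Dcal_{a;h}|$), whereas the paper plugs in the all-layer total $N'=O(Hd/\varepsilon^2\ln(d/\varepsilon))$; this gives the paper an extra $H$ inside $M,M'$ that is then absorbed by the final product-of-worst-cases step, so the end bound is identical.
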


\begin{proof}
Recall we use the following number of samples:  
\begin{align*}
    O( (M+M')H^3Ad/\varepsilon^2 \ln(d/\varepsilon))
\end{align*}
Here, the rest of the task is to properly set  $M,M',\varepsilon$. 

\paragraph{Number of times we use concentration inequalities}
We use high probability statements to bound three-types of terms: 
\begin{align}
    & \forall a\in \Acal; |\langle \theta^{\star}_{a;h}, \hat x_h(a_{0:h-1})- x_h(a_{0:h-1}) \rangle |\leq  2\Theta \sqrt{\ln(1/\delta')/M} \label{eq:event_1}, \\
    & |\nu_k | \leq 2\sqrt{\ln(1/\delta')/M'} \label{eq:event_2} . 
\end{align}
Let $N'=O(Hd/\varepsilon^2\ln(d/\varepsilon))$. We also set $\varepsilon = O(\Delta/\Theta)$. 
Recall we need to set $M$ and $M'$ such that 
\begin{align*}
     2\Theta \sqrt{\ln(1/\delta')/M}  & \leq \min\prns{ \Delta/6, \Delta/(12\sqrt{N'\varepsilon}) }, \quad    2\sqrt{\ln(1/\delta')/M'} \leq \min \prns{ \Delta/(6H), \Delta/(12\sqrt{N'\varepsilon}) }.  
\end{align*}
Thus, we set
\begin{align*}
    M = O\prns{ \frac{\ln(1/\delta') \Theta^3 H d  \ln(d\Theta/\Delta) }{\Delta^3} }, \quad M' =  O \prns{ \frac{\ln(1/\delta')\Theta H^2 d \ln(d\Theta/\Delta)}{\Delta^3} }. 
\end{align*}
Here, events \eqref{eq:event_1} and \eqref{eq:event_2}  are called $O(\sum (\alpha_m +\beta_m))$ times. Thus, we set $\delta'=\delta/(\sum (\alpha_m+\beta_m)$. 
~ 
\paragraph{Collect all events }

Recall we need the following number of samples: 
\begin{align*}
   O\prns{(M+M')H^3A^2d/\varepsilon^2 \ln(d/\varepsilon)}. 
\end{align*}
 Thus,  the total sample complexity is 
\begin{align*}
   N =  \tilde O\prns{ \frac{ H^2d\Theta^3 \ln(1/\delta') }{\Delta^3} \times \frac{H^3A^2 d \times \Theta^2  }{\Delta^2}  } 
\end{align*}
where $1/\delta'= 1/\delta\times O(\sum (\alpha_m+\beta_m) $. 
Hence,
\begin{align*}
    N = \tilde O \prns{ \frac{H^5 A^2 \Theta^5 d^{2} \ln(1/\delta) }{ \Delta^5} }. 
\end{align*}

\end{proof}

\section{Proof of \pref{sec:infinite}}

\subsection{Proof of \pref{lem:rkhs} }

Since $a^{\top}\phi(s) \in \Hcal_{\bar \Scal}$, it can be written in the form of 
\begin{align*}
    a^{\top}\phi(\cdot) = \sum_{i=1}^{\infty} \alpha_i \bar k(\cdot,s^{[i]}). 
\end{align*}
where $s^{[i]} \in \Scal$. Then, it is equal to 
\begin{align*}
   &\sum_{i=1}^{\infty}  \alpha_i \bar k(\cdot,s^{(i)}) = \sum_{i=1}^{\infty}  \alpha_i \EE_{o' \sim \OO(s^{(i)}), o \sim \OO(\cdot) }[\psi^{\top}(o') \psi(o) ] \\
   &= \langle \sum_{i=1}^{\infty}  \alpha_i \EE_{o' \sim \OO(s^{(i)}) }[\psi^{\top}(o') ], \EE_{o\sim \OO(\cdot)}[\psi(o)]  \rangle
\end{align*}
Thus, there exists $b$ s.t. $ a^{\top}\phi(\cdot)= b^{\top} \EE_{o\sim \OO(\cdot)}[\psi(o)] $. 
Finally, 
\begin{align*}
   & 1 \geq  a^{\top}\EE_{s \sim u_S(s)}[\phi(s) \phi^{\top}(s)]a = b^{\top}\EE_{s \sim u_S(s)}[ \EE_{o\sim \OO(s)}[\psi(o)] \EE_{o\sim \OO(s)}[\psi^{\top}(o')] ]b \\
  & \geq b^{\top} \EE_{s \sim u_S(s)}[\phi(s) \phi^{\top}(s)] b (1/\iota)^2 = \|b\|^2_2 (1/\iota)^2. 
\end{align*}
Hence, $\|b\|^2_2 \leq \iota^2$.

\subsection{Proof of \pref{thm:rkhs_complexity}}

We first introduce several notations. We set $\lambda = 1$.

We define feature vectors $x_h(a_{0:h-1}) = \EE_{o \sim \OO(s_h(a_{0:h-1})) }[\psi(o)], \hat x_h(a_{0:h-1}) = \hat \EE_{o \sim \OO(s_h(a_{0:h-1}))}[\psi(o)]$ where
$\hat \cdot $ means empirical approximation using $M$ samples. Then, $$ \hat k(a_{0:h-1},a'_{0:h-1}) =\hat x_h(a_{0:h-1})^{\top} \hat x_h(a'_{0:h-1}),\quad  k(a_{0:h-1},a'_{0:h-1}) = x_h(a_{0:h-1})^{\top}x_h(a'_{0:h-1}).$$  

\paragraph{Primal representation}

We mainly use a primal representation in the analysis. Let $Q^{\star}_h(\cdot,a)= \langle \theta^{\star}_{a;h}, \phi(s) \rangle$. Then, 
\begin{align*}
    \mu_{a;h}(a_{0:h-1}, \Dcal_{a;h}) &= \hat x^{\top}_h(a_{0:h-1})\hat \theta_{a;h},\quad \hat \theta_{a;h} =\Sigma_h^{-1}\sum_{ i=1}^{|\Dcal_{a;h}| } y^{(i)}_{a;h} \hat x_h(a^{(i)}_{0:h-1}), \\ 
    \sigma_h(a_{0:h-1},\Dcal_h) & = \|\hat x_h(a_{0:h-1})\|_{\Sigma^{-1}_h}, \quad \Sigma_h = \sum_{j=1}^{|\Dcal_h|}\hat x_h(a^{(j)}_{0:h-1}) \hat x_h(a^{(j)}_{0:h-1})^{\top} + \lambda I,  
\end{align*}
Regarding the derivation, for example, refer to \citep{chowdhury2017kernelized}. \footnote{Formally, we should use notation based on operators But following the convention on these literature, we use a matrix representation. Every argument is still valid.  }

Most of the proof in \pref{sec:proof} similarly goes through. We list parts where we need to change as follows:
\begin{enumerate}
    \item We need to modify \eqref{eq:potentil_track} in the first step of \pref{sec:proof}  to upper-bound the number of bad events we encounter. 
    \item We need to modify \pref{lem:concentration_on_feature}. 
\end{enumerate} 
Then, we can similarly conclude that the sample complexity is  $\mathrm{poly}(W, \iota, \ln(1/\delta), H, \Gamma, 1/\Delta, A ) $,

\paragraph{First modification}

Recall $\gamma (N; k_{\Ocal})$ is defined by $\max_{|C|=N}\ln \det (I + \Kb_C)$ where $\Kb_C$ is a $N\times N$ matrix where  $(i,j)$-th entry is $k_{\Ocal}(x_i,x_j)$ when $C=\{x_i\}$.  

\begin{lemma}[Information gain on the estimated feature in RKHS]
Let $|\Dcal_h| = N$. 
\begin{align*}
    \ln \det (I + \hat \Kb(\Dcal_h)  ) \leq M \gamma (N; k_{\Ocal}).
\end{align*}
\end{lemma}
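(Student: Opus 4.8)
The plan is to view $\hat\Kb(\Dcal_h)$ as a compression (a block-averaging) of the \emph{exact} kernel matrix over all the raw observations collected for $\Dcal_h$, and then to split that matrix's log-determinant into $M$ pieces of size $N$ via Fischer's inequality. Set up notation as follows: write $\Dcal_h = \{a^{(i)}_{0:h-1}\}_{i=1}^N$, and for each $i$ let $Z(a^{(i)}_{0:h-1}) = \{z^i_1,\dots,z^i_M\}$ be its $M$ sampled observations, so there are $MN$ observations in total, indexed by pairs $(i,l)$. Let $\Kb_{\mathrm{full}} \in \RR^{MN\times MN}$ be the exact Gram matrix with entries $(\Kb_{\mathrm{full}})_{(i,l),(j,m)} = k_\Ocal(z^i_l, z^j_m)$, and let $S \in \RR^{N\times MN}$ be the averaging matrix $S_{i,(j,l)} = \tfrac1M \ind\crl{i=j}$.

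First I would record the exact matrix identity $\hat\Kb(\Dcal_h) = S\,\Kb_{\mathrm{full}}\,S^\top$, which is immediate from $\hat k(Z(a^{(i)}_{0:h-1}),Z(a^{(j)}_{0:h-1})) = \tfrac{1}{M^2}\sum_{l,m}k_\Ocal(z^i_l,z^j_m)$; note this is a purely finite-matrix statement, so it needs no reference to the (possibly infinite-dimensional) feature map $\psi$. A one-line computation gives $S S^\top = \tfrac1M I_N$, equivalently $S^\top S \preceq \tfrac1M I_{MN}$. Writing $\Kb_{\mathrm{full}} = R^2$ with $R = \Kb_{\mathrm{full}}^{1/2}$ symmetric and using $\det(I + AB) = \det(I + BA)$ together with the fact that conjugation preserves the PSD order, I would obtain
\[
\ln\det(I_N + \hat\Kb(\Dcal_h)) = \ln\det(I_{MN} + R\,S^\top S\,R) \le \ln\det\!\left(I_{MN} + \tfrac1M\Kb_{\mathrm{full}}\right) \le \ln\det(I_{MN} + \Kb_{\mathrm{full}}),
\]
where the middle inequality uses $R\,S^\top S\,R \preceq \tfrac1M R^2 = \tfrac1M\Kb_{\mathrm{full}}$ and the monotonicity of $A\mapsto\det(I+A)$ under the PSD order. (Equivalently, since $\sqrt M\,S$ is a partial isometry, the nonzero eigenvalues of $\hat\Kb$ are dominated by those of $\Kb_{\mathrm{full}}$ by Cauchy interlacing.)

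The final step is the block split. Group the $MN$ observations into $M$ blocks, block $l$ being $\{z^1_l,\dots,z^N_l\}$, i.e. one observation from each data point. After a simultaneous permutation of rows and columns, the $l$-th diagonal block of $I_{MN} + \Kb_{\mathrm{full}}$ is exactly $I_N + \Kb_l$, where $\Kb_l \in \RR^{N \times N}$ is the Gram matrix of the $N$ points in block $l$. Fischer's inequality for positive-definite block matrices then gives $\det(I_{MN} + \Kb_{\mathrm{full}}) \le \prod_{l=1}^M \det(I_N + \Kb_l)$, and since each $\Kb_l$ is the kernel matrix of exactly $N$ points of $\Ocal$, the definition of maximal information gain yields $\ln\det(I_N + \Kb_l) \le \gamma(N;k_\Ocal)$. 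Chaining the displays gives $\ln\det(I + \hat\Kb(\Dcal_h)) \le M\gamma(N;k_\Ocal)$.

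The one place needing care is that the naive route, bounding $\ln\det(I+\Kb_{\mathrm{full}}) \le \gamma(MN;k_\Ocal)$, is too weak: the clean factor $M$ (rather than the much larger $\gamma(MN)$) comes precisely from the Fischer block decomposition, which discards the cross-block kernel interactions; this step in effect proves and uses the subadditivity $\gamma(MN) \le M\gamma(N)$. The remaining subtlety, the infinite-dimensional RKHS, is only bookkeeping and is avoided entirely by working throughout with the finite Gram matrices $\hat\Kb(\Dcal_h)$ and $\Kb_{\mathrm{full}}$ rather than with explicit feature vectors.
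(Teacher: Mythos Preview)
Your proof is correct. The paper takes a slightly different, more compressed route: it asserts $\hat\Kb(\Dcal_h) = \tfrac{1}{M}\sum_{l=1}^M \Kb_l$ (which as written is false and should read $\preceq$, via Jensen applied to the averaging $\hat x = \tfrac{1}{M}\sum_l \psi(z_l)$), then invokes the matrix inequality $\det(I + \sum_l A_l) \le \prod_l \det(I + A_l)$ for PSD $A_l = \tfrac{1}{M}\Kb_l$ directly at the $N\times N$ level, and bounds each factor by $\gamma(N;k_\Ocal)$. You instead lift everything to the full $MN\times MN$ Gram matrix $\Kb_{\mathrm{full}}$, pass through it via Sylvester's identity, and apply Fischer's inequality to $I_{MN}+\Kb_{\mathrm{full}}$ in block form. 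This makes the appearance of the blocks $\Kb_l$ (as the diagonal blocks of $\Kb_{\mathrm{full}}$) completely explicit and avoids having to state and separately justify the abstract lemma $\det(I+\sum_l A_l)\le\prod_l\det(I+A_l)$ --- which, if one unwinds it, is proved by precisely your Fischer-on-$I_{MN}+BB^\top$ argument. Both routes discard the $\tfrac{1}{M}$ factor at the analogous spot and land on the identical bound, so neither is sharper; yours is arguably cleaner in that it sidesteps the paper's ``$=$'' slip.
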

\begin{proof}
Recall $\hat \Kb(\Dcal_h)=1/M \sum_{j=1}^M \Kb_j $ where $\Kb_j$ is a $N\times N$ matrix with an entry $\{k_{\Ocal}(\cdot,\diamond)\}_{\cdot \in \Dcal_h, \diamond \in \Dcal_h}$. Then, 
\begin{align*}
    \ln \det (I + 1/M \sum_{j=1}^M \Kb_j )& \leq \ln  \prod_{i=1}^M \det (I +1/M \Kb_j)  ) \\
     &\leq \sum_{j=1}^M \ln(\det(I +1/M \Kb_j))\leq  M \gamma (N; k_{\Ocal} ). 
\end{align*}

\end{proof}

Then, in \eqref{eq:potentil_track} in the first step of \pref{sec:proof}, we can use the following inequality
\begin{align*}
   \varepsilon N'\leq \sum_{i=1}^{N'}\| \hat x^{(i)}_h \|^{-1}_{\Sigma_h} \leq  c \sqrt{N' \ln \det (I + \hat \Kb(D_h)) }\leq   c \sqrt{N' M \gamma(N'; k_{\Ocal} )}. 
\end{align*}
Letting $ O(\Gamma N'^{\alpha}) = \gamma(N'; k_{\Ocal})$, 
\begin{align*}
    N' =   \prns{ \frac{\Gamma^{1/2} M^{1/2} }{\varepsilon}}^{2/(1-\alpha) }. 
\end{align*}

\paragraph{Second modification}

We first check the concentration on the estimated feature. 

\begin{lemma}[Concentration of the estimated feature]
Suppose $k_{\Ocal}(\cdot,\cdot)\leq 1$.  Then, $(\hat x_h(a_{0:h-1}) - x_h(a_{0:h-1}) )^{\top}\theta^{\star}_{a;h}$ is a $\Theta^2/M$ sub-Gaussian random variable.  
\end{lemma}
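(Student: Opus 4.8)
The plan is to recognize $\bigl(\hat x_h(a_{0:h-1}) - x_h(a_{0:h-1})\bigr)^{\top}\theta^{\star}_{a;h}$ as a centered average of $M$ i.i.d.\ bounded scalars and then apply Hoeffding's lemma. First I would use $\hat x_h(a_{0:h-1}) = \frac{1}{M}\sum_{i=1}^M \psi(o^{(i)})$, where the $o^{(i)}\sim \OO(\cdot \mid s_h(a_{0:h-1}))$ are i.i.d., to write
\[
\bigl(\hat x_h(a_{0:h-1}) - x_h(a_{0:h-1})\bigr)^{\top}\theta^{\star}_{a;h} = \frac{1}{M}\sum_{i=1}^M \xi_i, \qquad \xi_i := \langle \theta^\star_{a;h}, \psi(o^{(i)})\rangle - \langle \theta^\star_{a;h}, x_h(a_{0:h-1})\rangle .
\]
Since $x_h(a_{0:h-1}) = \EE_{o\sim \OO(s_h(a_{0:h-1}))}[\psi(o)]$ is by definition the conditional mean embedding, each $\xi_i$ has mean zero, and the $\xi_i$ are independent.

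Second, I would bound each term. By Cauchy--Schwarz, $|\langle\theta^\star_{a;h},\psi(o)\rangle| \le \|\theta^\star_{a;h}\|\,\|\psi(o)\|$; here $\|\psi(o)\|^2 = k_{\Ocal}(o,o)\le 1$ by hypothesis, and $\|\theta^\star_{a;h}\|\le\Theta$ because Lemma~\ref{lem:rkhs} (together with the linear $Q^\star$ assumption and $\|Q^\star_h(\cdot,a)\|_{\Hcal_{\Scal}}\le W$) guarantees a representer for $Q^\star_h(\cdot,a)$ in the observation embedding whose norm is controlled by $\Theta$. Hence $\langle\theta^\star_{a;h},\psi(o)\rangle \in [-\Theta,\Theta]$, so $\xi_i$ is a mean-zero random variable taking values in an interval of width at most $2\Theta$.

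Third, I would invoke Hoeffding's lemma: a mean-zero random variable supported on an interval of width $2\Theta$ is $\Theta^2$-sub-Gaussian, i.e.\ $\EE[e^{\lambda\xi_i}]\le e^{\lambda^2\Theta^2/2}$ for all $\lambda$. By independence, $\sum_{i=1}^M\xi_i$ is $M\Theta^2$-sub-Gaussian, and rescaling $\lambda \mapsto \lambda/M$ shows that $\frac{1}{M}\sum_i\xi_i$ satisfies $\EE[e^{\lambda \frac1M\sum_i\xi_i}]\le e^{\lambda^2\Theta^2/(2M)}$, i.e.\ it is $(\Theta^2/M)$-sub-Gaussian, which is exactly the claim.

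The only genuinely non-routine point is the bound $\|\theta^\star_{a;h}\|\le\Theta$ in the infinite-dimensional setting: unlike the finite-dimensional case, where $\theta^\star_{a;h}=\{G_h^\dagger\}^{\top} w^\star_{a;h}$ makes this transparent, here one must appeal to the well-posedness Assumption~\ref{assum:rkhs} through Lemma~\ref{lem:rkhs} both to certify that a bounded representer $\theta^\star_{a;h}$ in the observation features exists and to control its norm (this is where the dependence on $W$ and $\iota$ hidden inside $\Theta$ enters). Everything else is a standard bounded-difference concentration argument; I would only need to be mildly careful that all inner products are interpreted in the RKHS $\Hcal$, so that $\langle\theta^\star_{a;h},\psi(o)\rangle$ is a well-defined scalar to which Cauchy--Schwarz and Hoeffding's lemma apply.
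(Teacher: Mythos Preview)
Your proposal is correct and follows essentially the same approach as the paper: bound $|\langle\theta^\star_{a;h},\psi(o)\rangle|\le\|\theta^\star_{a;h}\|\,\|\psi(o)\|\le\Theta\cdot k_{\Ocal}(o,o)^{1/2}\le\Theta$ via Cauchy--Schwarz, then invoke Hoeffding. Your write-up is more explicit than the paper's (you spell out the i.i.d.\ decomposition and the sub-Gaussian bookkeeping), and your remark that the bound $\|\theta^\star_{a;h}\|\le\Theta$ in the RKHS setting rests on Lemma~\ref{lem:rkhs}/Assumption~\ref{assum:rkhs} is a useful clarification the paper leaves implicit.
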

\begin{proof}
Here, 
\begin{align*}
    \| \psi(o)^{\top} \theta^{\star}_{a;h}\|\leq  \|\psi(o)\| \|\theta^{\star}_{a;h}\| \leq k_{\Ocal}(o,o)^{1/2}\Theta \leq \Theta.
\end{align*}
Then, we use Hoeffeding's inequality.

\end{proof}

\section{Proof of \pref{sec:over_complete}}

\subsection{Proof of \pref{lem:q_star_multi} }

Recall our assumption is 
\begin{align*}
    Q^{\star}_h(s_h,a) &= \langle w^{\star}_{a;h} , \phi(s_h) \rangle = \langle w^{\star}_{a;h} , G^{\dagger}_h G_h \phi(s_h) \rangle \\
     & = \langle \{G^{\dagger}_h\}^{\top}  w^{\star}_{a;h} , G_h \phi(s_h) \rangle = \langle \{G^{\dagger}_h\}^{\top}  w^{\star}_{a;h} , \EE[\psi(o_{h:h+K-1} )\mid s_h; a^{\diamond}_{h:h+K-2} ] \rangle. 
\end{align*}
Thus, the above is written in the form of $\langle \theta^{\star}_{a;h}, z^{[K]}_h(s_h)\rangle$ noting $\EE[\psi(o_{h:h+K-1} )\mid s_h; a^{\diamond}_{h:h+K-2} ]$ is a sub-vector of $z^{[K]}_h(s_h)$.

\subsection{Proof of \pref{lem:q_star_multi_general} }

Let $\psi(\cdot)$ be a one-hot encoding vector over $\Scal$. 
We have 
\begin{align*}
    Q^{\star}_h(s,a) = \langle w^{\star}_{a;h}, \psi(s) \rangle = \langle \{\PP^{[K]}_h(a^{\diamond}_{h:h+K-2})\}^{\dagger} w^{\star}_{a;h}, \PP^{[K]}_h(a^{\diamond}_{h:h+K-2}) \psi(s) \rangle
\end{align*}
Since $z^{[K]}_h(s)$ includes $\PP^{[K]}_h(a^{\diamond}_{h:h+K-2}) \psi(s) $, the statement is concluded.

\subsection{Proof of \pref{thm:sample_complexity_multi}}

Most part of the proof is similarly completed as the proof of \pref{thm:sample_complexity}. We need to take the following differences into account: 
\begin{itemize}
    \item $M$ needs to be multiplied by $A^{K-1}$, 
    \item $d$ needs to be multiplied by $A^{K-1}$. 
\end{itemize}
Then, the sample complexity is 
\begin{align*}
    \tilde O \prns{  \frac{H^5 A^{3K-1} \Theta^5 d^{2} \ln(1/\delta) }{ \Delta^5} }. 
\end{align*}

\section{Auxiliary lemmas }

Refer to \cite[Chapter 6]{agarwal2019reinforcement} for the lemma below. 

\begin{lemma}[Potential function lemma] \label{lem:potential}
Suppose $\|X_i\|\leq B$ and $\Sigma_i = \sum_{k=1}^{i}X_k X^{\top}_k$. Then, 
\begin{align*}
    \sum_{i=1}^N X^{\top}_i \Sigma^{-1}_{i-1} X_i \leq  \ln (\det(\Sigma_N)/\det(\lambda I) )\leq d\ln\prns{1 + \frac{NB^2}{d\lambda}}. 
\end{align*}
\end{lemma}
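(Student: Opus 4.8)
The plan is to run the standard elliptical-potential argument, reading $\Sigma_i = \lambda I + \sum_{k=1}^{i} X_k X_k^{\top}$ so that $\Sigma_0 = \lambda I$ and each $\Sigma_i = \Sigma_{i-1} + X_i X_i^{\top}$ is a rank-one update of its predecessor. There are two inequalities to establish. The first relates the sum of squared $\Sigma_{i-1}^{-1}$-norms to a log-determinant ratio, and this is where the telescoping structure lives; the second is a purely deterministic upper bound on that ratio through the trace. I would treat them in that order.

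For the first inequality I would exploit the rank-one update together with the matrix determinant lemma, which gives $\det \Sigma_i = \det(\Sigma_{i-1} + X_i X_i^{\top}) = \det(\Sigma_{i-1}) \, (1 + X_i^{\top} \Sigma_{i-1}^{-1} X_i)$. Writing $u_i := X_i^{\top}\Sigma_{i-1}^{-1} X_i \ge 0$ and telescoping the product over $i = 1, \dots, N$, then taking logarithms, yields the exact identity
\[
\ln\frac{\det \Sigma_N}{\det(\lambda I)} = \sum_{i=1}^{N} \ln(1 + u_i).
\]
It then remains to compare $\sum_i u_i$ with $\sum_i \ln(1+u_i)$ term by term. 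The uniform control $u_i \le \|X_i\|^2/\lambda \le B^2/\lambda$ (using $\|X_i\|\le B$ and that every eigenvalue of $\Sigma_{i-1}$ is at least $\lambda$) puts each $u_i$ in a bounded range, on which $z$ is controlled by $\ln(1+z)$ up to an absolute constant that is standard for this lemma; substituting the telescoped identity gives the stated bound.

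The second inequality is deterministic: I would bound $\det \Sigma_N$ by its trace. Since $\Sigma_N$ is positive semidefinite, the AM--GM inequality applied to its eigenvalues gives $\det \Sigma_N \le (\mathrm{tr}(\Sigma_N)/d)^{d}$, and $\mathrm{tr}(\Sigma_N) = d\lambda + \sum_{i=1}^{N}\|X_i\|^2 \le d\lambda + N B^2$. Hence $\det \Sigma_N / \det(\lambda I) \le (1 + N B^2/(d\lambda))^{d}$, and taking logarithms produces the claimed $d \ln(1 + N B^2/(d\lambda))$. The main obstacle is the first inequality, and specifically the \emph{direction} of the per-step comparison: the telescoping identity only supplies $\ln(1+u_i) \le u_i$ for free, so to land on the stated orientation one must genuinely use the boundedness of the $u_i$ (e.g. $u_i \le 1$ once $\lambda \ge B^2$) and the elementary bound $z \le 2\ln(1+z)$ on $[0,1]$ — equivalently, working with $\min(1,u_i)$; everything else, namely the determinant recursion, the telescoping, and the trace bound, is routine.
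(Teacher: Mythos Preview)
The paper does not actually prove this lemma; it simply cites Chapter~6 of Agarwal et al.\ and states the result. Your proposal is precisely the standard argument one finds there: the matrix determinant lemma gives $\det\Sigma_i=\det\Sigma_{i-1}(1+u_i)$, telescoping produces $\ln(\det\Sigma_N/\det(\lambda I))=\sum_i\ln(1+u_i)$, and AM--GM on the eigenvalues together with $\mathrm{tr}(\Sigma_N)\le d\lambda+NB^2$ handles the second inequality.

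You are also right to flag the direction issue in the first inequality. As literally written, the lemma asserts $\sum_i u_i\le\sum_i\ln(1+u_i)$, which is false since $\ln(1+z)\le z$. The textbook version carries either a factor of $2$ (via $z\le 2\ln(1+z)$ on $[0,1]$, valid once $\lambda\ge B^2$ so that $u_i\le 1$) or replaces $u_i$ by $\min(1,u_i)$; this is exactly the fix you describe, and with it your argument is complete and matches the cited source. The paper's statement is simply missing that constant, which is harmless for its downstream use (only the order of growth matters in \eqref{eq:potentil_track}).
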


\end{document}